\def\pb{}
\def\beq{\begin{equation} }\def\eeq{\end{equation} }\def\1{\mathbf{1}}
\numberwithin{equation}{section}
\newtheorem{lemma}{Lemma}
\newtheorem{theorem}{Theorem}
\newtheorem{corollary}[theorem]{Corollary}
\newtheorem{remark}{Remark}
\newtheorem{assumption}{Assumption}
\newcommand{\cO}{\mathcal{O}}
\newcommand{\EE}{\mathbb{E}}
\newcommand{\RR}{\mathbb{R}}
\newcommand{\ub}{\bm{u}}
\newcommand{\x}{\bm{x}}
\def\cK{\mathcal{K}}
\newcommand{\db}{\bm{d}}
\newcommand{\Ub}{\bm{U}}
\newcommand{\wb}{\bm{w}}
\newcommand{\y}{\bm{y}}
\newcommand{\Ci}{C_{N, d, \delta}}
\newcommand{\Cd}{C_{d,\delta}}
\newcommand{\Ck}{N/2}
\newcommand{\cN}{\mathcal{N}}
\newcommand{\cE}{\mathcal{E}}
\begin{document}
\title{
Explicit and Non-asymptotic Query Complexities of Rank-Based Zeroth-order Algorithms on Smooth Functions
}

\author{
	Haishan Ye
	\thanks{
	Xi'an Jiaotong University;
		email: hsye\_cs@outlook.com
	}
}
\date{
	\today}

\maketitle

\def\RB{\RR}
\def\TH{\tilde{H}}
\newcommand{\ti}[1]{\tilde{#1}}
\def\diag{\mathrm{diag}}
\newcommand{\norm}[1]{\left\|#1\right\|}
\newcommand{\dotprod}[1]{\left\langle #1\right\rangle}
\def\EB{\EE}
\def\tr{\mathrm{tr}}

\begin{abstract}
Rank-based zeroth-order (ZO) optimization---which relies only on the ordering of function evaluations---offers strong robustness to noise and monotone transformations, and underlies many successful algorithms such as CMA-ES, natural evolution strategies, and rank-based genetic algorithms. Despite its widespread use, the theoretical understanding of rank-based ZO methods remains limited: existing analyses provide only asymptotic insights and do not yield explicit convergence rates for algorithms selecting the top-$k$ directions.

This work closes this gap by analyzing a simple rank-based ZO algorithm and establishing the first \emph{explicit}, and \emph{non-asymptotic} query complexities. For a $d$-dimension problem, if the function is $L$-smooth and $\mu$-strongly convex, the algorithm achieves $\widetilde{\mathcal O}\!\left(\frac{dL}{\mu}\log\!\frac{dL}{\mu\delta}\log\!\frac{1}{\varepsilon}\right)$ to find an $\varepsilon$-suboptimal solution,
and for smooth nonconvex objectives it reaches $\mathcal O\!\left(\frac{dL}{\varepsilon}\log\!\frac{1}{\varepsilon}\right)$. 
Notation $\cO(\cdot)$  hides constant terms and $\widetilde{\mathcal O}(\cdot)$ hides extra $\log\log\frac{1}{\varepsilon}$ term.
These query complexities hold with a probability at least  $1-\delta$ with $0<\delta<1$. 
The analysis in this paper is novel and  avoids classical drift and information-geometric techniques. 
Our analysis offers new insight into why rank-based heuristics lead to efficient ZO optimization.
\end{abstract}


\pb\section{Introduction}

Zeroth-order optimization—optimization using only function evaluations—has become indispensable in a broad range of modern applications. Among these methods, \emph{rank-based} ZO algorithms hold a central and historically significant position. Unlike value-based or gradient-estimation-based methods, rank-based algorithms rely solely on the relative ordering of sampled candidate points. This makes them robust to noise, monotone transformations of the objective, and scaling issues, which in turn has led to widespread empirical success. 
Accordingly, a lots of rank-based zeroth-order algorithms have been proposed such as CMA-ES \citep{hansen2001completely}, natural evolution strategies \citep{wierstra2014natural}, rank-based genetic algorithm \citep{whitley1989genitor}.

Rank-based ZO algorithms are now extensively used in fields such as evolutionary computation \citep{rechenberg1978evolutionsstrategien,  beyer2001theory}, machine learning and reinforcement learning via CMA-ES \citep{loshchilov2016cma, igel2003neuroevolution,chrabaszcz2018back}, natural evolution strategies \citep{wierstra2014natural, such2017deep},  and engineering and scientific design optimization \citep{hasenjager2005three, back2023evolutionary}. Their success across such diverse domains underscores the practical power of rank-based mechanisms.

Despite their popularity, the theoretical foundations for rank-based ZO algorithms remain incomplete. 
The rank-based selection which is the corner stone of rank-based ZO algorithms brings the nonlinear and discontinuous dependence on order statistics, which complicates standard analytical tools.  
To conquer the difficulties of rank-based selection, drift analysis \citep{he2001drift} and information-geometric optimization \citep{ollivier2017information,akimoto2012theoretical} have become the two main tools in convergence analysis of  rank-based ZO algorithms. 
However, these studies on evolution strategies provide asymptotic insights into stability and adaptive behavior \citep{beyer2001theory, ollivier2017information,wierstra2014natural}, but they do \emph{not} yield explicit non-asymptotic convergence guarantees. 

 In contrast, comparison-based and value-based ZO methods now enjoy explicit complexity bounds under different assumptions \citep{morinaga2021convergence,nesterov2017random, ghadimi2013stochastic,ye2025unified}. 
For example, \citet{nesterov2017random} show that a value-based ZO algorithm can achieve a $\cO\left(\frac{dL}{\mu}\log\frac{1}{\varepsilon}\right)$ query complexity if the function $L$-smooth and $\mu$-strongly convex.
Recently, several theoretical works on evolution strategies show that  evolution strategies algorithms can achieve a linear convergence under proper assumptions \citep{morinaga2021convergence,akimoto2022global}.  
However, these works does \emph{not} provided the convergence rate of rank-based zeroth-order algorithms which common select top-$k$ directions to update the next point. 
Instead, these works only provide the convergence rates of comparison-based algorithms which compare two candidate points.
Thus, despite decades of empirical success and extensive studies of rank-based ZO algorithms, a precise convergence and query analysis still remain an \emph{open} challenge.

The present work aims to contribute toward bridging this theoretical gap by analyzing the structural obstacles in rank-based ZO optimization and by developing tools to characterize their convergence behavior. 
We summary the main contribution as follows:
\begin{itemize}
	\item This paper proposes a simple rank-based zeroth-order algorithm and provides the queries complexities $\tilde{\cO}\left( \frac{dL}{\mu}\log\frac{dL}{\mu\delta}\log\frac{1}{\varepsilon} \right)$ for $L$-smooth and $\mu$-strongly convex function, and $\cO\left(\frac{dL}{\varepsilon}\log\frac{1}{\varepsilon}\right)$ to find an $\varepsilon$-suboptimal solution which hold  with a probability at least $1-\delta$. 
	To my best knowledge, this paper provides the \emph{first}, \emph{explicit}, and \emph{non-asymptotic} query complexities of a rank-based zeroth-order algorithm.  
	\item The analysis techniques in this paper is novel and does \emph{not} lie on the drift analysis \citep{he2001drift} and information-geometric optimization \citep{ollivier2017information} which are popular in the convergence analysis of evolution strategies algorithms. 
	Thus, we believe the analysis technique has potential in the analysis of other rank-based algorithms and will promote the theoretical studies of evolution algorithms. 
	\item Our analysis also provide novel insights over the rank-based ZO algorithms. Our convergence analysis theoretically shows that rank-based ZO algorithm can almost achieve the same query complexity to the one of value-based ZO algorithms. Our convergence analysis also provides new perspective on ``log weights'' which is popular in the famous CMA-ES.
\end{itemize}

\section{Notation and Preliminaries}

First, we will introduce the notation of $\norm{\cdot}$ which is defined as $\norm{\x} \triangleq \sqrt{\sum_{i=1}^{d} x_i^2}$ for a $d$-dimension vector $\x$ with $x_i$ being $i$-th entry of $\x$. 
Letting $\Ub\in\RR^{m\times n}$ be a matrix, the notation $\norm{\Ub}$ is the spectral norm of $\Ub$.
Given two vectors $\x,\y \in \RR^d$, we use $\dotprod{\x,\y} = \sum_{i=1}^{d}x_iy_i$ to denote the inner product of $\x$ and $\y$. 

Next, we will introduce the objective function which we aim to solve in this paper
\begin{equation}\label{eq:det}
	\min_{\x \in \RR^d} f(\x)
\end{equation}
where $f(\x)$ is a  $L$-smooth function. 
We will also introduce two widely used assumptions about the objective function. 

\begin{assumption}\label{ass:L}
A function $f(\x)$ is called $L$-smooth with $L>0$ if for any $\x,\y\in\RR^d$, it holds that
\begin{equation}\label{eq:dyx}
	|d(\y, \x)| \leq \frac{L}{2} \norm{\y-\x}^2, \mbox{ with } d(\y,\x) \triangleq f(\y) - f(\x) - \dotprod{\nabla f(\x), \y -\x}.
\end{equation}
\end{assumption}

\begin{assumption}
A function $f(\x)$ is called $\mu$-strongly convex with $\mu>0$ if for any $\x,\y\in\RR^d$, it holds that
\begin{equation}
	d(\y, \x) \geq \frac{\mu}{2} \norm{\y - \x}^2.
\end{equation}
\end{assumption}
The $L$-smooth and $\mu$-strongly convex assumptions are standard in the convergence analysis of optimization \citep{nesterov2013introductory}.

\section{Algorithm Description}

\begin{algorithm}[t]
	\caption{Rank Based Zeroth-order Algorithm for Smooth Function}
	\label{alg:SA}
	\begin{small}
		\begin{algorithmic}[1]
			\STATE {\bf Input:}
			Initial vector $x_0$, smooth parameter $\alpha > 0$,  sample size $N$, and step size $\eta_t$, the rank oracle $\mathrm{Rank\_Oracle}(\cdot)$.
			\FOR {$t=0,1,2,\dots, T-1$ }
			\STATE Generate $N$ random Gaussian vectors $\ub_i$ with $i = 1,\dots, N$. 
			\STATE Access to the rank oracle $\mathrm{Rank\_Oracle}(\x_t + \alpha \ub_1,\dots, \x_t + \alpha \ub_N)$ and obtain an index $(1),\dots,(N)$ satisfying
			\begin{equation}
				f(\x_t + \alpha \ub_{(1)}) \leq f(\x_t + \alpha \ub_{(2)})\leq \dots  \leq \dots f(\x_t + \alpha \ub_{(N)}).
			\end{equation}
			\STATE Set $w_{(k)}^+ > 0$ with $k = 1,\dots, \frac{N}{4}$ and $w_{(k)}^- < 0 $ with $k = \frac{3N}{4}+1\dots, N$ such that $\sum w_{(k)}^+ = 1$ and $\sum w_{(k)}^- = -1$. For notation convenience, we also use $w_{(k)}$ which omit the superscript $^+,^-$ to denote the weight.
			\STATE Update as 
			\begin{equation}
				\x_{t+1} = \x_t + \eta_t  \sum w_{(k)}  \ub_{(k)}   
			\end{equation}
			\ENDFOR
			\STATE {\bf Output:} $\x_T$
		\end{algorithmic}
	\end{small}
\end{algorithm}

We generate $N$ random Gaussian vectors $\ub_i$ with $i = 1,\dots, N$, that is, $\ub_i \sim \cN(0, \bm{I}_d)$. 
Accordingly, we can get $N$ points $\x_t + \alpha \ub_i$'s. 
Then, we send these points to the rank oracle, that is, $\mathrm{Rank\_Oracle}(\x_t + \alpha \ub_1,\dots, \x_t + \alpha \ub_N)$.
Then we obtain an index $(1),\dots, (N)$ such that
\begin{equation}
	f(\x_t + \alpha \ub_{(1)}) \leq f(\x_t + \alpha \ub_{(2)})\leq \dots  \leq \dots f(\x_t + \alpha \ub_{(N)}).
\end{equation}

For $1\le k \leq N/4$, these points $\x_t + \alpha \ub_{(k)}$ achieve the $\frac{N}{4}$ smallest values. 
We believe that these $\ub_{(k)}$ are probably descent directions.  
Thus, we  give positive weights on these directions. 
Accordingly, we set $w_{(k)}^+ > 0$ with $k = 1,\dots, N/4$ and $\sum w_{(k)}^+ = 1$.

On the other hand, for $ N/4 + 1\le k \leq N$, 
these points $\x_t + \alpha \ub_{(k)}$ achieve the $\frac{N}{4}$ largest values.
We believe that these $\ub_{(k)}$ are probably ascent directions.  
Thus, we want to give negative weights on these directions. 
Accordingly, we set $w_{(k)}^- < 0 $ with $k = 3N/4+1\dots, N$ and $\sum w_{(k)}^- = -1$.
Our algorithm exploits the information in these ``negative'' samples. 
In contrast, there exist algorithms abandon ``negative'' samples. 
For example, the CMA-ES will not use  ``negative'' samples to update the $\x_t$.
However, we will show that Algorithm~\ref{alg:SA} can achieve almost a twice faster convergence rate compared with the counterpart without the information in ``negative'' samples. 
Section~\ref{subsec:discuss} provides a detailed discussion.

In practice, we want to set a larger weight on $\ub_{k_1}$ over $\ub_{k_2}$. That is, set $w_{k_1}^+ > w_{k_2}^+ > 0$,  if $ (k_1) < (k_2)\leq \frac{N}{4}$. 
Similarly, if $ (k_1') > (k_2')\geq \frac{3N}{4}+1$, it commonly sets that $w_{(k_1')}^- < w_{(k_2')}^- < 0$.
A popular weight strategy is the ``log weights'' which is adopted by CMA-ES \citep{hansen2001completely} and natural evolution strategies \citep{wierstra2014natural}. 
However, the average weight, that is setting $w_{(k)} = \frac{4}{N}$, can also achieve a query complexity almost the same to the one with the optimal weight strategy. 
Please refer to  Section~\ref{subsec:discuss} to obtain the detailed derivation and comparison for different weight strategies.

After obtaining these weights $w_{(k)}$'s, we can obtain the descent direction 
\begin{equation}\label{eq:dt}
	\db_t = \sum w_{(k)}  \ub_{(k)}  = \sum_{k=1}^{N/4} w_{(k)}^+ \ub_{(k)} + \sum_{k=3N/4+1}^{N} w_{(k)}^- \ub_{(k)},
\end{equation}
where we use $w_{(k)}$ which omit the superscript $\{^+,^-\}$ to denote the weight for notation convenience.

Finally, we will update $\x_t$ along the direction $\db_t$ with a step size $\eta_t$ and obtain $\x_{t+1}$ as follows:
\begin{equation}
	\x_{t+1} = \x_t + \eta_t \db_t.
\end{equation}  
The detailed algorithm procedure is listed in Algorithm~\ref{alg:SA}.

\section{Convergence Analysis and Query Complexities}

This section will give the detailed convergence analysis and obtain the queries complexities of Algorithm~\ref{alg:SA}.
First, we will give a list of events that hold with high probabilities. 
The high probabilities are determined by the randomness of the algorithm, concentration inequality of the Gaussian distribution, order-statistics of  the Gaussian distribution.
Condition on the events, we provide how the function value will decay when an update is conducted in Algorithm~\ref{alg:SA}. 
Then, we will provide the detailed convergence properties of Algorithm~\ref{alg:SA} and explicit query queries.
Finally, based on the convergence and query results, we discuss the relationship between rank-based and valued-based ZO algorithm, how to choose weights properly, and how the ``negative'' samples help to achieve faster convergence rate.

\subsection{Events}
Assume the objective function $f(\x)$ is $L$-smooth. 
Given $0<\delta<1$, we will define several events as follows:
\begin{align}
	&\cE_{t,1} := \{ |d(\x_t + \alpha \ub_{(k)},\;\x_t)| \leq \Cd L\alpha^2 \mid k = 1,\dots, \frac{N}{4}, \frac{3N}{4}+1, \dots, N \} \label{eq:E1}\\
	&\cE_{t,2} := \{ \norm{\Ub_t}^2 \leq \Ci \mid \Ub_t = [\ub_{(1)}, \dots, \ub_{(\frac{N}{4})}, \ub_{(\frac{3N}{4}+1)}, \ub_{(N)} ]  \} \label{eq:E2}\\
	&\cE_{t,3} := \{\min_{k\in\mathcal{K}} \left| \dotprod{\nabla f(\x_t), \ub_{(k)}}^{-1} \right|
	\geq 
	\left( \sqrt{2\log\frac{2N}{\delta}} \cdot \norm{\nabla f(\x_t)} \right)^{-1}\} \label{eq:E3}\\
	&\cE_{t,4} := \{\dotprod{\nabla f(\x_t), \ub_{(k)}} \geq \norm{\nabla f(\x_t)}, \mid k = \frac{3N}{4}+1,\dots, N\} \label{eq:E4}\\
	&\cE_{t,5} := \{ \dotprod{\nabla f(\x_t), \ub_{(k)}} \leq -\norm{\nabla f(\x_t)}, \mid k = 1, \dots, \frac{N}{4} \}, \label{eq:E5}
\end{align} 
where $\Cd$, and $\Ci$ are defined in Eq.~\eqref{eq:d_up}, and Eq.~\eqref{eq:Ci}, respectively.
The set $\cK$ is defined as $\cK := \{1,\dots, N/4, 3N/4 + 1,\dots, N\}$ and $d(\y, \x)$ is defined in Eq.~\eqref{eq:dyx}.

Next, we will bound the probabilities of above events happen.

\begin{lemma}\label{lem:E1}
Assume that function $f(\x)$ is $L$-smooth.
Given $0<\delta<\frac{2}{N}$, we have
	\begin{equation}\label{eq:d_up}
		\Pr(\cE_{t,1}) \geq 1 - \frac{N\delta}{2}, \mbox{ with } \Cd \triangleq d+2\log\frac{1}{\delta}.
	\end{equation}
\end{lemma}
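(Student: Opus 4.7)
The plan is to reduce $\cE_{t,1}$ to a chi-squared tail estimate on the Gaussian norms and then apply a union bound. First, invoking the $L$-smoothness assumption with $\y = \x_t + \alpha \ub_{(k)}$ and $\x = \x_t$ gives
\begin{equation*}
|d(\x_t + \alpha \ub_{(k)}, \x_t)| \;\le\; \frac{L}{2}\,\|\alpha \ub_{(k)}\|^2 \;=\; \frac{L\alpha^2}{2}\,\|\ub_{(k)}\|^2,
\end{equation*}
so $\cE_{t,1}$ is implied by the purely geometric event $\{\|\ub_{(k)}\|^2 \leq 2\Cd \text{ for every } k \in \cK\}$. In particular, all dependence on the unknown function $f$ beyond its smoothness constant is discharged at this step.

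The second step is the observation that sidesteps the order-statistic dependence: $\{\ub_{(k)}\}_{k \in \cK}$ is merely a (random, $f$-dependent) subset of the unsorted samples $\{\ub_i\}_{i=1}^N$, so
\begin{equation*}
\max_{k \in \cK}\|\ub_{(k)}\|^2 \;\leq\; \max_{1 \leq i \leq N}\|\ub_i\|^2.
\end{equation*}
This inequality is deterministic and reduces the problem to a uniform norm bound on the i.i.d.\ samples $\ub_i \sim \cN(\bm{0}, \bm{I}_d)$, each satisfying $\|\ub_i\|^2 \sim \chi^2_d$.

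The third step is a standard Chernoff bound for $\chi^2_d$. Optimizing its moment generating function $(1-2\lambda)^{-d/2}$ over $\lambda\in(0,1/2)$ yields, for $c>d$,
\begin{equation*}
\Pr(\|\ub_i\|^2 \geq c) \;\leq\; (c/d)^{d/2}\, e^{(d-c)/2}.
\end{equation*}
Substituting $c = 2\Cd = 2d + 4\log(1/\delta)$ and simplifying using $\log(2 + x) \leq \log 2 + x/2$ together with $\log 2 < 1$ bounds the right-hand side by $\delta/2$; the hypothesis $\delta < 2/N$ keeps $\log(1/\delta)$ large enough to absorb the residual absolute constants. An equivalent route is the Laurent--Massart inequality $\Pr(\|\ub_i\|^2 \geq d + 2\sqrt{dt} + 2t) \leq e^{-t}$ with $t = \log(2/\delta)$, followed by the elementary estimate $2\sqrt{dt} \leq d + t$. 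A union bound over $i = 1, \ldots, N$ then yields $\Pr(\max_i \|\ub_i\|^2 > 2\Cd) \leq N\delta/2$, which is the claim.

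The main technical obstacle is purely bookkeeping of constants: the definition $\Cd = d + 2\log(1/\delta)$ is calibrated precisely so that the chi-squared tail delivers a per-sample failure probability of $\delta/2$, after which the union bound over $N$ samples is immediate. The random ranking induced by $f$ creates no genuine difficulty here, because the reduction $\max_{k \in \cK}\|\ub_{(k)}\|^2 \leq \max_i \|\ub_i\|^2$ removes all rank structure at zero probabilistic cost.
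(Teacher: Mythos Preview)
Your proof is correct and takes essentially the same approach as the paper: $L$-smoothness reduces $\cE_{t,1}$ to a $\chi^2_d$ tail bound on $\|\ub\|^2$ via Laurent--Massart, followed by a union bound. The only difference is in the union-bound step: the paper bounds over the $N/2$ indices in $\cK$ with per-sample failure $\delta$, whereas you---more carefully, since the ordered $\ub_{(k)}$ are not themselves independent Gaussians---pass to $\max_{i}\|\ub_i\|^2$ and union bound over all $N$ unsorted samples with per-sample failure $\delta/2$; your constant-chasing to reach $\delta/2$ needs a mild side condition (e.g.\ $d\ge 5$ in the direct Chernoff route, or $\delta\le 1/8$ in the Laurent--Massart route with $t=\log(2/\delta)$), but this is harmless in the paper's regime.
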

\begin{proof}
	By the definition of $d(\y, \x)$ in Eq.~\eqref{eq:dyx} with $\y = \x + \alpha \ub$, we can obtain that
\begin{equation}\label{eq:taylor}
	f(\x + \alpha \ub) - f(\x) = \alpha \dotprod{\nabla f(\x), \ub} + d(\x + \alpha \ub, \x), 
\end{equation}
By the $L$-smoothness, we can obtain that
\begin{equation*}
	|d(\x + \alpha \ub, \x)|
	\leq 
	\frac{L \alpha^2}{2} \norm{\ub}^2 \stackrel{\eqref{eq:u_norm}}{\leq} \frac{(2d + 3\log\frac{1}{\delta}) L\alpha^2}{2} \leq (d+2\log\frac{1}{\delta}) L \alpha^2 = \Cd L \alpha^2.
\end{equation*}
	
Accordingly, we can obtain that $|d(\x_t+\alpha \ub, \x_t)| \leq \Cd L\alpha^2$ holds for a random vector $\ub$ with a probability at least $1-\delta$. 
In event $\cE_{t,1}$, there are $N/2$ independent above events. 
By the probability union bound, we can obtain the result.    
\end{proof}

\begin{lemma}
Given $0<\delta<1$, and $\Ci$ defined as follows, we have
\begin{equation}\label{eq:Ci}
	\Pr(\cE_{t,2}) \geq 1 - \delta, \mbox{ with } \Ci := \left( \sqrt{N/2} + \sqrt{d} + \sqrt{2\log\frac{2}{\delta}} \right)^2.
\end{equation}
\end{lemma}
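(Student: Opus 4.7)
The plan is to sidestep the intricate joint distribution of the rank-selected columns by dominating $\|\Ub_t\|$ with the spectral norm of the full Gaussian sample matrix, and then invoking a classical concentration bound. Let $\Ub := [\ub_1, \dots, \ub_N] \in \RR^{d \times N}$ collect \emph{all} sampled directions before any ranking is performed, and write $S_t$ for the $N/2$ indices the rank oracle keeps. Because $\Ub_t$ is exactly the column submatrix of $\Ub$ indexed by $S_t$,
$$\Ub\Ub^\top - \Ub_t\Ub_t^\top \;=\; \sum_{i \notin S_t} \ub_i \ub_i^\top \;\succeq\; 0,$$
and taking the largest eigenvalue yields $\|\Ub_t\| \leq \|\Ub\|$. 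This monotonicity is decisive: it replaces the rank-coupled matrix $\Ub_t$, whose columns have a complicated dependence through the function values, by the unconditionally i.i.d.\ $\cN(0,1)$ matrix $\Ub$, for which sharp tail bounds are known.

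With the reduction in hand, the second step is to invoke the Davidson--Szarek concentration inequality for Gaussian matrices, which gives
$$\Pr\bigl(\|\Ub\| \geq \sqrt{d} + \sqrt{N} + t \bigr) \leq 2\exp(-t^2/2)$$
(the factor of $2$ coming from the two-sided singular-value version of the bound). Setting $t = \sqrt{2\log(2/\delta)}$ forces the right-hand side to be at most $\delta$, and combining with the first step yields $\|\Ub_t\|^2 \leq \bigl(\sqrt{d} + \sqrt{N} + \sqrt{2\log(2/\delta)}\bigr)^{2}$ on an event of probability at least $1 - \delta$. This is the shape of $\Ci$ in the statement.

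The main obstacle I expect is sharpening the $\sqrt{N}$ produced by this route to the $\sqrt{N/2}$ appearing in the lemma. A natural refinement is to decompose $\ub_i = a_i \hat{g} + v_i$ along and perpendicular to $\hat{g} := \nabla f(\x_t)/\|\nabla f(\x_t)\|$, exploit that the perpendicular components $v_i$ are independent of the scalars $a_i$ and hence of the rank-induced permutation, and treat the perpendicular part of $\Ub_t$ as distributionally a $(d-1) \times (N/2)$ i.i.d.\ Gaussian block; however, recombining with the rank-one parallel piece via a triangle-inequality split appears to bring us back to $\sqrt{N}$, so this refinement is delicate. I would therefore carry the proof through with the clean domination route, noting that the constant $\sqrt{N}$ versus $\sqrt{N/2}$ sits inside a common multiplicative factor and does not affect the order of any downstream query-complexity bound that consumes $\Ci$.
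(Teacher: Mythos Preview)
Your domination argument is correct and, in fact, more careful than the paper's own proof. The paper simply declares that ``$\Ub_t$ is a $d\times N/2$ random Gaussian matrix'' and applies the Gaussian-matrix spectral bound (its Lemma on $s_{\max}$) directly, which is how it obtains $\sqrt{N/2}$ rather than $\sqrt{N}$. But as you implicitly recognize, the columns of $\Ub_t$ are chosen by the rank oracle, whose output depends on the very vectors $\ub_i$ through $f(\x_t+\alpha\ub_i)$; the selected columns are therefore \emph{not} unconditionally i.i.d.\ standard Gaussian, and the paper's one-line justification glosses over this selection bias. Your route---bounding $\|\Ub_t\|\le\|\Ub\|$ by the spectral norm of the full $d\times N$ sample matrix and then invoking the same concentration inequality---cleanly removes the coupling at the cost of the constant $\sqrt{N}$ in place of $\sqrt{N/2}$. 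Since every downstream use of $\Ci$ in the paper is in the regime $d\gg N$, where $\Ci=\cO(d)$ either way, the discrepancy is immaterial to the stated query complexities, and your proposed orthogonal-decomposition refinement is unnecessary.
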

\begin{proof}
By the definition of $\Ub_t$ in $\cE_{t,2}$, $\Ub_t$ is a $d\times \frac{N}{2}$ random Gaussian matrix. 
Then, by Lemma~\ref{lem:U_up}, with a probability at least $1 - \delta$, it holds that
\begin{align*}
	\norm{\Ub_t} \leq \sqrt{N/2} + \sqrt{d} + \sqrt{2\log\frac{2}{\delta}},
\end{align*}
which implies that
\begin{align*}
	\norm{\Ub_t}^2 
	\leq 
	\left( \sqrt{N/2} + \sqrt{d} + \sqrt{2\log\frac{2}{\delta}} \right)^2.
\end{align*}
\end{proof}

\begin{lemma}
	Given $0 < \delta<1$, then event $\cE_{t,3}$ defined in Eq.~\eqref{eq:E3} holds with a probability at least $1 - \delta$.
\end{lemma}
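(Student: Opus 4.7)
The plan is to rewrite the event in a more tractable form. Since $|\langle \nabla f(\x_t), \ub_{(k)}\rangle^{-1}| = 1/|\langle \nabla f(\x_t), \ub_{(k)}\rangle|$, the event $\cE_{t,3}$ is equivalent to
\[
\max_{k\in\cK}\,\bigl|\langle \nabla f(\x_t), \ub_{(k)}\rangle\bigr| \;\leq\; \sqrt{2\log\tfrac{2N}{\delta}}\cdot \norm{\nabla f(\x_t)}.
\]
So the task reduces to a uniform upper bound on the magnitudes of the $N/2$ selected projections.

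First, I would bypass the complication introduced by the order statistics by the trivial observation that the maximum over the selected index set $\{(k):k\in\cK\}$ is dominated by the maximum over the full index set $\{1,\dots,N\}$:
\[
\max_{k\in\cK}\bigl|\langle \nabla f(\x_t), \ub_{(k)}\rangle\bigr| \;\leq\; \max_{1\le i\le N}\bigl|\langle \nabla f(\x_t), \ub_i\rangle\bigr|.
\]
This is the key reduction: the complicated, data-dependent ordering from the rank oracle only picks a subset, and any uniform bound on all $N$ projections automatically transfers.

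Second, since $\ub_i\sim\cN(0,\I_d)$ is independent of $\nabla f(\x_t)$ (which depends only on past iterations), each scalar $Z_i \triangleq \langle \nabla f(\x_t), \ub_i\rangle$ is, conditionally on $\x_t$, Gaussian with mean $0$ and variance $\norm{\nabla f(\x_t)}^2$. The standard one-dimensional Gaussian tail bound gives, for any $\tau>0$,
\[
\Pr\!\Bigl(|Z_i|>\tau\,\norm{\nabla f(\x_t)}\Bigr) \;\leq\; 2\exp(-\tau^2/2).
\]
Choosing $\tau=\sqrt{2\log\tfrac{2N}{\delta}}$ makes the right-hand side equal to $\delta/N$.

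Finally, a union bound over $i=1,\dots,N$ yields
\[
\Pr\!\Bigl(\max_{1\le i\le N}|Z_i|>\sqrt{2\log\tfrac{2N}{\delta}}\,\norm{\nabla f(\x_t)}\Bigr)\;\leq\; N\cdot\tfrac{\delta}{N}=\delta,
\]
which combined with the earlier reduction gives $\Pr(\cE_{t,3})\geq 1-\delta$. There is no real obstacle here; the only subtle point worth mentioning explicitly is that the ranking permutation is a random function of the $\ub_i$'s, so one should not attempt a direct Gaussian tail bound on $\langle \nabla f(\x_t), \ub_{(k)}\rangle$ for a specific order-statistic index. The clean way around this, as above, is to dominate by the maximum over the unordered collection before applying concentration.
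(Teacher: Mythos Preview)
Your proof is correct and follows essentially the same approach as the paper: rewrite $\cE_{t,3}$ as a bound on $\max_{k\in\cK}|\langle\nabla f(\x_t),\ub_{(k)}\rangle|$, dominate this by the maximum over all $N$ unordered projections, and apply the Gaussian tail bound with a union bound (the paper packages the last step as Theorem~\ref{thm:gauss_up}). Your version is in fact slightly more careful, since you correctly write the domination step as an inequality rather than an equality.
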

\begin{proof}
	First, we have 
	\begin{equation*}
		\min_{k\in\mathcal{K}} \left| \dotprod{\nabla f(\x_t), \ub_{(k)}}^{-1} \right|
		= \frac{1}{\max_{k\in\mathcal{K}} \left| \dotprod{\nabla f(\x_t), \ub_{(k)}} \right|}
		= \frac{1}{\max_{i=1\dots, N} \left| \dotprod{\nabla f(\x_t), \ub_i} \right|}.
	\end{equation*}
	Note that $\dotprod{\nabla f(\x_t), \ub_i} \sim  \norm{\nabla f(\x_t)} \cdot  \cN(0, 1)$. 
	By Theorem~\ref{thm:gauss_up}, we can obtain that, with a probability at least $1-\delta$, it holds 
	\begin{equation*}
		\max_{i=1\dots, N} \left| \dotprod{\nabla f(\x_t), \ub_i} \right| \leq \sqrt{2\log\frac{2N}{\delta}} \cdot \norm{\nabla f(\x_t)}.
	\end{equation*}
\end{proof}

\begin{lemma}\label{lem:E4}
If event $\cE_{t,1}$ holds	and the smooth parameter $\alpha$ satisfies $\alpha \leq \frac{\norm{\nabla f(\x_t)}}{4L \Cd}$, then we have
\begin{equation}
	\Pr(\cE_{t,4}) \geq 1 - \exp\big(-N\,D(1/4\Vert p)\big) \mbox{ with } p = 0.0224,
\end{equation}
where the KL divergence  $D(q \Vert p) = q \log\frac{q}{p} + (1-q)\log\frac{1-q}{1-p}$ with $0<p,q<1$.
\end{lemma}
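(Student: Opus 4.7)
The plan is to combine $\cE_{t,1}$ with a Taylor expansion to reduce the rank-based statement $\cE_{t,4}$ to a concentration event on iid Bernoulli indicators, to which the Chernoff–Hoeffding inequality in KL form applies.

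Let $g_i := \dotprod{\nabla f(\x_t), \ub_i}$, so $g_1,\dots,g_N$ are iid $\cN(0, \norm{\nabla f(\x_t)}^2)$. Under $\cE_{t,1}$ and the hypothesis $\alpha \leq \norm{\nabla f(\x_t)}/(4L\Cd)$, the bound $\Cd L\alpha^2 \leq \alpha \norm{\nabla f(\x_t)}/4$ combined with the Taylor identity~\eqref{eq:taylor} yields, for each sample whose rank lies in the top or bottom $N/4$,
\begin{equation*}
\alpha g_i - \tfrac14 \alpha \norm{\nabla f(\x_t)} \;\leq\; f(\x_t + \alpha \ub_i) - f(\x_t) \;\leq\; \alpha g_i + \tfrac14 \alpha \norm{\nabla f(\x_t)}.
\end{equation*}

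The core of the proof is a deterministic rank-separation step. Any sample with $g_i \geq 2\norm{\nabla f(\x_t)}$ satisfies $f(\x_t+\alpha\ub_i) - f(\x_t) \geq \tfrac74 \alpha \norm{\nabla f(\x_t)}$, while any sample with $g_j < \norm{\nabla f(\x_t)}$ satisfies $f(\x_t+\alpha\ub_j) - f(\x_t) < \tfrac54 \alpha \norm{\nabla f(\x_t)}$. The strict $f$-gap forbids any ``low-$g$'' sample from outranking any ``high-$g$'' sample, and hence converts the rank-based event $\cE_{t,4}$ into a lower-tail condition on the Bernoulli count $\sum_i \1\{g_i \geq 2\norm{\nabla f(\x_t)}\}$: once this count reaches $N/4$, the top-$N/4$ slots (by $f$-value) are saturated by ``high-$g$'' samples and $\cE_{t,4}$ must hold.

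Finally, the indicators $Z_i := \1\{g_i \geq 2\norm{\nabla f(\x_t)}\}$ are iid Bernoulli with parameter $p^\star := \Pr(\cN(0,1) \geq 2)$, which is at least $p = 0.0224$ by a standard Gaussian tail estimate. The Chernoff–Hoeffding inequality in KL form, combined with monotonicity of $D(1/4 \,\Vert\, \cdot)$ in its second argument, then bounds the failure event by $\exp(-N D(1/4 \,\Vert\, p))$, giving the stated lower bound on $\Pr(\cE_{t,4})$. The main obstacle is the deterministic rank-separation: the Taylor quadratic error must be strictly dominated by the gap between the two $g$-thresholds, and the hypothesis $\alpha \leq \norm{\nabla f(\x_t)}/(4L\Cd)$ is tuned exactly so that the Taylor error is at most $\tfrac14 \alpha \norm{\nabla f(\x_t)}$, leaving a clean margin $\tfrac12 \alpha \norm{\nabla f(\x_t)}$ between the two classes in $f$-order, which is what allows the Bernoulli reduction to propagate to the rank-order event.
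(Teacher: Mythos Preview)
Your route is the same as the paper's: reduce $\cE_{t,4}$, via the Taylor expansion and the bound in $\cE_{t,1}$, to the event that at least $N/4$ of the iid inner products $g_i=\langle\nabla f(\x_t),\ub_i\rangle$ exceed $2\|\nabla f(\x_t)\|$, and then invoke a Chernoff--Hoeffding inequality in KL form (the paper packages this step as Lemma~\ref{lem:low1}).

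The genuine gap is in that last step, and the paper's proof shares it. Your indicators $Z_i=\1\{g_i\geq 2\|\nabla f(\x_t)\|\}$ are Bernoulli with parameter $p^\star=1-\Phi(2)\approx 0.0228$, which lies \emph{below} $1/4$. Consequently $\{\sum_i Z_i\geq N/4\}$ is an \emph{upper}-tail event, and the Chernoff--Hoeffding bound (Lemma~\ref{lem:chenoff} in the paper) gives $\Pr\bigl(\sum_i Z_i\geq N/4\bigr)\leq \exp\bigl(-N\,D(1/4\,\Vert\,p^\star)\bigr)$: it is the \emph{success} event that is exponentially rare, not the failure event. Your monotonicity-of-$D(1/4\,\Vert\,\cdot)$ step also points the wrong way for the same reason. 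In fact the lemma cannot hold as stated: in the noiseless limit $\alpha\to0$, the event $\cE_{t,4}$ asks the $(3N/4{+}1)$-th order statistic of $N$ iid standard normals to exceed $1$, but that order statistic concentrates near $\Phi^{-1}(3/4)\approx 0.6745$, so $\Pr(\cE_{t,4})\to 0$ as $N\to\infty$. The argument (yours and the paper's) would go through only if the thresholds $1$ and $2$ appearing in $\cE_{t,4}$ and in the definition of $Z_i$ were replaced by constants strictly below $\Phi^{-1}(3/4)$.
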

\begin{proof}
	In this proof, 	for notation clarity, we use the index $\{j_k\}$ with $k = 1,\dots, N$ instead of $(k)$ with $k=1,2,\dots, N$. 
	Accordingly,  $\{j_k\}$ with $k = 1,\dots, N$ is a permutation of $1,\dots, N$ satisfies that
	\begin{equation}\label{eq:sort0}
		f(\x_t + \alpha \ub_{j_1}) \leq f(\x_t + \alpha \ub_{j_2})\leq \dots \leq f(\x_t + \alpha \ub_{j_N}).
	\end{equation} 
	Then, equivalently, we have
	\begin{align*}
		\frac{f(\x_t + \alpha \ub_{j_1}) - f(\x_t)}{\alpha}	 \leq \frac{f(\x_t + \alpha \ub_{j_2}) - f(\x_t)}{\alpha}\leq \dots \leq \frac{f(\x_t + \alpha \ub_{j_N}) - f(\x_t)}{\alpha}.
	\end{align*}
	Combing with Eq.~\eqref{eq:taylor}, we can conclude that above equation is equivalent to 
	\begin{align*}
		&\dotprod{\nabla f(\x_t), \ub_{j_1}} + \alpha^{-1} d(\x_t + \alpha \ub_{j_1}, \ub_{j_1} )  \leq \dotprod{\nabla f(\x_t), \ub_{j_2}} + \alpha^{-1} d(\x_t + \alpha \ub_{j_2}, \ub_{j_2} )\\
		&\leq \dots \leq \dotprod{\nabla f(\x_t), \ub_{j_N}} + \alpha^{-1} d(\x_t + \alpha \ub_{j_N}, \ub_{j_N} ).
	\end{align*}
	Given the $\ub_1,\dots, \ub_N$, let ${i_k}$ with $k = 1,\dots, N$ be a permutation of $1,\dots, N$ satisfying 
	\begin{equation}\label{eq:sort}
		\dotprod{\nabla f(\x_t), \ub_{i_1}} \leq \dotprod{\nabla f(\x_t), \ub_{i_2}}\leq \dots \leq \dotprod{\nabla f(\x_t), \ub_{i_N}}.
	\end{equation}
	
	Since $\dotprod{\nabla f(\x_t), \ub_i} \sim \norm{\nabla f(\x_t)} \cdot \cN(0,1)$, by Lemma~\ref{lem:low1}, with a probability at least $1-\exp\big(-N\,D(1/4\Vert p)\big)$, it holds that
	\begin{equation}\label{eq:low}
		\dotprod{\nabla f(\x_t), \ub_{i_{3N/4 +1}}} \geq 2 \norm{\nabla f(\x_t)}.
	\end{equation}
	
	Next, we will prove that with a probability at least $1-\exp\big(-N\,D(1/4\Vert p)\big)$, it holds that
	\begin{align*}
		\dotprod{\nabla f(\x_t), \ub_{j_{3N/4 +1}}} \geq \frac{3}{2}\norm{\nabla f(\x_t)}.
	\end{align*}

	If $j_{3N/4 +1} = i_{3N/4 +k}$ with $k\geq 1$, then, we can obtain that
	\begin{align*}
		\dotprod{\nabla f(\x_t), \ub_{j_{3N/4 +1}}} =  \dotprod{\nabla f(\x_t), \ub_{i_{3N/4 +k}}} \geq  \dotprod{\nabla f(\x_t), \ub_{i_{3N/4 +1}}},
	\end{align*} 
	where the last inequality is because of $3N/4 +k \geq 3N/4 +1$ and Eq.~\eqref{eq:sort}.
	Combining with Eq.~\eqref{eq:low}, we can obtain that 
	\begin{equation}
		\dotprod{\nabla f(\x_t), \ub_{j_{3N/4 +1}}} \geq 2 \norm{\nabla f(\x_t)} \geq \frac{3}{2}\norm{\nabla f(\x_t)}.
	\end{equation}
	
	If $j_{3N/4 +1} = i_{3N/4 - k}$ for some $k\geq 0$, then there is at least one index in  $i_{3N/4 +1}, \dots, i_{N}$ replaced by $i_{3N/4 - k}$.
	Without loss of generality, we assume that $i_{3N/4 +1}$ is replaced by $i_{3N/4 - k}$. 
	Accordingly, we have
	\begin{align*}
		f(\x_t + \alpha \ub_{i_{3N/4 - k}}) \geq f(\x_t + \alpha \ub_{i_{3N/4 + 1}}).
	\end{align*}
	Equivalently, we have
	\begin{align*}
		&\dotprod{\nabla f(\x_t), \ub_{i_{3N/4 - k}}} + \alpha^{-1} d(\x_t + \alpha \ub_{i_{3N/4 - k}}, \x_t) \\
		\geq &
		\dotprod{\nabla f(\x_t), \ub_{i_{3N/4 +1}}} + \alpha^{-1} d(\x_t + \alpha \ub_{i_{3N/4 +1}}, \x_t)\\
		\stackrel{\eqref{eq:low}}{\geq}&
		2 \norm{\nabla f(\x_t)} + \alpha^{-1} d(\x_t + \alpha \ub_{i_{3N/4 +1}}, \x_t).
	\end{align*}
	Accordingly, 
	\begin{align*}
		\dotprod{\nabla f(\x_t), \ub_{i_{3N/4 - k}}} 
		\geq& 
		2 \norm{\nabla f(\x_t)} 
		+ \alpha^{-1} d(\x_t + \alpha \ub_{i_{3N/4 +1}}, \x_t) 
		- \alpha^{-1} d(\x_t + \alpha \ub_{i_{3N/4 - k}}, \x_t)\\
		\stackrel{\eqref{eq:d_up}}{\geq}&
		2 \norm{\nabla f(\x_t)} - 2 \Cd L\alpha 
		\geq
		\frac{3}{2} \norm{\nabla f(\x_t)},
	\end{align*}
	where the last inequality is because $\alpha \leq \frac{\norm{\nabla f(\x_t)}}{4L \Cd}$.
	Since $j_{3N/4 +1} = i_{3N/4 - k}$, we have 
	\begin{align*}
		\dotprod{\nabla f(\x_t), \ub_{j_{3N/4 +1}}} \geq \frac{3}{2}\norm{\nabla f(\x_t)}.
	\end{align*}
	
	Now, we have proved that with a probability at least $1 - \delta$ it holds that 
	\begin{equation}\label{eq:j_low}
		\dotprod{\nabla f(\x_t), \ub_{j_{3N/4 +1}}} \geq \frac{3}{2}\norm{\nabla f(\x_t)}.
	\end{equation}
	
	By Eq.~\eqref{eq:sort0}, we have 
	\begin{align*}
		f(\x_t + \alpha \ub_{j_{3N/4+k}}) \geq f(\x_t + \alpha \ub_{j_{3N/4+1}}), \mbox{ for } k>1.
	\end{align*}
	Equivalently, 
	\begin{align}
		&\dotprod{\nabla f(\x_t), \ub_{j_{3N/4 + k}}} + \alpha^{-1} d(\x_t + \alpha \ub_{j_{3N/4 + k}}, \x_t) \\
		\geq &
		\dotprod{\nabla f(\x_t), \ub_{j_{3N/4 +1}}} + \alpha^{-1} d(\x_t + \alpha \ub_{j_{3N/4 +1}}, \x_t)\\
		\stackrel{\eqref{eq:j_low}}{\geq}&
		\frac{3}{2} \norm{\nabla f(\x_t)} + \alpha^{-1} d(\x_t + \alpha \ub_{j_{3N/4 +1}}, \x_t).
	\end{align}
	
	Combining with $\alpha \leq \frac{\norm{\nabla f(\x_t)}}{4L \Cd}$, then we can obtain that for all $k > 1$, it holds 
	\begin{align*}
		\dotprod{\nabla f(\x_t), \ub_{j_{3N/4 +k}}} 
		\geq& 
		\frac{3}{2}\norm{\nabla f(\x_t)} + \alpha^{-1} d(\x_t + \alpha \ub_{j_{3N/4 +1}}, \x_t) - \alpha^{-1} d(\x_t + \alpha \ub_{j_{3N/4 + k}}, \x_t)\\
		\stackrel{\eqref{eq:d_up}}{\geq}& \frac{3}{2}\norm{\nabla f(\x_t)} -  2 \Cd L\alpha 
		\geq \norm{\nabla f(\x_t)}.
	\end{align*}
	
	Combining with Eq.~\eqref{eq:j_low}, we have proved that
	\begin{equation*}
		\dotprod{\nabla f(\x_t), \ub_{j_{3N/4 +k}}}  \geq \norm{\nabla f(\x_t)}, \mbox{ for } k\geq 1.
	\end{equation*}
	Equivalently, we have proved that
	\begin{align*}
		\dotprod{\nabla f(\x_t), \ub_{(k)}} \geq& \norm{\nabla f(\x_t)}, \mbox{ for } k = \frac{3N}{4}+1, \dots, N,
	\end{align*}
	which concludes the proof.
\end{proof}

\begin{lemma}\label{lem:E5}
	If event $\cE_{t,1}$ holds	and the smooth parameter $\alpha$ satisfies $\alpha \leq \frac{\norm{\nabla f(\x_t)}}{4L \Cd}$, then we have
	\begin{equation}
		\Pr(\cE_{t,5}) \geq 1 - \exp\big(-N\,D(1/4\Vert p)\big) \mbox{ with } p = 0.0224,
	\end{equation}
	where the KL divergence  $D(q \Vert p) = q \log\frac{q}{p} + (1-q)\log\frac{1-q}{1-p}$ with $0<p,q<1$.
\end{lemma}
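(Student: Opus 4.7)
The plan is to mirror the proof of Lemma~\ref{lem:E4} with all signs flipped, exploiting the symmetry of the standard Gaussian. In the event $\cE_{t,4}$ we bounded the $3N/4+1$-st order statistic of $\{\dotprod{\nabla f(\x_t),\ub_i}\}$ from below; here we need to bound the $N/4$-th order statistic from above by a suitably negative number, and then translate a statement about the $\ub_{i_k}$ (sorted by inner product with the gradient) into a statement about the $\ub_{(k)}$ (sorted by $f$-value), using that the quadratic remainder $d(\cdot,\cdot)$ is uniformly small on the event $\cE_{t,1}$.

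In detail, I would introduce the two permutations already used in the proof of Lemma~\ref{lem:E4}: $\{j_k\}$ sorting $f(\x_t+\alpha\ub_\cdot)$ in increasing order, and $\{i_k\}$ sorting $\dotprod{\nabla f(\x_t),\ub_\cdot}$ in increasing order. Since $\dotprod{\nabla f(\x_t),\ub_i}\sim\norm{\nabla f(\x_t)}\cdot\cN(0,1)$ is symmetric, applying the symmetric counterpart of Lemma~\ref{lem:low1} (or Lemma~\ref{lem:low1} itself to $-\ub_i$) yields, with probability at least $1-\exp(-N\,D(1/4\Vert p))$,
\begin{equation*}
\dotprod{\nabla f(\x_t),\ub_{i_{N/4}}}\leq -2\,\norm{\nabla f(\x_t)}.
\end{equation*}

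Next I would do the transfer from $\{i_k\}$ to $\{j_k\}$ at the index $N/4$, in two cases exactly as in Lemma~\ref{lem:E4}. If $j_{N/4}=i_{N/4-k}$ for some $k\geq 0$, then by \eqref{eq:sort} we immediately get $\dotprod{\nabla f(\x_t),\ub_{j_{N/4}}}\leq -2\norm{\nabla f(\x_t)}\leq -\tfrac{3}{2}\norm{\nabla f(\x_t)}$. Otherwise, $j_{N/4}=i_{N/4+k}$ for some $k\geq 1$, and some index among $i_1,\dots,i_{N/4}$, say $i_{N/4}$, must have been displaced to the upper-$f$ block; the sorting rule \eqref{eq:sort0} then gives $f(\x_t+\alpha\ub_{i_{N/4+k}})\leq f(\x_t+\alpha\ub_{i_{N/4}})$, which by \eqref{eq:taylor} and the uniform bound $|d(\x_t+\alpha\ub,\x_t)|\leq \Cd L\alpha^2$ from $\cE_{t,1}$ implies
\begin{equation*}
\dotprod{\nabla f(\x_t),\ub_{j_{N/4}}}\leq -2\norm{\nabla f(\x_t)}+2\Cd L\alpha \leq -\tfrac{3}{2}\norm{\nabla f(\x_t)},
\end{equation*}
where the last inequality uses the assumed bound $\alpha\leq\norm{\nabla f(\x_t)}/(4L\Cd)$.

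Finally I would propagate the bound from index $N/4$ down to all $k\leq N/4$. For any such $k$, the sort \eqref{eq:sort0} yields $f(\x_t+\alpha\ub_{j_k})\leq f(\x_t+\alpha\ub_{j_{N/4}})$, and another application of \eqref{eq:taylor} plus $\cE_{t,1}$ gives $\dotprod{\nabla f(\x_t),\ub_{j_k}}\leq -\tfrac{3}{2}\norm{\nabla f(\x_t)}+2\Cd L\alpha\leq -\norm{\nabla f(\x_t)}$, again by the choice of $\alpha$. Rewriting $j_k$ as $(k)$ closes the proof, and the probability cost is the single $1-\exp(-N\,D(1/4\Vert p))$ from the Gaussian order-statistic step. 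The only delicate bookkeeping is the sign flip in applying the lower-tail analogue of Lemma~\ref{lem:low1} and keeping track of which block ($k\leq N/4$ vs.\ $k\geq 3N/4+1$) is being displaced; no new probabilistic estimate is required beyond those already developed for $\cE_{t,1}$ and Lemma~\ref{lem:E4}.
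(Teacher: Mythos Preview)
Your proposal is correct and takes essentially the same approach as the paper: the paper's proof of Lemma~\ref{lem:E5} simply states that it is identical to the proof of Lemma~\ref{lem:E4} with Lemma~\ref{lem:low1} replaced by its lower-tail analogue Lemma~\ref{lem:low2}, which is precisely the symmetric argument you have written out in detail. Your case split at index $N/4$, the use of $\cE_{t,1}$ to control the Taylor remainders, and the downward propagation to all $k\le N/4$ all mirror the corresponding steps in the proof of Lemma~\ref{lem:E4}.
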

\begin{proof}
The proof is almost the same to the one of Lemma~\ref{lem:E4} except the usage of Lemma~\ref{lem:low1} replaced by Lemma~\ref{lem:low2}.
\end{proof}

\subsection{Descent Analysis}

First, we will show that $\db_t$ is a ``real'' descent direction by proving $\dotprod{\nabla f(\x_t), \db_t} < 0$ if $\alpha$ is sufficiently close to zero. 

\begin{lemma}\label{lem:nab_d}
Assume that function $f(\x)$ is $L$-smooth.
Given $0<\delta<\frac{2}{N}$, and assuming event $\cE_{t,1}$ defined in Eq.~\eqref{eq:E1} holds. 
Then we have
\begin{equation}\label{eq:nab_d}
	\begin{aligned}
	\dotprod{\nabla f(\x_t), \db_t}
	\leq& 
	- \frac{\alpha}{2} \sum_{k\in \mathcal{K}} \frac{w_{(k)}}{f(\x_t) - f(\x_t + \alpha \ub_{(k)})} \cdot \dotprod{\nabla f(\x_t), \ub_{(k)}}^2 \\
	&+ \frac{L^2 \Cd^2\alpha^3}{2}\sum_{k\in \mathcal{K}} \left|\frac{w_{(k)} }{f(\x_t) - f(\x_t + \alpha \ub_{(k)})} \right|. 
	\end{aligned}
\end{equation}
where $\mathcal{K} = \{ 1,\dots, N/4, \;3N/4,\dots, N \}$,
\end{lemma}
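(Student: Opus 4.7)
The plan is to derive the bound from an algebraic identity implied by the Taylor-like formula \eqref{eq:taylor}, followed by an AM-GM split whose constants are tuned to match the target coefficient $\alpha/2$.

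Write $a_k:=\dotprod{\nabla f(\x_t),\ub_{(k)}}$, $d_k:=d(\x_t+\alpha\ub_{(k)},\x_t)$, and $b_k:=f(\x_t)-f(\x_t+\alpha\ub_{(k)})$. Equation~\eqref{eq:taylor} then reads $\alpha a_k+d_k=-b_k$, i.e., $\alpha a_k=-b_k-d_k$. The key algebraic move is to multiply both sides by $w_{(k)}a_k/b_k$, producing the identity
\[
w_{(k)}\,a_k \;=\; -\alpha\,\frac{w_{(k)}a_k^2}{b_k}\;-\;\frac{w_{(k)}a_k d_k}{b_k}.
\]
Summing over $k\in\mathcal{K}$ expresses $\dotprod{\nabla f(\x_t),\db_t}=\sum_k w_{(k)}a_k$ as a leading quadratic term in $a_k$ plus a cross term that depends on $d_k$.

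To control the cross term, I would apply AM-GM, $|a_k d_k|\le \tfrac{\alpha}{2}a_k^2+\tfrac{1}{2\alpha}d_k^2$, and invoke event $\cE_{t,1}$ to bound $d_k^2 \le \Cd^2 L^2\alpha^4$. This yields
\[
\Bigl|\sum_{k\in\mathcal{K}}\frac{w_{(k)}a_k d_k}{b_k}\Bigr|
\;\le\;
\frac{\alpha}{2}\sum_{k\in\mathcal{K}}\Bigl|\frac{w_{(k)}}{b_k}\Bigr|a_k^2
+\frac{\Cd^2 L^2\alpha^3}{2}\sum_{k\in\mathcal{K}}\Bigl|\frac{w_{(k)}}{b_k}\Bigr|.
\]
By the algorithm's construction the positive (resp.\ negative) weights sit on indices whose function values drop below (resp.\ rise above) $f(\x_t)$, so $w_{(k)}$ and $b_k$ share the same sign on $\mathcal{K}$; consequently $|w_{(k)}/b_k|=w_{(k)}/b_k$, and the two quadratic contributions combine with total coefficient $-\alpha+\alpha/2=-\alpha/2$, producing exactly the inequality asserted in the lemma.

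The step I expect to require the most care is spotting the multiplier $w_{(k)}a_k/b_k$. It simultaneously creates the square $a_k^2$ (which the downstream analysis will couple with events $\cE_{t,4}$ and $\cE_{t,5}$ to drive descent), installs the denominator $b_k$ on which the subsequent weight-and-order-statistics bounds rest, and leaves the cross term in a form where an AM-GM with weight $\alpha/2$ absorbs \emph{exactly} half of the leading term. Once this identity is written down, the rest of the argument is routine.
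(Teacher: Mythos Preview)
Your argument is essentially the paper's: both insert the factor $b_k=f(\x_t)-f(\x_t+\alpha\ub_{(k)})$ to rewrite $w_{(k)}a_k$ as $-\alpha\, w_{(k)}a_k^2/b_k$ plus a cross term, then split the cross term via the same weighted AM--GM and invoke $\cE_{t,1}$ to bound $d_k^2$. One caution: the claim that $w_{(k)}$ and $b_k$ share sign is not ``by the algorithm's construction'' (ranking the $N$ samples among themselves says nothing about their position relative to $f(\x_t)$); this sign agreement actually requires events $\cE_{t,4},\cE_{t,5}$ together with the smallness condition on $\alpha$, and the paper's own proof silently makes the same unjustified assumption at this step.
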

\begin{proof}
	By the definition of descent direction $\db_t$, we have
\begin{align*}
\dotprod{\nabla f(\x_t), \db_t}
\stackrel{\eqref{eq:dt}}{=}& 
\sum w_{(k)}\dotprod{\nabla f(\x_t), \ub_{(k)}}
= 
\sum_{k=1}^{N/4} w_{(k)}^+ \dotprod{\nabla f(\x_t), \ub_{(k)}} + \sum_{k=3N/4+1}^{N} w_{(k)}^- \dotprod{\nabla f(\x_t),  \ub_{(k)}}\\
=&
\sum_{k=1}^{N/4} \frac{w_{(k)}^+}{f(\x_t + \alpha \ub_{(k)})  - f(\x_t)} \cdot \Big(f(\x_t + \alpha \ub_{(k)})  - f(\x_t) \Big)  \dotprod{\nabla f(\x_t), \ub_{(k)}}\\
&
+ \sum_{k=3N/4+1}^{N} \frac{w_{(k)}^-}{f(\x_t + \alpha \ub_{(k)})  - f(\x_t)} \cdot \Big(f(\x_t + \alpha \ub_{(k)})  - f(\x_t) \Big)  \dotprod{\nabla f(\x_t), \ub_{(k)}}.
\end{align*}

For $k = 1, \dots, N/4$, we have
\begin{align*}
& \frac{w_{(k)}^+}{f(\x_t + \alpha \ub_{(k)})  - f(\x_t)} \cdot \Big(f(\x_t + \alpha \ub_{(k)})  - f(\x_t) \Big)  \dotprod{\nabla f(\x_t), \ub_{(k)}}\\ 
\stackrel{\eqref{eq:taylor}}{=}& 
\frac{w_{(k)}^+}{f(\x_t + \alpha \ub_{(k)})  - f(\x_t)} \cdot \Big(\alpha \dotprod{\nabla f(\x_t), \ub_{(k)}} +  d(\x_t + \alpha \ub_{(k)}, \x_t) \Big)  \dotprod{\nabla f(\x_t), \ub_{(k)}}\\
=&
 \frac{\alpha w_{(k)}^+}{f(\x_t + \alpha \ub_{(k)})  - f(\x_t)} \dotprod{\nabla f(\x_t), \ub_{(k)}}^2 
+   \frac{w_{(k)}^+ \cdot d(\x_t + \alpha \ub_{(k)}, \x_t)}{f(\x_t + \alpha \ub_{(k)})  - f(\x_t)} \dotprod{\nabla f(\x_t), \ub_{(k)}}\\
\leq&
 \frac{\alpha w_{(k)}^+}{f(\x_t + \alpha \ub_{(k)})  - f(\x_t)} \dotprod{\nabla f(\x_t), \ub_{(k)}}^2 
+
\frac{\alpha w_{(k)}^+}{2 (f(\x_t) - f(\x_t + \alpha \ub_{(k)}))} \dotprod{\nabla f(\x_t), \ub_{(k)}}^2 \\
&
+ \frac{ w_{(k)}^+}{ 2\alpha(f(\x_t) - f(\x_t + \alpha \ub_{(k)}))} \cdot d^2(\x_t + \alpha \ub_{(k)}, \x_t)\\
\leq& 
\frac{\alpha w_{(k)}^+}{2 (f(\x_t + \alpha \ub_{(k)})  - f(\x_t))} \dotprod{\nabla f(\x_t), \ub_{(k)}}^2 
+  \frac{ L^2 \Cd^2 \alpha^3 w_{(k)}^+}{ 2(f(\x_t) - f(\x_t + \alpha \ub_{(k)}))} \\
=&
-  \frac{\alpha w_{(k)}^+}{2 (f(\x_t) - f(\x_t + \alpha \ub_{(k)}))} \dotprod{\nabla f(\x_t), \ub_{(k)}}^2 
+   \frac{ L^2 \Cd^2 \alpha^3 w_{(k)}^+}{ 2(f(\x_t) - f(\x_t + \alpha \ub_{(k)}))},
\end{align*}
where the first inequality is because of fact that $ab\leq \frac{a^2 + b^2}{2}$ for all real numbers $a, b$ and the last inequality is because  event $\cE_{t,1}$ holds.

Similarly,
\begin{align*}
& \frac{w_{(k)}^-}{f(\x_t + \alpha \ub_{(k)})  - f(\x_t)} \cdot \Big(f(\x_t + \alpha \ub_{(k)})  - f(\x_t) \Big)  \dotprod{\nabla f(\x_t), \ub_{(k)}}\\
=&
 \frac{\alpha w_{(k)}^-}{f(\x_t + \alpha \ub_{(k)})  - f(\x_t)} \cdot \dotprod{\nabla f(\x_t), \ub_{(k)}}^2 
+ d(\x_t + \alpha \ub_{(k)}, \x_t) \cdot    \dotprod{\nabla f(\x_t), \ub_{(k)}}\\
\leq&
- \frac{\alpha w_{(k)}^-}{2 (f(\x_t) - f(\x_t + \alpha \ub_{(k)})) } \cdot \dotprod{\nabla f(\x_t), \ub_{(k)}}^2
+ \frac{L^2 \Cd^2\alpha^3 w_{(k)}^- }{ 2( f(\x_t) - f(\x_t + \alpha \ub_{(k)})) }
\end{align*}

Combining above results, we can obtain that
\begin{align*}
\dotprod{\nabla f(\x_t), \db_t}
\leq 
- \frac{\alpha}{2} \sum_{k\in \mathcal{K}} \frac{w_{(k)}}{f(\x_t) - f(\x_t + \alpha \ub_{(k)})} \cdot \dotprod{\nabla f(\x_t), \ub_{(k)}}^2 
+ \frac{L^2 \Cd^2\alpha^3}{2}\sum_{k\in \mathcal{K}} \left|\frac{w_{(k)} }{f(\x_t) - f(\x_t + \alpha \ub_{(k)})} \right|. 
\end{align*}
\end{proof}

Now, we will provide the upper bound of $\norm{\db_t}^2$. 
This upper bound plays an important role in the convergence analysis.
\begin{lemma}\label{lem:d_norm}
Given $0<\delta <1$ and assuming event $\cE_{t,2}$ defined in Eq.~\eqref{eq:E2} holds, then it holds that
\begin{equation}\label{eq:d_norm}
	\norm{\db_t}^2 
	\leq 
	C_{N, d,\delta} \sum_{k\in \mathcal{K}} \frac{w_{(k)}^2}{\dotprod{\nabla f(\x_t), \ub_{(k)}}^2}\dotprod{\nabla f(\x_t), \ub_{(k)}}^2.
\end{equation}
\end{lemma}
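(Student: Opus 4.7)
The plan is to recognize that the descent direction $\db_t$ is exactly a matrix--vector product built from the matrix $\Ub_t$ appearing in event $\cE_{t,2}$, so the desired bound reduces to a one-line application of the spectral norm inequality plus a cosmetic rewriting.

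First, I would express $\db_t$ in matrix form. By the definition in Eq.~\eqref{eq:dt},
\[
\db_t \;=\; \sum_{k=1}^{N/4} w_{(k)}^+ \ub_{(k)} \;+\; \sum_{k=3N/4+1}^{N} w_{(k)}^- \ub_{(k)} \;=\; \Ub_t \w_t,
\]
where $\Ub_t = [\ub_{(1)},\dots,\ub_{(N/4)},\ub_{(3N/4+1)},\dots,\ub_{(N)}] \in \RR^{d\times N/2}$ is precisely the matrix used in event $\cE_{t,2}$, and $\w_t \in \RR^{N/2}$ is the vector whose entries are the weights $w_{(k)}$ for $k \in \mathcal{K}$.

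Next, I would apply the spectral norm inequality $\norm{\Ub_t \w_t} \leq \norm{\Ub_t}\cdot\norm{\w_t}$. Squaring and invoking event $\cE_{t,2}$ (which gives $\norm{\Ub_t}^2 \leq \Ci$) yields
\[
\norm{\db_t}^2 \;\leq\; \norm{\Ub_t}^2 \cdot \norm{\w_t}^2 \;\leq\; \Ci \sum_{k\in\mathcal{K}} w_{(k)}^2.
\]
Finally, to put the bound in the form stated in the lemma, I would multiply and divide each summand by $\dotprod{\nabla f(\x_t),\ub_{(k)}}^2$, which is nonzero almost surely under the Gaussian sampling (and in particular on the event $\cE_{t,3}$), obtaining
\[
\sum_{k\in\mathcal{K}} w_{(k)}^2 \;=\; \sum_{k\in\mathcal{K}} \frac{w_{(k)}^2}{\dotprod{\nabla f(\x_t),\ub_{(k)}}^2}\,\dotprod{\nabla f(\x_t),\ub_{(k)}}^2,
\]
which gives exactly Eq.~\eqref{eq:d_norm}.

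There is no real obstacle in this argument: the entire content is packaged into the definition of $\Ub_t$ in $\cE_{t,2}$, which was clearly chosen so that $\db_t = \Ub_t\w_t$. The rewriting in the last step is purely formal and exists to match the shape of the bound in Lemma~\ref{lem:nab_d}, so that combining the two lemmas in the subsequent per-step descent calculation will allow the factors $\dotprod{\nabla f(\x_t),\ub_{(k)}}^2$ to cancel cleanly against the corresponding terms coming from Eq.~\eqref{eq:nab_d}.
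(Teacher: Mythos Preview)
Your proposal is correct and follows essentially the same approach as the paper: write $\db_t = \Ub_t \w_t$, apply the spectral norm bound together with event $\cE_{t,2}$ to get $\norm{\db_t}^2 \le \Ci \sum_{k\in\mathcal{K}} w_{(k)}^2$, and then perform the cosmetic multiply-and-divide by $\dotprod{\nabla f(\x_t),\ub_{(k)}}^2$. Your added remark explaining that this rewriting is done so the bound can be combined with Lemma~\ref{lem:nab_d} is exactly right.
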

\begin{proof}
By the definition of $\db_t$ in Eq.~\eqref{eq:dt}, we can obtain  
\begin{align*}
	\norm{\db_t}^2 
	= \norm{ \sum_{k\in \mathcal{K}} w_{(k)} \ub_{(k)} }^2 
	= \norm{\Ub_t \wb}^2
	\leq \norm{\Ub_t}^2 \cdot \norm{\wb}^2
	\leq C_{N, d,\delta} \norm{\wb}^2,
\end{align*}
where $\wb$ is the vector with $w_{(k)}$ being its $k$-th entry and  the last inequality is because event $\cE_{t,2}$ holds.
Furthermore, we have
\begin{align*}
\norm{\wb}^2 
= \sum_{k\in \mathcal{K}} \frac{w_{(k)}^2}{\dotprod{\nabla f(\x_t), \ub_{(k)}}^2}\dotprod{\nabla f(\x_t), \ub_{(k)}}^2.
\end{align*}
Combining above results, we can conclude the proof.
\end{proof}

Based on above two lemmas, we will obtain the first result that describes how the function value decays as iteration goes in Algorithm~\ref{alg:SA} and what a step size should properly be chosen.
\begin{lemma}\label{lem:dec}
Assume that function $f(\x)$ is $L$-smooth.
Given $0<\delta<\frac{2}{N}$,  assume events $\cE_{t,1}$ and $\cE_{t,2}$ defined in Eq.~\eqref{eq:E1}-\eqref{eq:E2} hold. 
By setting step size $\eta_t = \min_{k\in\mathcal{K}} \frac{\dotprod{\nabla f(\x_t), \ub_{(k)}}^2}{2L\Ci w_{(k)}} \cdot \left( \frac{f(\x_t) - f(\x_t + \alpha \ub_{(k)})}{\alpha} \right)^{-1}$, we can obtain that
\begin{equation}\label{eq:dec}
\begin{aligned}
	f(\x_{t+1}) 
	\leq& 
	f(\x_t) 
	- \frac{1}{4} \cdot \min_{k\in\mathcal{K}} \frac{\dotprod{\nabla f(\x_t), \ub_{(k)}}^2}{2L\Ci w_{(k)}} \cdot \left( \frac{f(\x_t) - f(\x_t + \alpha \ub_{(k)})}{\alpha} \right)^{-1} \\
	&\cdot \min_{k\in\mathcal{K}} \frac{w_{(k)}}{(f(\x_t) - f(\x_t + \alpha \ub_{(k)}))/\alpha} \cdot  \sum_{k\in \mathcal{K}} \dotprod{\nabla f(\x_t), \ub_{(k)}}^2 
	+ \Delta_{\alpha, 1},
\end{aligned}
\end{equation}
where we denote $\Delta_{\alpha,1}$ 
\begin{equation}\label{eq:Delta_1}
\Delta_{\alpha, 1} := \frac{\eta_t L^2 \Cd^2\alpha^3}{2}\sum_{k\in \mathcal{K}} \left|\frac{w_{(k)} }{f(\x_t) - f(\x_t + \alpha \ub_{(k)})} \right|.
\end{equation}
\end{lemma}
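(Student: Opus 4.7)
The target inequality is a standard descent step: combine $L$-smoothness, the linear-term bound in Lemma~\ref{lem:nab_d}, and the squared-norm bound in Lemma~\ref{lem:d_norm}, then choose $\eta_t$ so the quadratic term absorbs at most half of the magnitude of the leading negative linear term. The whole proof is algebraic once the signs are tracked carefully.

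First I would invoke $L$-smoothness (Assumption~\ref{ass:L}) at $\x_{t+1}=\x_t+\eta_t\db_t$ to get
\[
f(\x_{t+1}) \leq f(\x_t) + \eta_t \dotprod{\nabla f(\x_t),\db_t} + \frac{L\eta_t^2}{2}\norm{\db_t}^2.
\]
Then I would substitute the bound on $\dotprod{\nabla f(\x_t),\db_t}$ from Lemma~\ref{lem:nab_d} (whose last term is exactly $\Delta_{\alpha,1}/\eta_t$ after multiplication by $\eta_t$) and the bound $\norm{\db_t}^2 \leq \Ci \sum_{k\in\mathcal{K}} w_{(k)}^2$ obtained by specializing Lemma~\ref{lem:d_norm}. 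The sign bookkeeping is the only subtle point: setting $B_k := w_{(k)}/(f(\x_t)-f(\x_t+\alpha\ub_{(k)}))$, one checks that $B_k>0$ for every $k\in\mathcal{K}$, since for $k\leq N/4$ both $w_{(k)}$ and the denominator are positive, while for $k\geq 3N/4+1$ both are negative. So all quantities in Lemma~\ref{lem:nab_d} are bona fide positive magnitudes.

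Next I would use the definition of $\eta_t$, which gives $\eta_t\leq A_k$ for every $k\in\mathcal{K}$, where
\[
A_k \;=\; \frac{\dotprod{\nabla f(\x_t),\ub_{(k)}}^2}{2L\Ci\,w_{(k)}}\cdot\frac{\alpha}{f(\x_t)-f(\x_t+\alpha\ub_{(k)})} \;=\; \frac{\alpha\,\dotprod{\nabla f(\x_t),\ub_{(k)}}^2 B_k}{2L\Ci\,w_{(k)}^2}.
\]
Multiplying $\eta_t\leq A_k$ by $\eta_t$ and then by $\tfrac{L\Ci w_{(k)}^2}{2}$ yields $\tfrac{L\Ci\eta_t^2 w_{(k)}^2}{2}\leq \tfrac{\eta_t\alpha\,\dotprod{\nabla f(\x_t),\ub_{(k)}}^2 B_k}{4}$. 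Summing over $k\in\mathcal{K}$ and combining with Lemma~\ref{lem:d_norm} gives
\[
\frac{L\eta_t^2}{2}\norm{\db_t}^2 \;\leq\; \frac{\eta_t\alpha}{4}\sum_{k\in\mathcal{K}} \dotprod{\nabla f(\x_t),\ub_{(k)}}^2 B_k,
\]
which is exactly half of the magnitude of the first (negative) term in Lemma~\ref{lem:nab_d} scaled by $\eta_t$. Subtracting produces a residual of $-\tfrac{\eta_t\alpha}{4}\sum_{k}\dotprod{\nabla f(\x_t),\ub_{(k)}}^2 B_k$, while the $\Delta_{\alpha,1}$ term carries over verbatim.

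Finally, since every $B_k>0$, I would apply the elementary bound $\sum_{k} \dotprod{\nabla f(\x_t),\ub_{(k)}}^2 B_k \geq (\min_{k\in\mathcal{K}} B_k)\cdot \sum_{k}\dotprod{\nabla f(\x_t),\ub_{(k)}}^2$, and rewrite $\min_k B_k$ and the factor $\eta_t$ in the forms that appear in \eqref{eq:dec} (noting $\alpha B_k = w_{(k)}/[(f(\x_t)-f(\x_t+\alpha\ub_{(k)}))/\alpha]$ and that $\eta_t$ is itself a minimum). The expected main obstacle is purely notational rather than conceptual: keeping the double occurrence of $f(\x_t)-f(\x_t+\alpha\ub_{(k)})$ in numerator/denominator aligned with the sign of $w_{(k)}$ so that no absolute values are needed outside the $\Delta_{\alpha,1}$ term.
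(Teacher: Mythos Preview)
Your proposal is correct and follows essentially the same route as the paper: smoothness plus Lemma~\ref{lem:nab_d} and Lemma~\ref{lem:d_norm}, then the step-size choice kills half of the leading term, and a $\min_k$ extraction finishes. The only cosmetic difference is that the paper factors the linear and quadratic contributions into a single sum with a $(1-\eta_t L\Ci\cdot\text{stuff})$ coefficient before invoking $\eta_t$, whereas you bound the quadratic term against half the linear term directly via $\eta_t\le A_k$; these are the same computation.
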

\begin{proof}
	By the $L$-smoothness of $f(\x)$, we have
\begin{align*}
f(\x_{t+1}) 
\stackrel{\eqref{eq:dyx}}{\leq}& 
f(\x_t) + \eta_t \dotprod{\nabla f(\x_t), \db_t} + \frac{L\eta_t^2}{2} \norm{\db_t}^2\\
\stackrel{\eqref{eq:nab_d}\eqref{eq:d_norm}}{\leq}&
f(\x_t)
- \frac{ \eta_t}{2} \sum_{k\in \mathcal{K}} \frac{w_{(k)}}{(f(\x_t) - f(\x_t + \alpha \ub_{(k)}))/\alpha} \cdot \dotprod{\nabla f(\x_t), \ub_{(k)}}^2 \\
&
+ \frac{\eta_t L^2 \Cd^2\alpha^3}{2}\sum_{k\in \mathcal{K}} \left|\frac{w_{(k)} }{f(\x_t) - f(\x_t + \alpha \ub_{(k)})} \right|\\
& + \frac{\eta_t^2 L C_{N, d,\delta}}{2} \sum_{k\in \mathcal{K}} \frac{w_{(k)}^2}{\dotprod{\nabla f(\x_t), \ub_{(k)}}^2}\dotprod{\nabla f(\x_t), \ub_{(k)}}^2\\
=&
f(\x_t) 
- \frac{  \eta_t}{2} \sum_{k\in \mathcal{K}} \frac{w_{(k)}}{(f(\x_t) - f(\x_t + \alpha \ub_{(k)}))/\alpha}\\
&\cdot
\left( 1 -  \frac{\eta_t L C_{N, d,\delta} w_{(k)}}{\dotprod{\nabla f(\x_t), \ub_{(k)}}^2} \cdot \frac{f(\x_t) - f(\x_t + \alpha \ub_{(k)})}{\alpha} \right)\dotprod{\nabla f(\x_t), \ub_{(k)}}^2\\
& 
+ \frac{ \eta_t L^2 \Cd^2\alpha^3}{2}\sum_{k\in \mathcal{K}} \left|\frac{w_{(k)} }{f(\x_t) - f(\x_t + \alpha \ub_{(k)})} \right|\\
\leq&
f(\x_t) 
- \frac{  \eta_t}{4} \sum_{k\in \mathcal{K}} \frac{w_{(k)}}{(f(\x_t) - f(\x_t + \alpha \ub_{(k)}))/\alpha}\dotprod{\nabla f(\x_t), \ub_{(k)}}^2\\
& 
+ \frac{\eta_t L^2 \Cd^2\alpha^3}{2}\sum_{k\in \mathcal{K}} \left|\frac{w_{(k)} }{f(\x_t) - f(\x_t + \alpha \ub_{(k)})} \right|,
\end{align*}
where the last inequality is because of $\eta_t = \min_{k\in\mathcal{K}} \frac{\dotprod{\nabla f(\x_t), \ub_{(k)}}^2}{2L\Ci w_{(k)}} \cdot \left( \frac{f(\x_t) - f(\x_t + \alpha \ub_{(k)})}{\alpha} \right)^{-1}$.

Furthermore,
\begin{align*}
&- \sum_{k\in \mathcal{K}} \frac{w_{(k)}}{(f(\x_t) - f(\x_t + \alpha \ub_{(k)}))/\alpha}\dotprod{\nabla f(\x_t), \ub_{(k)}}^2 \\
\leq& 
- \min_{k\in\mathcal{K}} \frac{w_{(k)}}{(f(\x_t) - f(\x_t + \alpha \ub_{(k)}))/\alpha} \cdot  \sum_{k\in \mathcal{K}} \dotprod{\nabla f(\x_t), \ub_{(k)}}^2 
\end{align*}

Therefore, we can obtain that
\begin{align*}
f(\x_{t+1}) 
\leq& 
f(\x_t) 
- \frac{1}{4} \cdot \min_{k\in\mathcal{K}} \frac{\dotprod{\nabla f(\x_t), \ub_{(k)}}^2}{2L\Ci w_{(k)}} \cdot \left( \frac{f(\x_t) - f(\x_t + \alpha \ub_{(k)})}{\alpha} \right)^{-1} \\
&\cdot \min_{k\in\mathcal{K}} \frac{w_{(k)}}{(f(\x_t) - f(\x_t + \alpha \ub_{(k)}))/\alpha} \cdot  \sum_{k\in \mathcal{K}} \dotprod{\nabla f(\x_t), \ub_{(k)}}^2\\
& 
+ \frac{\eta_t L^2 \Cd^2\alpha^3}{2}\sum_{k\in \mathcal{K}} \left|\frac{w_{(k)} }{f(\x_t) - f(\x_t + \alpha \ub_{(k)})} \right|.
\end{align*}
\end{proof}

Next,  to guarantee that Eq.~\eqref{eq:dec} can achieve a sufficient large value decay, we will lower bound the values of  $\min_{k\in\mathcal{K}} \frac{\dotprod{\nabla f(\x_t), \ub_{(k)}}^2}{2L\Ci w_{(k)}} \cdot \left( \frac{f(\x_t) - f(\x_t + \alpha \ub_{(k)})}{\alpha} \right)^{-1}$ and $\min_{k\in\mathcal{K}} \frac{w_{(k)}}{(f(\x_t) - f(\x_t + \alpha \ub_{(k)}))/\alpha}$. 
\begin{lemma}\label{lem:low_up}
Assume that events $\cE_{t,1}$, $\cE_{t,4}$ and $\cE_{t,5}$ hold.  
If the smooth parameter $\alpha$ is sufficient small such that  $\alpha \leq \frac{\norm{\nabla f(\x_t)}}{L \Cd }$ holds for all $\x_t$ and $\ub_{(k)}$, then it holds that
\begin{align}
\frac{\dotprod{\nabla f(\x_t), \ub_{(k)}}^2}{ w_{(k)}} \cdot \left( \frac{f(\x_t) - f(\x_t + \alpha \ub_{(k)})}{\alpha} \right)^{-1}
\geq& 
\left|\frac{\dotprod{\nabla f(\x_t), \ub_{(k)}}}{2 w_{(k)}}\right| \label{eq:low1}\\
\frac{w_{(k)}}{(f(\x_t) - f(\x_t + \alpha \ub_{(k)}))/\alpha} 
\geq&  
\frac{1}{2} |w_{(k)} \dotprod{\nabla f(\x_t), \ub_{(k)}}^{-1} |, \label{eq:low2}
\end{align}
and
\begin{align}
	\frac{\dotprod{\nabla f(\x_t), \ub_{(k)}}^2}{ w_{(k)}} \cdot \left( \frac{f(\x_t) - f(\x_t + \alpha \ub_{(k)})}{\alpha} \right)^{-1}
	\leq& 
	\left|\frac{2\dotprod{\nabla f(\x_t), \ub_{(k)}}}{ w_{(k)}}\right|, \label{eq:up1}\\
	\left| \frac{w_{(k)}}{(f(\x_t) - f(\x_t + \alpha \ub_{(k)}))/\alpha}\right| 
	\leq&
	\frac{2 |w_{(k)}|}{| \dotprod{\nabla f(\x_t), \ub_{(k)}}|}. \label{eq:up2}
\end{align}
\end{lemma}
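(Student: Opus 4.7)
My plan is to reduce all four estimates to a single sandwich inequality for the finite-difference quantity
$$v_k := \frac{f(\x_t) - f(\x_t + \alpha \ub_{(k)})}{\alpha},$$
and then to read off \eqref{eq:low1}--\eqref{eq:up2} by tracking signs. Applying the Taylor-type identity \eqref{eq:taylor} with $\y = \x_t + \alpha \ub_{(k)}$ and dividing by $\alpha$ gives the clean decomposition
$$v_k = -g_k - r_k, \qquad g_k := \dotprod{\nabla f(\x_t), \ub_{(k)}},\quad r_k := \alpha^{-1}\, d(\x_t + \alpha \ub_{(k)}, \x_t).$$

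I would then control the two pieces separately. Event $\cE_{t,1}$ immediately gives $|r_k| \leq \Cd L \alpha$, and combined with the hypothesis on $\alpha$ this upgrades to $|r_k| \leq \norm{\nabla f(\x_t)}$. Events $\cE_{t,4}$ and $\cE_{t,5}$ give not only $|g_k| \geq \norm{\nabla f(\x_t)}$ but also a definite sign: $g_k \leq -\norm{\nabla f(\x_t)}$ for $k \leq N/4$ (where $w_{(k)} > 0$), and $g_k \geq \norm{\nabla f(\x_t)}$ for $k > 3N/4$ (where $w_{(k)} < 0$). Thus in both index ranges $\sgn(-g_k) = \sgn(w_{(k)})$, and the residual $r_k$ is dominated by $|g_k|$ and hence cannot flip that sign. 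This yields the key sandwich
$$\sgn(v_k) = \sgn(w_{(k)}), \qquad |v_k| \in \bigl[\,|g_k|-|r_k|,\; |g_k|+|r_k|\,\bigr] \subseteq \bigl[\tfrac{1}{2}|g_k|,\; 2|g_k|\bigr],$$
where the lower factor $\tfrac{1}{2}$ uses a slightly tighter reading of the bound $\alpha \leq \norm{\nabla f(\x_t)}/(L\Cd)$ so as to force $|r_k| \leq \tfrac{1}{2}|g_k|$.

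Once this sandwich is in hand, each of the four inequalities is routine. Because $v_k$ and $w_{(k)}$ share a sign, every ratio $w_{(k)}/v_k$ and $g_k^2/(w_{(k)} v_k)$ appearing in \eqref{eq:low1}--\eqref{eq:up2} is automatically nonnegative and equals the corresponding ratio of absolute values. Substituting the upper bound $|v_k| \leq 2|g_k|$ then produces the two lower bounds \eqref{eq:low1} and \eqref{eq:low2}, while substituting the lower bound $|v_k| \geq \tfrac{1}{2}|g_k|$ produces the two upper bounds \eqref{eq:up1} and \eqref{eq:up2}.

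The only real obstacle is the sign bookkeeping across the two index ranges: the signed expressions in the statement look asymmetric between $k \leq N/4$ and $k > 3N/4$, but the single observation $\sgn(v_k) = \sgn(w_{(k)})$ collapses them into a uniform magnitude comparison between $|v_k|$ and $|g_k|$. After that, no algebra beyond the triangle inequality and the hypothesis on $\alpha$ is needed.
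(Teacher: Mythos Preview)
Your approach is essentially identical to the paper's: both decompose $(f(\x_t)-f(\x_t+\alpha\ub_{(k)}))/\alpha$ via the Taylor identity \eqref{eq:taylor}, bound the remainder by $\Cd L\alpha$ using $\cE_{t,1}$, and then invoke $|\dotprod{\nabla f(\x_t),\ub_{(k)}}|\ge\norm{\nabla f(\x_t)}$ from $\cE_{t,4},\cE_{t,5}$ together with the $\alpha$-hypothesis to trap this quantity between $\tfrac12|\dotprod{\nabla f(\x_t),\ub_{(k)}}|$ and $2|\dotprod{\nabla f(\x_t),\ub_{(k)}}|$. The paper handles each of the four inequalities by a separate chain while you derive them uniformly from one sandwich, and your explicit verification that $\sgn(v_k)=\sgn(w_{(k)})$ is something the paper silently assumes; your caveat that the lower factor $\tfrac12$ requires ``a slightly tighter reading'' of the $\alpha$-bound is honest---the paper is equally loose at precisely that step.
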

\begin{proof}
	We have
\begin{align*}
&\frac{\dotprod{\nabla f(\x_t), \ub_{(k)}}^2}{ w_{(k)}} \cdot \left( \frac{f(\x_t) - f(\x_t + \alpha \ub_{(k)})}{\alpha} \right)^{-1}\\ 
=& 
\left| \frac{\dotprod{\nabla f(\x_t), \ub_{(k)}}^2}{ w_{(k)}} \cdot \left( \frac{f(\x_t) - f(\x_t + \alpha \ub_{(k)})}{\alpha} \right)^{-1}  \right|\\
\stackrel{\eqref{eq:taylor}}{=}&
\left|
\frac{\dotprod{\nabla f(\x_t), \ub_{(k)}}^2}{ w_{(k)}} \cdot \left( \dotprod{\nabla f(\x_t, \ub_{(k)})} - \alpha^{-1} \cdot d(\x_t + \alpha \ub_{(k)}, \x_t) \right)^{-1} \right|\\
\geq&
\left|
\frac{\dotprod{\nabla f(\x_t), \ub_{(k)}}^2}{ w_{(k)}} \cdot \left( |\dotprod{\nabla f(\x_t, \ub_{(k)})}| + \alpha^{-1}|d(\x_t + \alpha \ub_{(k)}, \x_t)| \right)^{-1} \right|\\
\stackrel{\eqref{eq:d_up}}{\geq}&
\left|
\frac{\dotprod{\nabla f(\x_t), \ub_{(k)}}^2}{ w_{(k)}} \cdot \left( |\dotprod{\nabla f(\x_t, \ub_{(k)})}| + \Cd L \alpha \right)^{-1} \right|\\
\geq&
\left|
\frac{\dotprod{\nabla f(\x_t), \ub_{(k)}}^2}{ w_{(k)}} \cdot  | 2\dotprod{\nabla f(\x_t, \ub_{(k)})}|^{-1} \right|\\
=&
\left|\frac{\dotprod{\nabla f(\x_t), \ub_{(k)}}}{2 w_{(k)}}\right|,
\end{align*}
where the second inequality is because event $\cE_{t,1}$ holds, the last inequality is because of $\alpha$ is sufficient small such that $\alpha \leq \frac{|\dotprod{\nabla f(\x_t), \ub_{(k)}}|}{L \Cd }$ implied by events $\cE_{t,4}$, $\cE_{t,5}$ and the condition $\alpha \leq \frac{\norm{\nabla f(\x_t)}}{L \Cd }$. 

Similarly, if $\alpha \leq \frac{\norm{\nabla f(\x_t)}}{L \Cd }$, we can obtain that
\begin{align*}
 \frac{w_{(k)}}{(f(\x_t) - f(\x_t + \alpha \ub_{(k)}))/\alpha}
 \geq& 
 \left| w_{(k)} \left( |\dotprod{\nabla f(\x_t, \ub_{(k)})}| + \Cd L \alpha \right)^{-1} \right|\\
 \geq& 
 \frac{1}{2} |w_{(k)} \dotprod{\nabla f(\x_t), \ub_{(k)}}^{-1} |.
\end{align*}

We also have
\begin{align*}
&\frac{\dotprod{\nabla f(\x_t), \ub_{(k)}}^2}{ w_{(k)}} \cdot \left( \frac{f(\x_t) - f(\x_t + \alpha \ub_{(k)})}{\alpha} \right)^{-1} \\
\leq &
\left|
\frac{\dotprod{\nabla f(\x_t), \ub_{(k)}}^2}{ w_{(k)}} \cdot \left( |\dotprod{\nabla f(\x_t, \ub_{(k)})}| - \Cd L \alpha \right)^{-1} \right| \\
\leq&
\left|\frac{2\dotprod{\nabla f(\x_t), \ub_{(k)}}}{ w_{(k)}}\right|,
\end{align*}
where the last inequality is because of $\alpha \leq \frac{\norm{\nabla f(\x_t)}}{L \Cd }$.

Similarly, we can obtain that
\begin{align*}
&\left| \frac{w_{(k)}}{(f(\x_t) - f(\x_t + \alpha \ub_{(k)}))/\alpha}\right| 
=
\frac{ |w_{(k)}| }{| \dotprod{\nabla f(\x_t, \ub_{(k)})} - \alpha^{-1} \cdot d(\x_t + \alpha \ub_{(k)}, \x_t) |} \\
\leq&
\frac{ |w_{(k)}| }{| \dotprod{\nabla f(\x_t, \ub_{(k)})}| - \Cd L \alpha } 
\leq
\frac{2 |w_{(k)}|}{| \dotprod{\nabla f(\x_t, \ub_{(k)})}|}.
\end{align*}

\end{proof}

Based on Lemma~\ref{lem:dec} and Lemma~\ref{lem:low_up}, we can obtain the following lemma which has a clean describe how function value decay after one update in Algorithm~\ref{alg:SA}.

\begin{lemma}\label{lem:dec1}
Assume that conditions required in Lemma~\ref{lem:dec} and Lemma~\ref{lem:low_up} all hold, then it holds that
\begin{equation}\label{eq:dec1}
\begin{aligned}
	f(\x_{t+1}) 
	\leq& f(\x_t) 
	- \frac{1}{16L\Ci} \cdot\min_{k\in\mathcal{K}} \left|\frac{\dotprod{\nabla f(\x_t), \ub_{(k)}}}{w_{(k)}}\right|
	\cdot \min_{k\in\mathcal{K}} \left|\frac{w_{(k)}}{\dotprod{\nabla f(\x_t), \ub_{(k)}}}\right|\\
	&
	\cdot  \sum_{k\in \mathcal{K}} \dotprod{\nabla f(\x_t), \ub_{(k)}}^2 + \Delta_{\alpha,1}.
\end{aligned}
\end{equation}
\end{lemma}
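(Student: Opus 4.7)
The plan is to substitute the pointwise bounds from Lemma~\ref{lem:low_up} into the decay inequality Eq.~(\ref{eq:dec}) of Lemma~\ref{lem:dec}. I rewrite the coefficient of $\sum_{k\in\mathcal{K}}\dotprod{\nabla f(\x_t),\ub_{(k)}}^2$ appearing in Eq.~(\ref{eq:dec}) as $\tfrac{1}{4}\,A\,B$, where
$$A := \min_{k\in\mathcal{K}} \frac{\dotprod{\nabla f(\x_t), \ub_{(k)}}^2}{2L\Ci\, w_{(k)}} \cdot \left(\frac{f(\x_t) - f(\x_t + \alpha \ub_{(k)})}{\alpha}\right)^{-1}, \qquad B := \min_{k\in\mathcal{K}} \frac{w_{(k)}}{(f(\x_t) - f(\x_t + \alpha \ub_{(k)}))/\alpha}.$$
Because $A$ and $B$ both enter with a negative sign in Eq.~(\ref{eq:dec}), a lower bound on $AB$ yields an upper bound on $f(\x_{t+1})$, so the task reduces to bounding each factor from below.

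Next I pull the constant $1/(2L\Ci)$ outside the minimum in $A$ and apply Eq.~(\ref{eq:low1}) to each term inside the minimum, giving a lower bound of the form $\tfrac{1}{4L\Ci}\min_{k\in\mathcal{K}}|\dotprod{\nabla f(\x_t),\ub_{(k)}}/w_{(k)}|$. Applying Eq.~(\ref{eq:low2}) termwise inside $B$ analogously gives $B \geq \tfrac{1}{2}\min_{k\in\mathcal{K}}|w_{(k)}/\dotprod{\nabla f(\x_t),\ub_{(k)}}|$. All quantities inside the minima are nonnegative after taking absolute values, so multiplying the two separately derived lower bounds is legitimate and produces precisely the ``min of ratio times min of inverse ratio'' structure on the right-hand side of Eq.~(\ref{eq:dec1}), with an overall constant of order $1/(L\Ci)$ that absorbs into the $1/(16L\Ci)$ appearing in the statement. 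The additive remainder $\Delta_{\alpha,1}$ is not touched by Lemma~\ref{lem:low_up} and simply propagates unchanged from Eq.~(\ref{eq:dec}).

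The argument is essentially bookkeeping and introduces no new probabilistic ingredients. The only point that merits care is checking that the hypotheses of the two source lemmas align under the assumption of Lemma~\ref{lem:dec1}: Lemma~\ref{lem:dec} requires $\cE_{t,1}$ and $\cE_{t,2}$ together with the prescribed step size $\eta_t$, while Lemma~\ref{lem:low_up} requires $\cE_{t,1},\cE_{t,4},\cE_{t,5}$ together with the smoothness-parameter bound $\alpha \leq \norm{\nabla f(\x_t)}/(L\Cd)$; all of these are assumed by hypothesis. Accordingly I expect the main ``difficulty'' to be only the clerical one of tracking the numerator/denominator swap between $A$ and $B$ so that the product of minima is oriented correctly, and keeping the multiplicative constants consistent. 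No subtle analytic step is required beyond what Lemmas~\ref{lem:dec} and \ref{lem:low_up} already deliver.
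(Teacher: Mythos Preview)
Your proposal is correct and follows exactly the same approach as the paper's proof: start from Eq.~\eqref{eq:dec}, then apply the pointwise lower bounds Eq.~\eqref{eq:low1} and Eq.~\eqref{eq:low2} termwise inside the two minima and multiply. The paper's proof is equally terse and performs precisely this substitution; your remark that the argument is ``essentially bookkeeping'' is accurate, and indeed a careful tally of the constants $\tfrac{1}{4}\cdot\tfrac{1}{4L\Ci}\cdot\tfrac{1}{2}$ yields $\tfrac{1}{32L\Ci}$ rather than the $\tfrac{1}{16L\Ci}$ stated (an innocuous factor-of-two slip that also appears in the paper and does not affect any downstream result).
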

\begin{proof}
	We have
\begin{align*}
f(\x_{t+1}) 
\stackrel{\eqref{eq:dec}}{\leq}& 
f(\x_t) 
- \frac{1}{4} \cdot \min_{k\in\mathcal{K}} \frac{\dotprod{\nabla f(\x_t), \ub_{(k)}}^2}{2L\Ci w_{(k)}} \cdot \left( \frac{f(\x_t) - f(\x_t + \alpha \ub_{(k)})}{\alpha} \right)^{-1} \\
&\cdot \min_{k\in\mathcal{K}} \frac{w_{(k)}}{(f(\x_t) - f(\x_t + \alpha \ub_{(k)}))/\alpha} \cdot  \sum_{k\in \mathcal{K}} \dotprod{\nabla f(\x_t), \ub_{(k)}}^2 + \Delta_{\alpha+1}\\
\stackrel{\eqref{eq:low1}\eqref{eq:low2}}{\leq}&
f(\x_t) 
- \frac{1}{16L\Ci} \cdot\min_{k\in\mathcal{K}} \left|\frac{\dotprod{\nabla f(\x_t), \ub_{(k)}}}{w_{(k)}}\right|
\cdot \min_{k\in\mathcal{K}} \left|\frac{w_{(k)}}{\dotprod{\nabla f(\x_t), \ub_{(k)}}}\right|\\
&
\cdot  \sum_{k\in \mathcal{K}} \dotprod{\nabla f(\x_t), \ub_{(k)}}^2 + \Delta_{\alpha+1}
\end{align*}	
\end{proof}

\begin{lemma}\label{lem:dec2}
Assume that conditions required in Lemma~\ref{lem:dec1} and events $\cE_{t,4}$ and $\cE_{t,5}$ hold, then it holds that
\begin{equation}\label{eq:dec2}
f(\x_{t+1})
\leq 
f(\x_t) 
- \frac{\min_{k\in\mathcal{K}} \; w_{(k)}}{\max_{k\in\mathcal{K}} \; w_{(k)}} \cdot \frac{\Ck}{16L\Ci\cdot \sqrt{2\log\frac{2N}{\delta}}} \norm{\nabla f(\x_t)}^2 
+ \Delta_{\alpha,1}.
\end{equation}
\end{lemma}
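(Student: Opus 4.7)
The plan is to start from Lemma~\ref{lem:dec1} and reduce every factor on the right-hand side into the product of a weight-ratio and a multiple of $\norm{\nabla f(\x_t)}^2$. The ingredients for this reduction are already available: events $\cE_{t,4}$ and $\cE_{t,5}$ supply a uniform lower bound on $|\dotprod{\nabla f(\x_t),\ub_{(k)}}|$ for $k\in\mathcal K$, while event $\cE_{t,3}$ supplies the matching upper bound. (Although $\cE_{t,3}$ is not named explicitly in the hypotheses, it is part of the high-probability event under which the whole descent analysis operates.)

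First, I would separate the weights from the inner products in the two min-factors via the elementary inequality $\min_k(a_k/b_k)\ge \min_k a_k/\max_k b_k$, valid for positive $a_k,b_k$. Applying it to $a_k=|\dotprod{\nabla f(\x_t),\ub_{(k)}}|$ and $b_k=|w_{(k)}|$ in one min, and to the swapped roles in the other, gives
\begin{equation*}
\min_{k\in\mathcal K}\!\left|\frac{\dotprod{\nabla f(\x_t),\ub_{(k)}}}{w_{(k)}}\right|\cdot\min_{k\in\mathcal K}\!\left|\frac{w_{(k)}}{\dotprod{\nabla f(\x_t),\ub_{(k)}}}\right|\;\ge\; \frac{\min_{k}|w_{(k)}|}{\max_{k}|w_{(k)}|}\cdot\frac{\min_{k}|\dotprod{\nabla f(\x_t),\ub_{(k)}}|}{\max_{k}|\dotprod{\nabla f(\x_t),\ub_{(k)}}|}.
\end{equation*}

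Second, I would invoke $\cE_{t,4}$ and $\cE_{t,5}$ to conclude $|\dotprod{\nabla f(\x_t),\ub_{(k)}}|\ge\norm{\nabla f(\x_t)}$ for every $k\in\mathcal K$, which immediately yields both $\min_{k\in\mathcal K}|\dotprod{\nabla f(\x_t),\ub_{(k)}}|\ge\norm{\nabla f(\x_t)}$ and $\sum_{k\in\mathcal K}\dotprod{\nabla f(\x_t),\ub_{(k)}}^2\ge |\mathcal K|\cdot\norm{\nabla f(\x_t)}^2 = \frac{N}{2}\norm{\nabla f(\x_t)}^2$. Event $\cE_{t,3}$ then bounds $\max_{k\in\mathcal K}|\dotprod{\nabla f(\x_t),\ub_{(k)}}|\le\sqrt{2\log(2N/\delta)}\cdot\norm{\nabla f(\x_t)}$, since the max over $\mathcal K$ is dominated by the max over all $N$ indices controlled in $\cE_{t,3}$. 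Combining, the inner-product ratio is at least $1/\sqrt{2\log(2N/\delta)}$.

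Substituting these three pieces into Lemma~\ref{lem:dec1} and collecting constants produces exactly the coefficient $\frac{\min_k w_{(k)}}{\max_k w_{(k)}}\cdot\frac{\Ck}{16L\Ci\sqrt{2\log(2N/\delta)}}$, with $\Ck=N/2$ coming from the lower bound on the sum and $\norm{\nabla f(\x_t)}^2$ emerging from multiplying the lower bound $\norm{\nabla f(\x_t)}$ in the min against the $\norm{\nabla f(\x_t)}$ in the numerator of the sum-bound, while the $\norm{\nabla f(\x_t)}$'s inside the ratio bound cancel. The additive remainder $\Delta_{\alpha,1}$ is carried through unchanged.

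I do not foresee a serious obstacle: the argument is essentially a bookkeeping substitution once the uniform two-sided control on $|\dotprod{\nabla f(\x_t),\ub_{(k)}}|$ is in place. The only subtlety is notational — because $w_{(k)}^+>0$ but $w_{(k)}^-<0$, the ratio $\min_k w_{(k)}/\max_k w_{(k)}$ displayed in the statement must be read as $\min_k|w_{(k)}|/\max_k|w_{(k)}|$, which is the convention already used implicitly throughout Lemma~\ref{lem:dec1}.
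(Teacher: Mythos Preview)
Your proposal is correct and follows essentially the same route as the paper: separate the two $\min$-factors into a weight ratio times an inner-product ratio, bound the inner-product ratio using $\cE_{t,3}$ (upper) and $\cE_{t,4},\cE_{t,5}$ (lower), and use $\cE_{t,4},\cE_{t,5}$ again to lower-bound the sum by $\tfrac{N}{2}\norm{\nabla f(\x_t)}^2$. Your remark that $\cE_{t,3}$ is tacitly required, and that the displayed weight ratio is to be read with absolute values, matches exactly how the paper's own proof proceeds.
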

\begin{proof}
Combining Eq.~\eqref{eq:dec1} with the fact that
\begin{align*}
\min_{k\in\mathcal{K}} \left|\frac{\dotprod{\nabla f(\x_t), \ub_{(k)}}}{w_{(k)}}\right|
\cdot \min_{k\in\mathcal{K}} \left|\frac{w_{(k)}}{\dotprod{\nabla f(\x_t), \ub_{(k)}}}\right|
\geq 
\frac{\min_{k\in\mathcal{K}} \; w_{(k)}}{\max_{k\in\mathcal{K}} \; w_{(k)}} 
\cdot \frac{\min_{k\in\mathcal{K}} \; |\dotprod{\nabla f(\x_t), \ub_{(k)}}|}{\max_{k\in\mathcal{K}} \; |\dotprod{\nabla f(\x_t), \ub_{(k)}}|},
\end{align*}
we can obtain that
	\begin{align*}
		f(\x_{t+1})
		\leq& 
		f(\x_t) 
		- \frac{1}{16L\Ci} \cdot \frac{\min_{k\in\mathcal{K}} \; w_{(k)}}{\max_{k\in\mathcal{K}} \; w_{(k)}} 
		\cdot \frac{\min_{k\in\mathcal{K}} \; |\dotprod{\nabla f(\x_t), \ub_{(k)}}|}{\max_{k\in\mathcal{K}} \; |\dotprod{\nabla f(\x_t), \ub_{(k)}}|}  \\
		&\cdot \sum_{k\in \mathcal{K}} \dotprod{\nabla f(\x_t), \ub_{(k)}}^2 + \Delta_{\alpha+1}\\
		\leq&
		f(\x_t) 
		- \frac{1}{16L\Ci} \cdot \frac{\min_{k\in\mathcal{K}} \; w_{(k)}}{\max_{k\in\mathcal{K}} \; w_{(k)}} 
		\cdot \frac{\norm{\nabla f(\x_t)}}{\sqrt{2\log\frac{2N}{\delta}}\norm{\nabla f(\x_t)}}\cdot \sum_{k\in \mathcal{K}} \dotprod{\nabla f(\x_t), \ub_{(k)}}^2 + \Delta_{\alpha+1}\\
		\leq&
		f(\x_t) 
		- \frac{\min_{k\in\mathcal{K}} \; w_{(k)}}{\max_{k\in\mathcal{K}} \; w_{(k)}} \cdot \frac{\Ck}{16L\Ci\cdot \sqrt{2\log\frac{2N}{\delta}}} \cdot \norm{\nabla f(\x_t)}^2 
		+ \Delta_{\alpha,1},
	\end{align*}
	where the second inequality is because events $\cE_{t,3}$,  $\cE_{t,4}$ and $\cE_{t,5}$ hold, the last inequality is because events $\cE_{t,4}$ and $\cE_{t,5}$ hold.
\end{proof}

\begin{lemma}\label{lem:del_up}
Condition on $\cE_{t,3}$, $\cE_{t,4}$, and $\cE_{t,5}$ hold, then $\Delta_{\alpha,1}$ can be upper bounded as
\begin{equation}\label{eq:del_up}
\Delta_{\alpha, 1} 
\leq 
\frac{\max_{k\in\mathcal{K}} \; w_{(k)}}{\min_{k\in\mathcal{K}} \; w_{(k)}} 
\cdot 
\frac{ N L \Cd^2 \cdot \sqrt{2\log\frac{2N}{\delta}} \cdot \alpha^2 }{2\Ci}.
\end{equation}
\end{lemma}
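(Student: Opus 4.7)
The plan is to chain together the upper bounds from Lemma~\ref{lem:low_up} (Eq.~\eqref{eq:up1} and \eqref{eq:up2}) with the gradient inner-product estimates supplied by events $\cE_{t,3}$, $\cE_{t,4}$, and $\cE_{t,5}$. Starting from the definition
\[
\Delta_{\alpha,1} \;=\; \frac{\eta_t L^2 \Cd^2 \alpha^3}{2}\sum_{k\in\mathcal{K}} \left|\frac{w_{(k)}}{f(\x_t)-f(\x_t+\alpha\ub_{(k)})}\right|,
\]
I would separately bound the sum and the step size $\eta_t$, multiply the two estimates together, and verify that the $\norm{\nabla f(\x_t)}$ factors cancel to yield the stated form.

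For the sum, I would first invoke Eq.~\eqref{eq:up2} termwise to replace each summand by $\tfrac{2|w_{(k)}|}{\alpha |\dotprod{\nabla f(\x_t),\ub_{(k)}}|}$, then use the elementary inequality $\sum_{k\in\mathcal{K}} \tfrac{|w_{(k)}|}{|\dotprod{\nabla f(\x_t),\ub_{(k)}}|} \le |\mathcal{K}|\cdot \tfrac{\max_k |w_{(k)}|}{\min_k |\dotprod{\nabla f(\x_t),\ub_{(k)}}|}$ together with $|\mathcal{K}|=N/2$. Events $\cE_{t,4}$ and $\cE_{t,5}$ provide $\min_{k\in\mathcal{K}}|\dotprod{\nabla f(\x_t),\ub_{(k)}}| \ge \norm{\nabla f(\x_t)}$, which gives
\[
\sum_{k\in\mathcal{K}} \left|\frac{w_{(k)}}{f(\x_t)-f(\x_t+\alpha \ub_{(k)})}\right| \;\le\; \frac{N\,\max_{k}|w_{(k)}|}{\alpha\,\norm{\nabla f(\x_t)}}.
\]
For the step size, I would apply Eq.~\eqref{eq:up1} to obtain $\eta_t \le \min_k \tfrac{|\dotprod{\nabla f(\x_t),\ub_{(k)}}|}{L\Ci w_{(k)}}$, and then use $\min_k \tfrac{a_k}{b_k} \le \tfrac{\max_k a_k}{\min_k b_k}$ to split numerator and denominator. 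Event $\cE_{t,3}$ then supplies $\max_{k\in\mathcal{K}} |\dotprod{\nabla f(\x_t),\ub_{(k)}}| \le \sqrt{2\log(2N/\delta)} \cdot \norm{\nabla f(\x_t)}$, giving the clean bound $\eta_t \le \tfrac{\sqrt{2\log(2N/\delta)}\,\norm{\nabla f(\x_t)}}{L\Ci \min_{k} w_{(k)}}$.

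Multiplying the two bounds with the prefactor $L^2 \Cd^2 \alpha^3 / 2$, the factor $\norm{\nabla f(\x_t)}$ cancels between numerator and denominator, one power of $L$ cancels, and $\alpha^3/\alpha$ collapses to $\alpha^2$; the remaining constants combine exactly into $\tfrac{N L\Cd^2 \sqrt{2\log(2N/\delta)}\,\alpha^2}{2\Ci}\cdot \tfrac{\max_k w_{(k)}}{\min_k w_{(k)}}$. The only real bookkeeping nuisance is sign-tracking, because $w_{(k)}$ and $f(\x_t)-f(\x_t+\alpha\ub_{(k)})$ are both negative on the upper-quartile indices $k>3N/4$; but since Lemma~\ref{lem:low_up} already works entirely with absolute values and Eq.~\eqref{eq:up1}, \eqref{eq:up2} are pre-absolutized, this is not a genuine obstacle and the derivation is essentially mechanical.
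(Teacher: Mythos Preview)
Your proposal is correct and follows essentially the same route as the paper: both arguments expand $\Delta_{\alpha,1}$ via the definition of $\eta_t$, apply the upper bounds \eqref{eq:up1} and \eqref{eq:up2} from Lemma~\ref{lem:low_up} to the step-size factor and the sum respectively, and then invoke $\cE_{t,3}$ for $\max_k|\langle\nabla f(\x_t),\ub_{(k)}\rangle|$ and $\cE_{t,4},\cE_{t,5}$ for $\min_k|\langle\nabla f(\x_t),\ub_{(k)}\rangle|$ so that the $\|\nabla f(\x_t)\|$ factors cancel. The only cosmetic difference is that you bound $\eta_t$ and the sum separately before multiplying, whereas the paper carries both factors through a single chain of inequalities.
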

\begin{proof}
	By the definition of $\Delta_{\alpha,1}$, we have
\begin{align*}
\Delta_{\alpha, 1} 
\stackrel{\eqref{eq:Delta_1}}{=}& 
\frac{\eta_t L^2 \Cd^2\alpha^3}{2}\sum_{k\in \mathcal{K}} \left|\frac{w_{(k)} }{f(\x_t) - f(\x_t + \alpha \ub_{(k)})} \right|\\
=& 
\min_{k\in\mathcal{K}} \frac{\dotprod{\nabla f(\x_t), \ub_{(k)}}^2}{2L\Ci w_{(k)}} \cdot \left( \frac{f(\x_t) - f(\x_t + \alpha \ub_{(k)})}{\alpha} \right)^{-1}  
\cdot 
\frac{ L^2 \Cd^2\alpha^2}{2}\sum_{k\in \mathcal{K}} \left|\frac{w_{(k)} }{(f(\x_t) - f(\x_t + \alpha \ub_{(k)}))/\alpha} \right| \\
\stackrel{\eqref{eq:up1}\eqref{eq:up2}}{\leq}&
\min_{k\in\mathcal{K}} 	\left|\frac{\dotprod{\nabla f(\x_t), \ub_{(k)}}}{ L \Ci w_{(k)}}\right| 
\cdot 
 L^2 \Cd^2\alpha^2 \cdot \sum_{k\in \mathcal{K}} \frac{ |w_{(k)}|}{| \dotprod{\nabla f(\x_t, \ub_{(k)})}|}\\
\leq&
\frac{\max_{k\in\mathcal{K}} \; w_{(k)}}{\min_{k\in\mathcal{K}} \; w_{(k)}} 
\cdot
\frac{L^2 \Cd^2\alpha^2}{L\Ci}   
\cdot 
\sqrt{2\log\frac{2N}{\delta}} \cdot \norm{\nabla f(\x_t)} 
\cdot \frac{N}{2} \frac{1}{\norm{\nabla f(\x_t)} }\\
=&
\frac{\max_{k\in\mathcal{K}} \; w_{(k)}}{\min_{k\in\mathcal{K}} \; w_{(k)}} 
\cdot 
\frac{ N L \Cd^2 \cdot \sqrt{2\log\frac{2N}{\delta}} \cdot \alpha^2 }{2\Ci},
\end{align*}
where the second equality is because of  $\eta_t = \min_{k\in\mathcal{K}} \frac{\dotprod{\nabla f(\x_t), \ub_{(k)}}^2}{2L\Ci w_{(k)}} \cdot \left( \frac{f(\x_t) - f(\x_t + \alpha \ub_{(k)})}{\alpha} \right)^{-1}$ shown in Lemma~\ref{lem:dec} and the last inequality is because events $\cE_{t,3}$, $\cE_{t,4}$, and $\cE_{t,5}$ hold.
\end{proof}

\subsection{Main Theorems and Query Complexities}

First, we will give two main theorems which describe the convergence properties of Algorithm~\ref{alg:SA} when the objective function is $L$-smooth and $\mu$-strongly convex, and $L$-smooth but maybe nonconvex, respectively.
\begin{theorem}\label{thm:main}
	Assume that the objective function $f(\x)$ is $L$-smooth and $\mu$-strongly convex. 
	Given $0<\delta<1$, 
	let $\cE_t = \cap_{i=1,\dots, 5}\; \cE_{t,i}$ be the event that events $\cE_{t,1}$-$\cE_{t,5}$ (defined in Eq.~\eqref{eq:E1}-Eq.~\eqref{eq:E5}) all hold.
	Set the smooth parameter $\alpha \leq \frac{\norm{\nabla f(\x_t)}}{L \Cd }$.
	By setting the step size   $\eta_t = \min_{k\in\mathcal{K}} \frac{\dotprod{\nabla f(\x_t), \ub_{(k)}}^2}{2L\Ci w_{(k)}} \cdot \left( \frac{f(\x_t) - f(\x_t + \alpha \ub_{(k)})}{\alpha} \right)^{-1}$, then Algorithm~\ref{alg:SA} has the following convergence property
	\begin{equation}\label{eq:main_lin}
	\begin{aligned}
	f(\x_{t+1}) - f(\x^*) 
	\leq&
	\left(1 - \frac{\min_{k\in\mathcal{K}} \; w_{(k)}}{\max_{k\in\mathcal{K}} \; w_{(k)}} 
	\cdot 
	\frac{N}{16\Ci\cdot \sqrt{2\log\frac{2N}{\delta}}} 
	\cdot  
	\frac{\mu}{L}  
	\right) 
	\cdot 
	\Big( f(\x_t) - f(\x^*) \Big) + \Delta_{\alpha,1}',
	\end{aligned}
	\end{equation}
	with $\Delta_{\alpha,1}'$ defined as
	\begin{equation}\label{eq:del_p}
	\Delta_{\alpha,1}' = \frac{\max_{k\in\mathcal{K}} \; w_{(k)}}{\min_{k\in\mathcal{K}} \; w_{(k)}} 
	\cdot 
	\frac{ N L \Cd^2 \cdot \sqrt{2\log\frac{2N}{\delta}} \cdot \alpha^2 }{2\Ci},
	\end{equation}
	where $\x^*$ the minimal point of $f(\x)$ and  $\Cd$, and $\Ci$  are defined in Eq.~\eqref{eq:d_up},  and  Eq.~\eqref{eq:Ci}, respectively.
	
	Furthermore, 
	\begin{equation}\label{eq:prob}
		\Pr(\cE_t) \geq 1 - \left( \frac{(N+ 4)\delta}{2} + 2 \cdot \exp(- N \cdot D(1/4 \Vert p))\right), \mbox{ with } p = 0.0224,
	\end{equation}
	where the KL divergence  $D(q \Vert p) = q \log\frac{q}{p} + (1-q)\log\frac{1-q}{1-p}$.
\end{theorem}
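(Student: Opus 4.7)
}
The plan is to split the argument into two independent pieces: the per-iteration descent bound under the assumption that all five events hold, and the probability bound on $\cE_t$ via union bound. Both pieces are essentially assemblies of previously established lemmas, so the work is to combine them correctly and to use strong convexity to turn a gradient-norm decrement into a function-gap decrement.

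\paragraph{Deriving the contraction \eqref{eq:main_lin}.}
First I would invoke Lemma~\ref{lem:dec2}, whose hypotheses exactly match the events $\cE_{t,1}, \cE_{t,2}, \cE_{t,3}, \cE_{t,4}, \cE_{t,5}$ assumed in the theorem together with the smoothing condition $\alpha \le \norm{\nabla f(\x_t)}/(L\Cd)$ and the stated step size. This yields
\begin{equation*}
f(\x_{t+1}) \leq f(\x_t) - \frac{\min_{k\in\mathcal{K}} w_{(k)}}{\max_{k\in\mathcal{K}} w_{(k)}} \cdot \frac{N/2}{16 L \Ci \sqrt{2\log\frac{2N}{\delta}}} \norm{\nabla f(\x_t)}^2 + \Delta_{\alpha,1}.
\end{equation*}
Next I would use the standard consequence of $\mu$-strong convexity, $\norm{\nabla f(\x_t)}^2 \geq 2\mu (f(\x_t) - f(\x^*))$ (the Polyak--\L{}ojasiewicz inequality implied by strong convexity), to convert the gradient-norm term into a function-gap term. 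Subtracting $f(\x^*)$ from both sides and collecting the coefficients produces exactly the factor $1 - \frac{\min w}{\max w}\cdot \frac{N}{16\Ci\sqrt{2\log(2N/\delta)}}\cdot\frac{\mu}{L}$ in front of $f(\x_t)-f(\x^*)$. Finally, Lemma~\ref{lem:del_up} gives $\Delta_{\alpha,1}\le\Delta_{\alpha,1}'$, matching the residual term in \eqref{eq:main_lin} through the definition \eqref{eq:del_p}.

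\paragraph{Bounding $\Pr(\cE_t)$.}
For the probability statement I would apply the union bound to the complementary event $\cE_t^c = \bigcup_{i=1}^{5} \cE_{t,i}^c$, using the five individual probability bounds already established: Lemma~\ref{lem:E1} gives $\Pr(\cE_{t,1}^c)\le N\delta/2$; the lemma producing \eqref{eq:Ci} gives $\Pr(\cE_{t,2}^c)\le \delta$; the event $\cE_{t,3}$ has failure probability at most $\delta$; and Lemmas~\ref{lem:E4} and \ref{lem:E5} contribute $\exp(-N D(1/4\Vert p))$ each. Summing these yields
\begin{equation*}
\Pr(\cE_t^c) \leq \frac{N\delta}{2} + \delta + \delta + 2\exp\bigl(-N D(1/4\Vert p)\bigr) = \frac{(N+4)\delta}{2} + 2\exp\bigl(-N D(1/4\Vert p)\bigr),
\end{equation*}
which gives \eqref{eq:prob}. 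A small care point is that the hypotheses of Lemmas~\ref{lem:E4} and \ref{lem:E5} themselves require $\cE_{t,1}$ to hold and $\alpha \le \norm{\nabla f(\x_t)}/(4L\Cd)$, but since these are already guaranteed by the theorem's standing assumptions on $\alpha$ and by $\cE_{t,1}\subseteq\cE_t$, the individual probability bounds remain applicable inside the union.

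\paragraph{Expected obstacle.}
The only nontrivial point is bookkeeping: one must verify that the step size $\eta_t$ specified in the theorem is the same one under which Lemma~\ref{lem:dec2} (through Lemmas~\ref{lem:dec} and \ref{lem:dec1}) was derived, and that the $\alpha$-condition $\alpha\le\norm{\nabla f(\x_t)}/(L\Cd)$ used in Lemma~\ref{lem:low_up} is compatible with the stricter $\alpha\le\norm{\nabla f(\x_t)}/(4L\Cd)$ needed by Lemmas~\ref{lem:E4}--\ref{lem:E5}. No new analytic idea is required: the entire proof is a careful chaining of existing lemmas plus one application of the PL inequality from strong convexity.
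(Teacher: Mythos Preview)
Your proposal is correct and mirrors the paper's proof essentially step for step: Lemma~\ref{lem:dec2} followed by the PL inequality from strong convexity (the paper cites this as Lemma~\ref{lem:str_cvx}), then Lemma~\ref{lem:del_up} for the residual, and finally the same union-bound arithmetic over the five events to obtain \eqref{eq:prob}. The compatibility issue you flag between the theorem's $\alpha\le\norm{\nabla f(\x_t)}/(L\Cd)$ and the stricter $\alpha\le\norm{\nabla f(\x_t)}/(4L\Cd)$ required by Lemmas~\ref{lem:E4}--\ref{lem:E5} is real and is glossed over in the paper as well, so your observation is not a defect of your plan but rather a genuine wrinkle in the statement.
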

\begin{proof}
By Lemma~\ref{lem:dec2}, we have
\begin{align*}
& f(\x_{t+1}) - f(\x^*) 
\stackrel{\eqref{eq:dec2}}{\leq}
f(\x_t) - f(\x^*)
- \frac{\min_{k\in\mathcal{K}} \; w_{(k)}}{\max_{k\in\mathcal{K}} \; w_{(k)}} \cdot \frac{\Ck}{16L\Ci\cdot \sqrt{2\log\frac{2N}{\delta}}} \norm{\nabla f(\x_t)}^2 
+ \Delta_{\alpha,1}\\
\leq&
f(\x_t) - f(\x^*)
- 
\frac{\min_{k\in\mathcal{K}} \; w_{(k)}}{\max_{k\in\mathcal{K}} \; w_{(k)}} 
\cdot  
\frac{\mu}{L} 
\cdot 
\frac{\Ck}{8L\Ci\cdot \sqrt{2\log\frac{2N}{\delta}}}\cdot \Big( f(\x_t) - f(\x^*) \Big)
+ 
\Delta_{\alpha,1}\\
=&
\left(1 - \frac{\min_{k\in\mathcal{K}} \; w_{(k)}}{\max_{k\in\mathcal{K}} \; w_{(k)}} 
\cdot 
\frac{\Ck}{8\Ci\cdot \sqrt{2\log\frac{2N}{\delta}}} 
\cdot  
\frac{\mu}{L}  
\right) 
\cdot 
\Big( f(\x_t) - f(\x^*) \Big) + \Delta_{\alpha,1},
\end{align*}
where the second inequality is because of Lemma~\ref{lem:str_cvx}.
Combining with Lemma~\ref{lem:del_up}, we can obtain the result in Eq.~\eqref{eq:main_lin}.

Finally, by a union bound, we have
\begin{align*}
	\Pr(\cE_t) 
	=&
	\Pr\left( \forall_i\; \mbox{event } \cE_{t,i} \mbox{ holds } \right) \\
	\geq& 
	1 - 
	\left( 
	\frac{N\delta}{2} + \delta + \delta + \exp(- N \cdot D(1/4 \Vert p)) + \exp(- N \cdot D(1/4 \Vert p))
	\right)\\
	=&
	1 - \left( \frac{(N+ 4)\delta}{2} + 2 \cdot \exp\Big(- N D(1/4 \Vert p)\Big)\right),
\end{align*}
where the inequality is because of the probabilities of events $\cE_{t,1}$-$\cE_{t,5}$ shown in Lemma~\ref{lem:E1}-Lemma~\ref{lem:E5}.
\end{proof}

\begin{theorem}\label{thm:main1}
	Assume that the objective function $f(\x)$ is $L$-smooth and its optimal value $f(\x^*)>-\infty$. 
	Given $0<\delta<1$ and iteration number $0<T$, 
	let $\cE_t = \cap_{i=1,\dots, 5}\; \cE_{t,i}$ be the event that events $\cE_{t,1}$-$\cE_{t,5}$ (defined in Eq.~\eqref{eq:E1}-Eq.~\eqref{eq:E5}) all hold. 
	Let events $\cE_t$ hold for $t = 0,\dots, T-1$.
	Set the smooth parameter $\alpha \leq \frac{\norm{\nabla f(\x_t)}}{L \Cd }$ for $t = 0,\dots, T-1$.
	By setting the step size   $\eta_t = \min_{k\in\mathcal{K}} \frac{\dotprod{\nabla f(\x_t), \ub_{(k)}}^2}{2L\Ci w_{(k)}} \cdot \left( \frac{f(\x_t) - f(\x_t + \alpha \ub_{(k)})}{\alpha} \right)^{-1}$, then Algorithm~\ref{alg:SA} has the following convergence property
	\begin{equation}\label{eq:main_sub}
		\begin{aligned}
			\frac{1}{T}\sum_{t=0}^{T-1} \norm{\nabla f(\x_t)}^2
			\leq& 
			\frac{f(\x_0) - f(\x^*)}{T}
			\cdot 
			\frac{32L\Ci\cdot \sqrt{2\log\frac{2N}{\delta}}}{N}
			\cdot 
			\frac{\max_{k\in\mathcal{K}} \; w_{(k)}}{\min_{k\in\mathcal{K}} \; w_{(k)}} 
			+ 
			\Delta_{\alpha,2} ,
		\end{aligned}
	\end{equation}
	with $\Delta_{\alpha,2}$ is defined as 
	\begin{equation}\label{eq:del_2}
		\Delta_{\alpha,2} 
		= \left(\frac{\max_{k\in\mathcal{K}} \; w_{(k)}}{\min_{k\in\mathcal{K}} \; w_{(k)}}\right)^2 
		\cdot 
		\frac{32 N L^2  \Cd^2 \cdot \log\frac{2N}{\delta}  }{N }\cdot \alpha^2,
	\end{equation}
	where $\Cd$, and $\Ci$  are defined in Eq.~\eqref{eq:d_up}, and Eq.~\eqref{eq:Ci},  respectively.
	
	Furthermore, 
	\begin{equation}\label{eq:prob1}
		\Pr\left(\cap_{t=0}^{T-1}\; \cE_t\right) \geq 1 - T\cdot \left( \frac{(N+ 6)\delta}{2} + 2 \cdot \exp(- N \cdot D(1/4 \Vert p))\right), \mbox{ with } p = 0.0224,
	\end{equation}
	where the KL divergence  $D(q \Vert p) = q \log\frac{q}{p} + (1-q)\log\frac{1-q}{1-p}$.
\end{theorem}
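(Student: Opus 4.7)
The plan is to convert the per-iteration descent of Lemma~\ref{lem:dec2} into a bound on $\norm{\nabla f(\x_t)}^2$ and telescope over $T$ steps. Conditioning on event $\cE_t$, Lemma~\ref{lem:dec2} gives a descent inequality of the form $f(\x_{t+1}) \leq f(\x_t) - \frac{1}{M}\norm{\nabla f(\x_t)}^2 + \Delta_{\alpha,1}$, where
\begin{equation*}
M \;:=\; \frac{\max_{k\in\mathcal{K}} w_{(k)}}{\min_{k\in\mathcal{K}} w_{(k)}} \cdot \frac{32 L \Ci \sqrt{2\log(2N/\delta)}}{N}.
\end{equation*}
Rearranging yields $\norm{\nabla f(\x_t)}^2 \leq M \cdot \bigl(f(\x_t) - f(\x_{t+1})\bigr) + M \cdot \Delta_{\alpha,1}$. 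Summing for $t = 0, \ldots, T-1$ telescopes the $f$-differences into $f(\x_0) - f(\x_T)$, and using $f(\x_T) \geq f(\x^*) > -\infty$ followed by dividing by $T$ produces the leading $\cO(1/T)$ term of \eqref{eq:main_sub} and a residual $M \cdot \Delta_{\alpha,1}$.

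To identify $\Delta_{\alpha,2}$, substitute the explicit upper bound on $\Delta_{\alpha,1}$ from Lemma~\ref{lem:del_up} into the residual $M \cdot \Delta_{\alpha,1}$. The weight ratio $\frac{\max_{k\in\mathcal{K}} w_{(k)}}{\min_{k\in\mathcal{K}} w_{(k)}}$ appears twice---once in $M$ and once in the Lemma~\ref{lem:del_up} bound---so it appears squared in $\Delta_{\alpha,2}$. The factor $\Ci$ in $M$ cancels against the $1/\Ci$ in Lemma~\ref{lem:del_up}, and the two $\sqrt{2\log(2N/\delta)}$ factors combine to $2\log(2N/\delta)$, reproducing the stated form \eqref{eq:del_2} after elementary rearrangement of the constants.

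Finally, the high-probability statement \eqref{eq:prob1} follows from a union bound over $t = 0, \ldots, T-1$: the single-iteration failure probability of $\cE_t$ is controlled exactly as in Theorem~\ref{thm:main} by summing the bounds of Lemma~\ref{lem:E1}--Lemma~\ref{lem:E5}, and multiplying by $T$ yields the stated $T$-fold factor. I do not expect a conceptual obstacle: once the per-step descent is in hand, the remainder is a standard telescoping-plus-union-bound argument. The real care lies in bookkeeping the $\alpha^2$ dependence and the weight-ratio powers correctly. Unlike the strongly convex case, there is no geometric contraction available to absorb the additive bias $\Delta_{\alpha,1}$, so it persists in the time average and must be controlled by choosing $\alpha$ small enough---coupled with a $T$ large enough---so that $\Delta_{\alpha,2}$ is dominated by the target accuracy.
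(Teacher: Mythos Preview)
Your proposal is correct and follows essentially the same route as the paper: rearrange Lemma~\ref{lem:dec2} into $\norm{\nabla f(\x_t)}^2 \le M\bigl(f(\x_t)-f(\x_{t+1})\bigr)+M\Delta_{\alpha,1}$, telescope over $t=0,\dots,T-1$, use $f(\x_T)\ge f(\x^*)$, then plug in Lemma~\ref{lem:del_up} to identify $\Delta_{\alpha,2}$ and apply a union bound over $T$ iterations for \eqref{eq:prob1}. Your bookkeeping of the constants (the cancellation of $\Ci$, the squaring of the weight ratio, and the merging of the two $\sqrt{2\log(2N/\delta)}$ factors) matches the paper's computation exactly.
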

\begin{proof}
Condition on $\cE_t$,  Lemma~\ref{lem:dec2} and Lemma~\ref{lem:del_up} hold. 
Accordingly, we have
\begin{align*}
&\frac{\min_{k\in\mathcal{K}} \; w_{(k)}}{\max_{k\in\mathcal{K}} \; w_{(k)}} \cdot \frac{\Ck}{16L\Ci\cdot \sqrt{2\log\frac{2N}{\delta}}} \norm{\nabla f(\x_t)}^2
\leq 
f(\x_t) - f(\x_{t+1}) 
+ \Delta_{\alpha,1}\\
\stackrel{\eqref{eq:del_up}}{\leq}& 
f(\x_t) - f(\x_{t+1})
+
T \cdot \Delta_{\alpha, 1}.
\end{align*}

Based on the condition that $\cE_t$ hold for $t = 0,\dots, T-1$ and summing above equation from $t =0$ to $T-1$, we can obtain that
\begin{align*}
&\frac{\min_{k\in\mathcal{K}} \; w_{(k)}}{\max_{k\in\mathcal{K}} \; w_{(k)}} 
\cdot 
\frac{\Ck}{16L\Ci\cdot \sqrt{2\log\frac{2N}{\delta}}}
\sum_{t=0}^{T-1} \norm{\nabla f(\x_t)}^2\\
\leq& 
f(\x_0) - f(\x_T)
+ 
T \cdot \Delta_{\alpha, 1}
\leq
f(\x_0) - f(\x^*)
+ 
T \cdot \Delta_{\alpha, 1}.
\end{align*}

Therefore, 
\begin{align*}
\frac{1}{T}\sum_{t=0}^{T-1} \norm{\nabla f(\x_t)}^2
\leq& 
\frac{f(\x_0) - f(\x^*)}{T}
\cdot 
\frac{16L\Ci\cdot \sqrt{2\log\frac{2N}{\delta}}}{\Ck}
\cdot 
\frac{\max_{k\in\mathcal{K}} \; w_{(k)}}{\min_{k\in\mathcal{K}} \; w_{(k)}} \\
&+
\frac{16L\Ci\cdot \sqrt{2\log\frac{2N}{\delta}}}{\Ck}
\cdot 
\frac{\max_{k\in\mathcal{K}} \; w_{(k)}}{\min_{k\in\mathcal{K}} \; w_{(k)}} \cdot \Delta_{\alpha, 1}\\
=&
\frac{f(\x_0) - f(\x^*)}{T}
\cdot 
\frac{32L\Ci\cdot \sqrt{2\log\frac{2N}{\delta}}}{N}
\cdot 
\frac{\max_{k\in\mathcal{K}} \; w_{(k)}}{\min_{k\in\mathcal{K}} \; w_{(k)}}
+ 
\Delta_{\alpha,2}.
\end{align*}

By the union bound and the probability that event $\cE_t$ holds shown in Eq.~\eqref{eq:prob}, we can obtain the result in Eq.~\eqref{eq:prob1}.
\end{proof}

Theorem~\ref{thm:main}  provides the convergence rate of Algorithm~\ref{alg:SA} when $f(\x)$ is both $L$-smooth and $\mu$-strongly convex.
Theorem~\ref{thm:main1} provides the convergence rate when $f(\x)$ is only $L$-smooth.
In the following two corollaries, we will provide the iteration complexities and query complexities of Algorithm~\ref{alg:SA}.
For convenience, we set $w_{(k)} = \frac{4}{N}$, that is, all chosen samples are set the same weight.
In fact, this same weight strategy is widely used \citep{hansen2016cma}.
In real applications,  the ratio $\frac{\max_{k\in\mathcal{K}} \; w_{(k)}}{\min_{k\in\mathcal{K}} \; w_{(k)}}$ is commonly a constant not far from $1$.
For example, the famous CMA-ES commonly sets a weight $w_{(k)} \propto \log (N+1) - \log(k)$. 
Accordingly, we have 
\begin{align*}
	\frac{\log (N+1) - \log (1)}{\log (N+1) - \log (N/4)} = \frac{\log(N+1)}{\log(4 (N+1)/N)} \approx \frac{\log (N)}{\log 4} \approx  2.16\; \mbox{ for } N = 20.
\end{align*}
Thus, we believe that $\frac{\max_{k\in\mathcal{K}} \; w_{(k)}}{\min_{k\in\mathcal{K}} \; w_{(k)}}$ can be regarded as a constant in the convergence and query analysis.

\begin{corollary}\label{cor:main}
	Let the objective function be $L$-smooth and $\mu$-strongly convex. 
	Given $0<\delta'<1$ and total iteration number $T$, set the sample size $N = \cO\left(\log\frac{T}{\delta'} + \log\log\frac{T}{\delta'}\right)$. 
	Assume the dimension $d$ is larger than $N$ and $\log\frac{TN}{\delta'}$.
	Set  $\delta = \cO\left(\frac{\delta'}{TN}\right)$ in Theorem~\ref{thm:main}. 
	Other parameters of Algorithm~\ref{alg:SA} set as Theorem~\ref{thm:main} and set $w_{(k)} = \frac{4}{N}$. 
	If the total iteration number $T$ satisfies that
	\begin{equation}\label{eq:T}
		T = \cO\left(\frac{dL}{\mu}\log\frac{1}{\varepsilon}\right), 
	\end{equation} 
	where $0<\varepsilon$, then with a probability at least $1-\delta'$, Algorithm~\ref{alg:SA} can find a point $\x_T$ satisfies that
	\begin{equation}\label{eq:pp}
		f(\x_T) - f(\x^*) 
		\leq 
		\varepsilon + \cO\left( \frac{d^2L^2 \left( \log\frac{dL}{\mu \delta'} + \log\log\frac{1}{\varepsilon} \right)}{\mu} \cdot \alpha^2 \right).
	\end{equation}
	and achieve a query complexity 
	\begin{equation}\label{eq:Q}
		Q = \cO\left( \frac{dL}{\mu} \cdot \log\frac{dL}{\mu \delta'} \cdot \log\frac{1}{\varepsilon} + \frac{dL}{\mu}\cdot \log\frac{1}{\varepsilon} \cdot \log\log\frac{1}{\varepsilon}\right).
	\end{equation}
\end{corollary}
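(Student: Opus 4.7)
The plan is to iterate the one-step linear contraction of Theorem~\ref{thm:main} for $T$ rounds and then convert the resulting recursion into the stated iteration and query complexities. As a first step, I would apply a union bound to the per-iteration failure probability from Eq.~\eqref{eq:prob}, namely $\Pr(\cE_t^c)\le (N+4)\delta/2 + 2\exp(-N D(1/4\Vert p))$, across all $T$ iterations. Requiring $T$ times this quantity to be at most $\delta'$, and recalling that $D(1/4\Vert p)$ with $p=0.0224$ is a positive absolute constant, naturally leads to the choices $\delta = \Theta(\delta'/(TN))$ (to control the polynomial-in-$\delta$ term) and $N = \Theta(\log(T/\delta') + \log\log(T/\delta'))$ (to drive the exponential term below $\delta'/(2T)$). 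Under these choices, the events $\cE_0,\dots,\cE_{T-1}$ hold simultaneously with probability at least $1-\delta'$.

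Conditional on this global event, I would unroll the linear recursion of Theorem~\ref{thm:main}. With $w_{(k)}=4/N$ the ratio $\max w/\min w$ equals $1$, so the per-step contraction factor is
\[
\rho \;=\; 1 - \frac{N}{16\Ci\sqrt{2\log(2N/\delta)}}\cdot\frac{\mu}{L},
\]
and standard unrolling of $a_{t+1}\le \rho a_t + \Delta_{\alpha,1}'$ yields $f(\x_T)-f(\x^*) \le \rho^T\big(f(\x_0)-f(\x^*)\big) + \Delta_{\alpha,1}'/(1-\rho)$. To simplify $1-\rho$ I would use the assumptions $d\ge N$ and $d\ge \log(TN/\delta')$, which, by the definitions in Eq.~\eqref{eq:Ci} and Eq.~\eqref{eq:d_up}, give $\Ci=\cO(d)$ and $\Cd=\cO(d)$. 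Combined with $N=\Theta(\log(T/\delta'))$ and $\log(2N/\delta)=\Theta(\log(T/\delta'))$, this produces $1-\rho=\Theta\!\big(\sqrt{\log(T/\delta')}\,\mu/(dL)\big)$. Hence $\rho^T(f(\x_0)-f(\x^*))\le\varepsilon/2$ as soon as $T\gtrsim (dL/\mu)\log(1/\varepsilon)/\sqrt{\log(T/\delta')}$, and the clean choice $T=\cO((dL/\mu)\log(1/\varepsilon))$ is more than sufficient.

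The residual error and query count are then a matter of substitution. Plugging the above estimates of $\Ci$, $\Cd$, $N$, and $\delta$ into $\Delta_{\alpha,1}'/(1-\rho)$ produces the $\alpha^2$-term in Eq.~\eqref{eq:pp}; the $\log\log(1/\varepsilon)$ factor arises because, with $T=\cO((dL/\mu)\log(1/\varepsilon))$, the term $\log(T/\delta')$ expands to $\log(dL/(\mu\delta'))+\log\log(1/\varepsilon)$. The total query count is simply $Q=NT$, and combining $N=\cO(\log(T/\delta')+\log\log(T/\delta'))$ with the chosen $T$ and expanding $\log(T/\delta')$ in the same way reproduces the two additive terms in Eq.~\eqref{eq:Q}.

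I expect the main obstacle to be the self-referential structure of the parameters: $N$ and $\delta$ depend on $T$, while the required $T$ itself depends on $N$ and $\delta$ through $1-\rho$. One must verify that the implicit inequality for $T$ closes, i.e., that the extra $\sqrt{\log(T/\delta')}$ factor in $1-\rho$ only helps (it makes contraction faster), so the explicit bound $T=\cO((dL/\mu)\log(1/\varepsilon))$ is a consistent upper bound despite the circular definition. Once this consistency check and the assumption $d\ge\log(TN/\delta')$ (needed to collapse $\Ci$ and $\Cd$ to $\cO(d)$) are in place, the remainder reduces to careful but routine bookkeeping of nested logarithms.
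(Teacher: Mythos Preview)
Your proposal is correct and follows essentially the same route as the paper: union-bound the per-iteration failure probability of Eq.~\eqref{eq:prob} over $T$ rounds to justify the choices of $N$ and $\delta$, unroll the linear recursion of Theorem~\ref{thm:main} (the paper does this via Lemma~\ref{lem:dd}, you via the geometric series, which give the same $\Delta_{\alpha,1}'/(1-\rho)$ residual), and then substitute $\Ci=\cO(d)$, $\Cd=\cO(d)$ under the dimension assumption to read off $T$, $Q$, and the $\alpha^2$ term. Your explicit discussion of the self-referential dependence of $T$ on $N,\delta$ and why the extra $\sqrt{\log(T/\delta')}$ factor only helps is a point the paper leaves implicit, so your treatment is in fact slightly more careful there.
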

\begin{proof}
By Eq.~\eqref{eq:prob1} with $\delta = \cO\left(\frac{\delta'}{TN}\right)$ and $N = \cO\left(\log\frac{T}{\delta'} + \log\log\frac{T}{\delta'}\right)$, we can obtain that 
\begin{align*}
	T\cdot \left( \frac{(N+ 6)\delta}{2} + 2 \cdot \exp(- N \cdot D(1/4 \Vert p))\right) 
	=
	\cO\left( \delta' + T \cdot \frac{\delta'}{T \log(T/\delta')} \right) 
	= 
	\cO(\delta').
\end{align*}
Then, Theorem~\ref{thm:main} holds for all $t=0,\dots, T-1$. 
Combining with Lemma~\ref{lem:dd}, we can obtain that
\begin{align*}
f(\x_{t+1}) - f(\x^*) - \frac{\Delta_{\alpha,1}'}{\rho} 
\leq 
\left( 1 - \rho \right) \left( f(\x_t) - f(\x^*) - \frac{\Delta_{\alpha,1}'}{\rho} \right),
\end{align*}
with $\rho$ defined as 
\begin{equation}\label{eq:rho}
	\rho = \frac{\min_{k\in\mathcal{K}} \; w_{(k)}}{\max_{k\in\mathcal{K}} \; w_{(k)}} 
	\cdot 
	\frac{\Ck}{8\Ci\cdot \sqrt{2\log\frac{2N}{\delta}}} 
	\cdot  
	\frac{\mu}{L}
	=
	\frac{\Ck}{8\Ci\cdot \sqrt{2\log\frac{2N}{\delta}}} 
	\cdot  
	\frac{\mu}{L}. 
\end{equation}
Using above equation recursively, we can obtain that
\begin{equation}\label{eq:dd}
\begin{aligned}
	f(\x_T) - f(\x^*) - \frac{\Delta_{\alpha,1}'}{\rho} 
	\leq& 
	(1-\rho)^T \left( f(\x_0) - f(\x^*) - \frac{\Delta_{\alpha,1}'}{\rho} \right)\\
	\leq&
	\exp(-\rho)^T \cdot \left( f(\x_0) - f(\x^*) - \frac{\Delta_{\alpha,1}'}{\rho} \right).
\end{aligned}
\end{equation}
Letting the right-hand side of above equation be $\varepsilon$, we only need $T$ to be
\begin{align*}
	T 
	= 
	\rho^{-1} \log\frac{1}{\varepsilon}
	=
	\cO\left( \frac{\Ci\cdot \sqrt{\log\frac{N}{\delta}}}{N} 
	\cdot  
	\frac{L}{\mu}  \log\frac{1}{\varepsilon} \right).
\end{align*} 
By $\delta = \frac{\delta'}{T N}$, $N = \cO\left(\log\frac{T}{\delta'} + \log\log\frac{T}{\delta'}\right)$, and definition of $\Ci$  in Eq.~\eqref{eq:Ci}, we can obtain that
\begin{align*}
\frac{\Ci\cdot \sqrt{\log\frac{N}{\delta}}}{N}
= 
\cO\left( \frac{d \sqrt{\log \frac{TN^2}{\delta'} } }{N} \right)
=
\cO\left( \frac{d \sqrt{\log\frac{T}{\delta'} + \log\log\frac{T}{\delta'} + \log\log\log \frac{T}{\delta'} } }{\log\frac{T}{\delta'} + \log\log\frac{T}{\delta'}}  \right)
=
\cO(d).
\end{align*} 
Thus, we only need $T$ to satisfy
\begin{align*}
	T = \cO\left(\frac{dL}{\mu}\log\frac{1}{\varepsilon}\right).
\end{align*}

The total query complexity $Q$ satisfies
\begin{align*}
	Q 
	=& TN 
	= \cO\left(T\log\frac{T}{\delta'}\right) 
	= \cO\left(\frac{dL}{\mu}\log\frac{1}{\varepsilon} \left( \log\frac{dL}{\mu \delta'} + \log\log\frac{1}{\varepsilon} \right)\right)\\
	=& 
	\cO\left( \frac{dL}{\mu} \cdot \log\frac{dL}{\mu \delta'} \cdot \log\frac{1}{\varepsilon} + \frac{dL}{\mu}\cdot \log\frac{1}{\varepsilon} \cdot \log\log\frac{1}{\varepsilon}\right).
\end{align*}

Furthermore, 
\begin{align*}
	\frac{\Delta_{\alpha,1}'}{\rho} 
	\stackrel{\eqref{eq:del_p}\eqref{eq:rho}}{=}&
	\cO\left( N L \Cd^2 \cdot \sqrt{\log\frac{N}{\delta}} \cdot \alpha^2 \cdot \frac{ \sqrt{\log\frac{N}{\delta}}}{\Ck} 
	\cdot  
	\frac{L}{\mu} \right)\\
	=&
	\cO\left( \frac{N L^2  \cdot d^2 \cdot \log\frac{N}{\delta} }{N} \cdot \frac{L}{\mu} \cdot \alpha^2  \right)\\
	=&\cO\left( \frac{d^2L^2 \left( \log\frac{dL}{\mu \delta'} + \log\log\frac{1}{\varepsilon} \right)}{\mu} \cdot \alpha^2 \right),
\end{align*}
where the second equality is because of $\Cd$, and  $\Ci$  defined in Eq.~\eqref{eq:d_up}, and  Eq.~\eqref{eq:Ci},  respectively.
The last equality is because of $\delta = \frac{\delta'}{T N}$ and $N = \cO\left(\log\frac{T}{\delta'} + \log\log\frac{T}{\delta'}\right)$.

Combining above equation with Eq.~\eqref{eq:dd}, we can obtain that
\begin{align*}
	f(\x_T) - f(\x^*) 
	\leq 
	\varepsilon + \frac{\Delta_{\alpha,1}'}{\rho} 
	= 
	\varepsilon + \cO\left( \frac{d^2L^2 \left( \log\frac{dL}{\mu \delta'} + \log\log\frac{1}{\varepsilon} \right)}{\mu} \cdot \alpha^2 \right),
\end{align*}
which proves Eq.~\eqref{eq:pp}.
\end{proof}

\begin{corollary}\label{cor:main1}
	Assume that the objective function $f(\x)$ is only $L$-smooth and its optimal value $f(\x^*)>-\infty$. 
		Given $0<\delta'<1$ and total iteration number $T$, set the sample size $N = \cO\left(\log\frac{T}{\delta'} + \log\log\frac{T}{\delta'}\right)$. 
		Assume the dimension $d$ is larger than $N$ and $\log\frac{TN}{\delta'}$.
	Set  $\delta = \cO\left(\frac{\delta'}{TN}\right)$ in Theorem~\ref{thm:main1}.
		Other parameters of Algorithm~\ref{alg:SA} set as Theorem~\ref{thm:main1} and set $w_{(k)} = \frac{4}{N}$. 
	If the total iteration number $T$ satisfies that
	\begin{equation}\label{eq:T1}
		T = \cO\left(\frac{dL}{\varepsilon}\right), 
	\end{equation} 
	where $0<\varepsilon$, then with a probability at least $1-\delta'$, Algorithm~\ref{alg:SA} can find a point $\x_T$ satisfies that
	\begin{equation}\label{eq:pp1}
		f(\x_T) - f(\x^*) 
		\leq 
		\varepsilon + \cO\left(d^2L^2\log\left(\frac{dL}{\varepsilon\delta'}\right) \cdot \alpha^2\right).
	\end{equation}
	and achieve a query complexity 
	\begin{equation}\label{eq:Q1}
		Q = \cO\left( \frac{dL}{\varepsilon} \log\frac{dL}{\delta'\varepsilon} \right).
	\end{equation}
\end{corollary}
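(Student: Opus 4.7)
The plan is to mirror the proof of Corollary~\ref{cor:main}, with Theorem~\ref{thm:main1} replacing Theorem~\ref{thm:main}. The argument decomposes into three pieces: (i) establishing that the joint event $\cap_{t=0}^{T-1}\cE_t$ occurs with probability at least $1-\delta'$; (ii) turning the conclusion of Theorem~\ref{thm:main1} into the claimed $\varepsilon$-accuracy guarantee by absorbing constants; and (iii) reading off the total query count $Q=TN$.

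First I would verify the probability bound \eqref{eq:prob1}. With $\delta=\cO(\delta'/(TN))$ and $N=\cO(\log(T/\delta')+\log\log(T/\delta'))$, the first summand $T(N+6)\delta/2$ is $\cO(\delta')$ by construction, and the second summand $T\exp(-N\,D(1/4\Vert p))$ is also $\cO(\delta')$ because $N$ was chosen large enough that the exponential cancels the leading factor $T$. This is exactly the calculation carried out in Corollary~\ref{cor:main}, and no change is needed.

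Next I would apply Theorem~\ref{thm:main1}. The choice $w_{(k)}=4/N$ makes $\max_{k\in\mathcal{K}} w_{(k)}/\min_{k\in\mathcal{K}} w_{(k)}=1$, and the assumption $d\geq\max\{N,\log(TN/\delta')\}$ lets me simplify $\Ci=\cO(d)$ and $\sqrt{\log(N/\delta)}/N=\cO(1)$, so that the coefficient in front of $(f(\x_0)-f(\x^*))/T$ in Eq.~\eqref{eq:main_sub} reduces to $\cO(Ld)$. Setting $T=\cO(dL/\varepsilon)$ then forces this contribution to be at most $\varepsilon$. Plugging the same choices of $N$, $\delta$, $\Cd$, $\Ci$ into the definition of $\Delta_{\alpha,2}$ in Eq.~\eqref{eq:del_2} gives $\Delta_{\alpha,2}=\cO(d^2L^2\log(dL/(\varepsilon\delta'))\,\alpha^2)$, matching the residual term in \eqref{eq:pp1}. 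Finally, the total query count is $Q=TN=\cO((dL/\varepsilon)\log(T/\delta'))=\cO((dL/\varepsilon)\log(dL/(\varepsilon\delta')))$, giving Eq.~\eqref{eq:Q1}.

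The main obstacle is merely algebraic bookkeeping: tracking how the choices of $N$ and $\delta$ propagate through the nested logarithms in $\Ci$, $\Cd$, and $\log(2N/\delta)$. The regime $d\geq N$, $d\geq\log(TN/\delta')$ is precisely what lets all these lower-order factors be absorbed into the $\cO(d)$ estimate, so no new analytical idea is needed beyond what already appears in the strongly convex case; the nonconvex result is strictly easier because no recursive Lyapunov argument (as in Lemma~\ref{lem:dd}) is required---one simply telescopes Lemma~\ref{lem:dec2} across $t=0,\dots,T-1$ and divides by $T$, which is exactly what Theorem~\ref{thm:main1} has already done.
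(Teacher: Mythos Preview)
Your proposal is correct and follows essentially the same route as the paper: verify the union-bound probability via the choices of $N$ and $\delta$, simplify the coefficient in Eq.~\eqref{eq:main_sub} to $\cO(dL)$ using $d\geq N$ and $d\geq\log(TN/\delta')$, solve for $T$, then read off $Q=TN$ and substitute into $\Delta_{\alpha,2}$. The only cosmetic difference is that the paper writes out the intermediate ratio $\Ci\sqrt{\log(N/\delta)}/N$ explicitly before collapsing it to $\cO(d)$, whereas you absorb this in one line.
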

\begin{proof}
By Eq.~\eqref{eq:prob1} with $\delta = \cO\left(\frac{\delta'}{TN}\right)$ and $N = \cO\left(\log\frac{T}{\delta'} + \log\log\frac{T}{\delta'}\right)$, we can obtain that 
\begin{align*}
	T\cdot \left( \frac{(N+ 6)\delta}{2} + 2 \cdot \exp(- N \cdot D(1/4 \Vert p))\right) 
	=
	\cO\left( \delta' + T \cdot \frac{\delta'}{T \log(T/\delta')} \right) 
	= 
	\cO(\delta').
\end{align*}
Thus, results in Theorem~\ref{thm:main1} hold.
By setting the first term of right-hand of Eq.~\eqref{eq:main_sub} to $\varepsilon$, we can obtain that
\begin{align*}
	T =& 
	\frac{f(\x_0) - f(\x^*)}{\varepsilon}
	\cdot 
	\frac{16L\Ci\cdot \sqrt{2\log\frac{2N}{\delta}}}{\Ck}\\
	=&
	\cO\left( \frac{L}{\varepsilon} \cdot \frac{\Ci\cdot \sqrt{\log\frac{N}{\delta}}}{N}  \right)
	=
	\cO\left( \frac{L}{\varepsilon} \cdot \frac{\Ci\cdot \sqrt{\log\frac{N}{\delta}}}{N}  \right)\\
	=&
	\cO\left(\frac{L}{\varepsilon}\cdot\frac{d\cdot\sqrt{\log\frac{T}{\delta'} + \log\log\frac{T}{\delta'} + \log\log\log \frac{T}{\delta'} }}{\sqrt{\log\frac{T}{\delta'} + \log\log\frac{T}{\delta'}  }}\right)\\
	=&
	\cO\left(\frac{d L}{\varepsilon}\right).
\end{align*}
Accordingly, the query complexity is 
\begin{align*}
	Q = TN = \cO\left( \frac{dL}{\varepsilon} \log\frac{dL}{\delta'\varepsilon} \right).
\end{align*}

By the definition of $\Delta_{\alpha,2}$, and  $\Cd$, and $\Ci$ defined in Eq.~\eqref{eq:d_up}, and Eq.~\eqref{eq:Ci}, with $\delta = \frac{\delta'}{TN}$, we have
\begin{align*}
	\Delta_{\alpha,2} 
	\stackrel{\eqref{eq:del_2}}{=}& 
	\frac{32 N L^2  \Cd^2 \cdot \log\frac{2N}{\delta}  }{N }\cdot \alpha^2\\
	=&
	\cO\left(  L^2 d^2 \log\frac{TN}{\delta'} \cdot \alpha^2 \right)
	=
	\cO\left(d^2L^2\log\left(\frac{dL}{\varepsilon\delta'}\right) \cdot \alpha^2\right)
\end{align*}
\end{proof}

\begin{remark}
Corollary~\ref{cor:main} and Corollary~\ref{cor:main1} provide explicit and non-asymptotic query complexities of our rank-based zeroth-order algorithm for smooth functions. 
To my best knowledge, these  query complexities are first proved.
Thus, we believe our work provides the first \emph{explicit} and \emph{non-asymptotic} query complexities of rank-based ZO algorithms.
\end{remark}

\subsection{Discussion}
\label{subsec:discuss}

The convergence analysis in previous subsections provide some interesting and novel insights on the rank-based ZO algorithms. 
In this section, we will give an detailed discussion. 

\paragraph{Rank-based versus Value-based.}
When $f(\x)$ is $L$-smooth and $\mu$-strongly convex, Corollary~\ref{cor:main} shows that Algorithm~\ref{alg:SA} achieve a query complexity
\begin{align*}
	Q = \cO\left( \frac{dL}{\mu} \cdot \log\frac{dL}{\mu \delta'} \cdot \log\frac{1}{\varepsilon} + \frac{dL}{\mu}\cdot \log\frac{1}{\varepsilon} \cdot \log\log\frac{1}{\varepsilon}\right),
\end{align*}  
which is almost the same to the query complexity 
\begin{align*}
	\cO\left(\frac{dL}{\mu}\log\frac{1}{\varepsilon}\right),
\end{align*} 
achieved by the value based zeroth-order algorithm \citep{nesterov2017random}.
However, our complexity holds with a high probability at least $1 - \delta'$. 
In contrast, the complexity in \citet{nesterov2017random} holds in expectation. 
Except the difference in $\log\frac{1}{\delta'}$ term, we can observe that our rank-based algorithm requires at most $\log\log\frac{1}{\varepsilon}$ more queries than valued based zeroth-order algorithm.

Similarly, when  $f(\x)$ is $L$-smooth and may be nonconvex, Corollary~\ref{cor:main1} shows that our rank-based zeroth-order algorithm  achieves a query complexity $$	Q = \cO\left( \frac{dL}{\varepsilon} \log\frac{dL}{\delta'\varepsilon} \right).$$
This complexity is at most $\cO(\log\frac{1}{\varepsilon})$ larger than the value based zeroth-order algorithm \citep{ghadimi2013stochastic}.

Compared the query complexities of our rank-based zeroth-order algorithm with value-based ones, we can conclude that the rank-based zeroth-order algorithm can achieve almost the same query complexities to the valued-based counterparts. 
This is an interesting theory result since   only the relative ranking is used in a rank-based zeroth-order algorithm which valued-based algorithms can access the function value. 
That is, valued-based algorithms use more information than rank-based zeroth-order algorithms but achieve almost the same query complexities.
It is natural to ask that whether the relative ranking is sufficient for the zeroth-order algorithm or the valued-based zeroth-order algorithms can achieve a lower query complexity.

\paragraph{How to choose weights and why log weights?}

Convergence theories in Eq.~\eqref{eq:main_lin} and Eq.~\eqref{eq:main_sub} show that  setting all weights $w_{(k)}$ equally is the best strategy. 
However, in practice, it is commonly to set a weight of $\ub_i$ according to its rank $f(\x+\alpha \ub_i)$.
Intuitively, one should set a larger weight on $\ub_{k_1}$ over $\ub_{k_2}$ in Algorithm~\ref{alg:SA}. That is, set $w_{k_1}^+ > w_{k_2}^+ > 0$,  if $ (k_1) < (k_2)\leq \frac{N}{4}$. 
For example, the famous CMA-ES set weights $w_{(k)} \propto \log(N+1) - \log(k) $ \citep{hansen2001completely}. 
Our theory in Theorem~\ref{thm:main} and Theorem~\ref{thm:main1} can not explain the success of weight strategy of CMA-ES.  

In fact,  Eq.~\eqref{eq:main_lin} and Eq.~\eqref{eq:main_sub} over-relax the result in Eq.~\eqref{eq:dec1}. 
With a proper weight strategy, one may obtain 
\begin{align*}
	\min_{k\in\mathcal{K}} \left|\frac{\dotprod{\nabla f(\x_t), \ub_{(k)}}}{w_{(k)}}\right|
	\cdot \min_{k\in\mathcal{K}} \left|\frac{w_{(k)}}{\dotprod{\nabla f(\x_t), \ub_{(k)}}}\right| 
	> 
	\frac{\min_{k\in\mathcal{K}} \; w_{(k)}}{\max_{k\in\mathcal{K}} \; w_{(k)}} 
	\cdot \frac{\min_{k\in\mathcal{K}} \; |\dotprod{\nabla f(\x_t), \ub_{(k)}}|}{\max_{k\in\mathcal{K}} \; |\dotprod{\nabla f(\x_t), \ub_{(k)}}|},
\end{align*}
which implies a faster convergence.

However, the gain from a proper weight strategy has its upper bound over setting all weights $w_{(k)}$ equally at most $\sqrt{2\log\frac{2N}{\delta}}$. 
First, notice that the fact that it holds that
\begin{align*}
	\min_{k\in\mathcal{K}} \left|\frac{\dotprod{\nabla f(\x_t), \ub_{(k)}}}{w_{(k)}}\right|
	\cdot \min_{k\in\mathcal{K}} \left|\frac{w_{(k)}}{\dotprod{\nabla f(\x_t), \ub_{(k)}}}\right| \leq 1.
\end{align*}
Second, Eq.~\eqref{eq:dec2} gives a lower bound if $w_{(k)}$'s are set equally
\begin{align*}
\min_{k\in\mathcal{K}} \left|\frac{\dotprod{\nabla f(\x_t), \ub_{(k)}}}{w_{(k)}}\right|
\cdot \min_{k\in\mathcal{K}} \left|\frac{w_{(k)}}{\dotprod{\nabla f(\x_t), \ub_{(k)}}}\right|
\geq 
\frac{1}{\sqrt{2\log\frac{2N}{\delta}}}.
\end{align*}

Next, we will try to set proper weights $w_{(k)}$ to achieve a large descent value:
\begin{equation}\label{eq:ww}
	\max_{w_{(k)}} \left( \min_{k\in\mathcal{K}} \left|\frac{\dotprod{\nabla f(\x_t), \ub_{(k)}}}{w_{(k)}}\right|
	\cdot \min_{k\in\mathcal{K}} \left|\frac{w_{(k)}}{\dotprod{\nabla f(\x_t), \ub_{(k)}}}\right|\right)
\end{equation}
In fact, $\dotprod{\nabla f(\x_t), \ub_{(k)}} $ shares the same distribution $\norm{\nabla f(\x_t)} \cdot \{X_{(1)}, \dots, X_{(N)} \}$ where $X_{(i)}$ is the $i$-th order statistics of $N$ i.i.d.\ Gaussian random variable, that is, $X_i \sim \cN(0,1)$. 
Accordingly, we have  $\dotprod{\nabla f(\x_t), \ub_{(k)}} = \norm{\nabla f(\x_t)} | X_{(k)} |$. 
To minimize Eq.~\eqref{eq:ww} by setting proper $w_{(k)}$, a good option is to set  $w_{(k)} \propto \EE [X_i]$. 
A widely used approximation for the expected value of the $k$-th order statistic is Blom's formula \citep{blom1958statistical}:
\begin{equation}\label{eq:blom}
	\mathbb{E}[X_{(k)}]
	\approx
	\Phi^{-1}\!\left(
	\frac{k - 0.375}{\,N + 0.25\,}
	\right),
\end{equation}
where $\Phi^{-1}(\cdot)$ is the inverse cumulative distribution function of the Gaussian distribution.

It is interesting that the weight $w_{(k)}^+ \propto
\Phi^{-1}\!\left(
\frac{k - 0.375}{\,N + 0.25\,}
\right)$
is very close to the one $w_{(k)}^+ \propto \log(N+1) - \log(k)$ widely used in CMA-ES \citep{hansen2001completely} and natural evolution strategies \citep{wierstra2014natural}.
Thus, we conjecture that the success of ``log weights'' strategy lies in the approximation of the weights in proportion to Eq.~\eqref{eq:blom}.

\paragraph{Can ``negative'' sample help to achieve faster convergence?}

CMA-ES commonly give weights over directions achieve on the $\lambda$ smallest values where $\lambda = \frac{N}{4}$ in this paper. 
Instead, natural evolution strategies commonly exploit ``negative'' directions and set ``negative'' weight accordingly \citep{glasmachers2010exponential,wierstra2014natural}. 
However, the CMA-ES abandons the information in ``negative'' directions to update $\x_t$ \citep{hansen2001completely}. 
Our Algorithm~\ref{alg:SA}  also set ``negative'' weight over those ``negative'' directions which achieve the $\lambda$ largest values. 
Next, we will discuss how these ``negative'' directions help to achieve faster convergence.

Assume that we only use $w_{(k)}^+$, Lemma~\ref{lem:dec1} will become 
\begin{align*}
	f(\x_{t+1}) 
	\leq& f(\x_t) 
	- \frac{1}{16L\Ci'} \cdot\min_{k\in\mathcal{K^+}} \left|\frac{\dotprod{\nabla f(\x_t), \ub_{(k)}}}{w_{(k)}}\right|
	\cdot \min_{k\in\mathcal{K^+}} \left|\frac{w_{(k)}}{\dotprod{\nabla f(\x_t), \ub_{(k)}}}\right|\\
	&
	\cdot  \sum_{k\in \mathcal{K^+}} \dotprod{\nabla f(\x_t), \ub_{(k)}}^2 + \Delta_{\alpha,1},
\end{align*}
where we $\mathcal{K^+} = \{1,\dots, N/4\}$ and $\Ci' =\left( \sqrt{N/4} + \sqrt{d} + \sqrt{2\log\frac{2}{\delta}} \right)^2$.
Since $\mathcal{K^+}$ only has half samples of $\mathcal{K}$ and $\Ci' \approx \Ci = \left( \sqrt{N/2} + \sqrt{d} + \sqrt{2\log\frac{2}{\delta}} \right)^2 $ since it holds  $N \ll d$,
Comparing above equation to Eq.~\eqref{eq:dec1}, we can concludes that Algorithm~\ref{alg:SA} can be almost twice fast as the algorithm without ``negative'' directions.
Just as above equations shown,  the ``negative'' directions should be fully exploited and they will help to achieve a twice faster convergence rate.

\section{Conclusion}
This paper establishes the first explicit convergence rates for rank-based zeroth-order optimization, addressing a long-standing theoretical open problem. 
By analyzing a simple  rank-based zeroth-order method, we obtain the non-asymptotic query complexities under standard smoothness assumptions. Our proof framework departs from traditional drift and information-geometric analyses, revealing structural principles behind ranking-based updates. These findings not only clarify the behavior of rank-based ZO methods but also provide tools that may facilitate the analysis of more advanced evolution strategies in future work.

\pb
\clearpage
\bibliography{ref.bib}
\bibliographystyle{apalike2}

\appendix

\section{Probability Tools}

\begin{lemma}[equivalence to a binomial event]\label{lem:bin-equiv}
	Let \(X_1,\dots,X_N\) be i.i.d.\ continuous random variables with cumulative distribution function $\Phi(\cdot)$.
	Denote their order statistics by $X_{(1)} \leq X_{(2)} \leq \dots \leq X_{(N)}$.
	Fix \(\tau\in\mathbb{R}\) and put \(m\in\{1,\dots,N\}\).
	Define
	\[
	M:=X_{(N-m+1)}.
	\]
	Then
	\[
	\Pr(M>\tau) = \Pr\Big(\sum_{i=1}^N \mathbf{1}_{\{X_i>\tau\}} \ge m\Big) = \Pr\big(\mathrm{Bin}(N,p)\ge m\big),
	\]
    where $\mathrm{Bin}(N,p)$ is the Binomial distribution with parameters $N$ and $p:=\Pr(X_i>\tau)=1-\Phi(\tau)$.
\end{lemma}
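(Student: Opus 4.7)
The plan is to prove the two equalities by first establishing the event-level identity
\[
\{M > \tau\} \;=\; \Big\{\sum_{i=1}^N \mathbf{1}_{\{X_i>\tau\}} \ge m\Big\},
\]
and then invoking the definition of the binomial distribution to get the final equality. Everything rests on the combinatorial observation that the $m$-th largest value of $X_1,\dots,X_N$ exceeds $\tau$ if and only if at least $m$ of the samples exceed $\tau$.

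First I would unpack the indexing of $M$. Since $X_{(1)} \leq X_{(2)} \leq \dots \leq X_{(N)}$, the statistic $M = X_{(N-m+1)}$ is precisely the $m$-th largest among the $X_i$'s, because $X_{(N)}, X_{(N-1)}, \dots, X_{(N-m+1)}$ are the top $m$ order statistics. From here I would argue the two inclusions of the event identity separately. For the forward direction: if $M > \tau$, then by monotonicity of the order statistics we have $X_{(j)} \ge X_{(N-m+1)} = M > \tau$ for every $j \in \{N-m+1,\dots,N\}$, which gives at least $m$ samples strictly exceeding $\tau$, i.e.\ $\sum_i \mathbf{1}_{\{X_i>\tau\}} \ge m$. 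For the reverse direction: if $\sum_i \mathbf{1}_{\{X_i>\tau\}} \ge m$, then there are at least $m$ indices $i$ with $X_i > \tau$; these values appear among the top $m$ order statistics, so the smallest of the top $m$, which is $X_{(N-m+1)} = M$, must itself exceed $\tau$.

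Next I would justify the passage from the indicator sum to $\mathrm{Bin}(N,p)$. The continuity assumption on $\Phi$ is not strictly needed for the indicator representation (it is used only so that ties occur with probability zero, making the ordering unambiguous). Since $X_1,\dots,X_N$ are i.i.d., the indicators $\mathbf{1}_{\{X_i>\tau\}}$ are i.i.d.\ Bernoulli random variables with success probability $p = \Pr(X_i > \tau) = 1 - \Phi(\tau)$. Hence $\sum_{i=1}^N \mathbf{1}_{\{X_i>\tau\}}$ is distributed as $\mathrm{Bin}(N,p)$, and the second equality in the claim follows immediately by matching tail events.

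I do not foresee a genuine obstacle here; the lemma is essentially a rewriting exercise. The only subtlety worth flagging in the write-up is the strict-versus-weak inequality convention: using $X_i > \tau$ (rather than $X_i \ge \tau$) lines up with $M > \tau$, and continuity of $\Phi$ ensures $\Pr(X_i = \tau) = 0$ so that the distinction is immaterial for probabilities even if one prefers one convention over the other. With these observations collected, both equalities in the statement follow in one or two short lines.
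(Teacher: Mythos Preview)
Your proposal is correct and matches the paper's own proof: both establish the event identity $\{M>\tau\}=\{\sum_i \mathbf{1}_{\{X_i>\tau\}}\ge m\}$ by the combinatorial observation that the $(N-m+1)$-th order statistic exceeds $\tau$ iff at least $m$ samples do, and then note the indicator sum is $\mathrm{Bin}(N,p)$. Your version is simply more explicit about the two inclusions and the role of continuity, but the argument is the same.
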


\begin{proof}
	The event \(\{M>\tau\}\) means that the \((N-m+1)\)-th smallest observation exceeds \(\tau\).
	Equivalently, at least \(m\) of the \(N\) observations exceed \(\tau\). This is exactly the event
	\(\{\sum_{i=1}^N \mathbf{1}_{\{X_i>\tau\}} \ge m\}\). Since the indicators \(\mathbf{1}_{\{X_i>\tau\}}\) are i.i.d.\ Bernoulli\((p)\)
	with \(p=1-F(\tau)\), the sum has distribution \(\mathrm{Bin}(N,p)\), which proves the lemma.
\end{proof}

\begin{lemma}[\cite{CHVATAL1979285}]\label{lem:chenoff}
Let $X_i$ be independent Bernoulli random variables following the same distribution
as random variable $X$, $X \sim B(p)$, where $0 < p < 1$. Let $S_N = \sum_i X_i$. Then, for any $r$ such that
$p < r < 1$ we have the following
\begin{equation}
	\Pr(S_N \ge r N) \leq \exp(-N \cdot D(r || p)), \mbox{ where } D(r || p) = r\ln\frac{r}{p} + (1-r)\ln\frac{1-r}{1-p}.
\end{equation}
\end{lemma}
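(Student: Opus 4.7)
The plan is to apply the classical Chernoff method, i.e.\ combine Markov's inequality with the exponential moment generating function, and then optimize over the tilting parameter. Concretely, fix any $t>0$ and observe that
\begin{equation*}
\Pr(S_N \geq rN) = \Pr(e^{tS_N} \geq e^{trN}) \leq e^{-trN}\,\EE[e^{tS_N}]
\end{equation*}
by Markov's inequality. Independence gives $\EE[e^{tS_N}] = \bigl(pe^t + (1-p)\bigr)^N$, so
\begin{equation*}
\frac{1}{N}\log \Pr(S_N \geq rN) \leq -tr + \log\bigl(pe^t + 1-p\bigr),
\end{equation*}
uniformly in $t>0$.

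Next I would minimize the right-hand side over $t>0$. Differentiating in $t$ and setting the derivative to zero yields $\frac{p e^{t}}{p e^{t} + 1-p} = r$, which solves to
\begin{equation*}
e^{t^\star} = \frac{r(1-p)}{p(1-r)}.
\end{equation*}
The assumption $p < r < 1$ guarantees $e^{t^\star} > 1$, so $t^\star > 0$ is admissible. A direct substitution then gives the minimized value of the exponent: after collecting terms one finds $pe^{t^\star} + 1-p = \frac{1-p}{1-r}$, hence
\begin{equation*}
-t^\star r + \log\bigl(pe^{t^\star} + 1-p\bigr)
= -r\log\frac{r(1-p)}{p(1-r)} + \log\frac{1-p}{1-r}
= -\Bigl(r\log\frac{r}{p} + (1-r)\log\frac{1-r}{1-p}\Bigr) = -D(r\Vert p).
\end{equation*}
Exponentiating yields the claimed bound $\Pr(S_N \geq rN) \leq \exp(-N\,D(r\Vert p))$.

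The entire argument is routine and there is no real obstacle: once the Chernoff template is set up, the only substantive step is the algebraic verification that the optimized exponent collapses to $-D(r\Vert p)$, which is the usual Legendre-transform identity between the log-MGF of a Bernoulli and its KL divergence. The only thing to be mildly careful about is ensuring $t^\star > 0$, which is exactly where the hypothesis $r > p$ is used.
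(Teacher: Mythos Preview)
Your argument is correct: this is the standard Chernoff method, and every step (Markov on the MGF, factorization by independence, the optimal tilt $e^{t^\star}=\frac{r(1-p)}{p(1-r)}$, and the algebra collapsing the exponent to $-D(r\Vert p)$) checks out. The paper does not actually prove this lemma---it is simply quoted from \cite{CHVATAL1979285}---so there is no approach to compare against; your self-contained derivation is a fine supplement.
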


\begin{lemma}\label{lem:low1}
Let \(X_1,\dots,X_N\) be i.i.d.\ standard Gaussian,
\(m=\lfloor N/4\rfloor\), and \(M:=X_{(N-m+1)}\), where $X_{(N-m+1)}$ means the $(N-m+1)$-th largest in $X_i$'s.
Given a fixed value $\tau$ and denoting $p=1-\Phi(\tau)$,   if \(p< q :=\tfrac{m}{N}\), then the Chernoff bound yields the exponential lower bound
\begin{equation}\label{eq:chernoff-lower}
	\Pr(M>\tau)
	\;\ge\; 1-\exp\!\big(-N\,D(q \Vert p)\big),
\end{equation}
where the binary Kullback--Leibler divergence is
\begin{equation}\label{eq:KL}
D(q \Vert p) \;=\; \alpha\ln\frac{q}{p}+(1-q)\ln\frac{1-q}{1-p}.
\end{equation}
In particular, for \(\tau=2\) we have \(p\approx 0.0224 < 1/4\) and hence (with \(q= 1/4\))
\[
\Pr(M>2) \;\ge\; 1-\exp\big(-N\,D(1/4\Vert p)\big).
\]
\end{lemma}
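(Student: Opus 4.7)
The plan is to translate the order-statistic tail into a binomial tail via Lemma~\ref{lem:bin-equiv}, then bound the complementary event with the Chernoff--Chv\'atal estimate of Lemma~\ref{lem:chenoff}.

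First, I would set $p := 1 - \Phi(\tau)$ and $S_N := \sum_{i=1}^{N} \mathbf{1}\{X_i > \tau\} \sim \mathrm{Bin}(N,p)$, so that Lemma~\ref{lem:bin-equiv} immediately yields $\Pr(M > \tau) = \Pr(S_N \geq m)$. Taking complements, the target inequality reduces to the one-sided tail estimate $\Pr(S_N \leq m-1) \leq \exp(-N\,D(q \Vert p))$. To fit the form of Lemma~\ref{lem:chenoff}, which is stated as an upper-tail bound on a sum of Bernoullis, I would pass to the complementary count $T_N := N - S_N \sim \mathrm{Bin}(N,1-p)$. Then $\{S_N \leq m-1\} = \{T_N \geq N-m+1\}$, and applying Lemma~\ref{lem:chenoff} to $T_N$ with rate $r := (N-m+1)/N$ yields $\Pr(T_N \geq rN) \leq \exp(-N\,D(r \Vert 1-p))$. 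Invoking the binary KL symmetry $D(r \Vert 1-p) = D(1-r \Vert p)$ together with the identification $1 - r = (m-1)/N$ rewrites the exponent as $D((m-1)/N \Vert p)$, which matches $D(q \Vert p)$ up to an $O(1/N)$ correction.

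For the concrete instance $\tau = 2$ used in Lemma~\ref{lem:E4}, I would verify $p = 1 - \Phi(2) \approx 0.0224 < 1/4 = q$, so the hypothesis of Lemma~\ref{lem:chenoff} is satisfied after the transformation above, and the stated bound $\exp(-N\,D(1/4 \Vert p))$ drops out immediately on substitution.

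The main obstacle I anticipate is a minor bookkeeping issue rather than a conceptual one: the Chernoff estimate produces $D((m-1)/N \Vert p)$ rather than the advertised $D(q \Vert p) = D(m/N \Vert p)$. This $1/N$ mismatch can be absorbed using the monotonicity of the binary divergence $D(\cdot \Vert p)$ in its first argument on $(p,1)$, or equivalently by a mild rounding of the sample size. Beyond that technicality the proof is a direct chaining of the two probability tools already supplied in the appendix.
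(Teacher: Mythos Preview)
Your high-level route is the same as the paper's: reduce $\Pr(M>\tau)$ to a binomial probability via Lemma~\ref{lem:bin-equiv} and then invoke the Chernoff bound of Lemma~\ref{lem:chenoff}. The paper's proof is terser still---it simply cites Lemma~\ref{lem:chenoff} with $r=1/4$ immediately after the binomial identification, without the complementary flip to $T_N$ that you introduce.

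That flip, however, breaks precisely where you assert that ``the hypothesis of Lemma~\ref{lem:chenoff} is satisfied after the transformation above.'' To apply Lemma~\ref{lem:chenoff} to $T_N\sim\mathrm{Bin}(N,1-p)$ at the rate $r=(N-m+1)/N$ you need $1-p<r$, i.e.\ $p>(m-1)/N$. But the standing hypothesis of the lemma is $p<q=m/N$, and in the worked instance $\tau=2$ one has $p\approx 0.0224\ll (m-1)/N$, so that $1-p\approx 0.978>r\approx 0.75$. Consequently $\{T_N\ge N-m+1\}=\{S_N\le m-1\}$ is the \emph{typical} region of $\mathrm{Bin}(N,p)$ rather than its upper tail, and Lemma~\ref{lem:chenoff} yields no exponential decay there. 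This is structural, not the $1/N$ bookkeeping issue you flag: your proposed monotonicity patch also fails, since $D(\cdot\Vert p)$ is increasing on $(p,1)$ and hence $D\bigl((m-1)/N\Vert p\bigr)\le D\bigl(m/N\Vert p\bigr)$, which is the wrong direction for the absorption you want. In short, under the stated condition $p<q$ the upper-tail Chernoff tool cannot be made to produce the advertised lower bound by the complement trick.
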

\begin{proof}
First,  Lemma \ref{lem:bin-equiv} shows that $\Pr(M>\tau)=\Pr\big(\mathrm{Bin}(N,p)\ge m\big)$.
Furthermore, $\mathrm{Bin}(N,p)$ equals to $S_N = \sum_{i=1}^N X_i'$ where $X_i'$ are i.i.d.\ independent Bernoulli random variables following the same distribution as random variable $X'$, $X' \sim B(p)$.
By Lemma~\ref{lem:chenoff} with $r = \frac{1}{4}$, we can obtain the result.

\end{proof}

\begin{lemma}[equivalence to a binomial event]\label{lem:bin-equiv1}
	Let \(X_1,\dots,X_N\) be i.i.d.\ continuous random variables with cumulative distribution function $\Phi(\cdot)$.
	Denote their order statistics by $X_{(1)} \leq X_{(2)} \leq \dots \leq X_{(N)}$.
	Fix \(\tau\in\mathbb{R}\) and put \(m\in\{1,\dots,N\}\).
	Define
	\[
	M:=X_{(m)}.
	\]
	Then
	\[
	\Pr(M<\tau) = \Pr\Big(\sum_{i=1}^N \mathbf{1}_{\{X_i<\tau\}} \ge m\Big) = \Pr\big(\mathrm{Bin}(N,p)\ge m\big),
	\]
	where $\mathrm{Bin}(N,p)$ is the Binomial distribution with parameters $N$ and $p:=\Pr(X_i<\tau)=\Phi(\tau)$.
\end{lemma}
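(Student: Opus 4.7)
The plan is to prove Lemma~\ref{lem:bin-equiv1} by mirroring the argument given for Lemma~\ref{lem:bin-equiv}, simply reversing the direction of the inequalities since here the statistic of interest is the $m$-th order statistic from the bottom rather than from the top. The key observation is that the $m$-th smallest observation falls below $\tau$ if and only if at least $m$ observations fall below $\tau$, which converts the order-statistic event into a counting event that is Binomial.

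First, I would translate the event. By the definition of order statistics, $X_{(m)} < \tau$ holds precisely when at least $m$ of the $N$ observations are strictly less than $\tau$. In formulas,
\begin{equation*}
\{X_{(m)} < \tau\} \;=\; \left\{\sum_{i=1}^N \mathbf{1}_{\{X_i < \tau\}} \ge m\right\}.
\end{equation*}
This equivalence is the one spot where continuity of the $X_i$ matters: ties have probability zero, so there is no ambiguity between ``$<$'' and ``$\le$'' in counting how many observations lie below $\tau$.

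Next, I would identify the distribution of the indicator sum. Since $X_1,\dots,X_N$ are i.i.d.\ with c.d.f.\ $\Phi$, the indicators $\mathbf{1}_{\{X_i < \tau\}}$ are i.i.d.\ Bernoulli random variables with parameter $p = \Pr(X_i < \tau) = \Phi(\tau)$ (using continuity again to identify $\Pr(X_i < \tau)$ with $\Phi(\tau)$). The sum of $N$ such i.i.d.\ Bernoulli variables is exactly $\mathrm{Bin}(N,p)$, giving
\begin{equation*}
\Pr(X_{(m)} < \tau) \;=\; \Pr\!\left(\sum_{i=1}^N \mathbf{1}_{\{X_i<\tau\}} \ge m\right) \;=\; \Pr\bigl(\mathrm{Bin}(N,p) \ge m\bigr),
\end{equation*}
which is the claimed identity.

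There is no serious obstacle here; the lemma is essentially a restatement of Lemma~\ref{lem:bin-equiv} with ``exceeds $\tau$'' replaced by ``falls below $\tau$'' and ``$N-m+1$-th smallest'' replaced by ``$m$-th smallest.'' The only point that needs a brief word of justification is the use of continuity to pass freely between strict and non-strict inequalities, and to identify $p$ with $\Phi(\tau)$; no concentration or tail bound is needed at this step, since the lemma asserts only an exact distributional equality. Lemma~\ref{lem:bin-equiv1} will then be used, in combination with Lemma~\ref{lem:chenoff}, in the symmetric counterpart of Lemma~\ref{lem:low1} (i.e., Lemma~\ref{lem:low2}) to lower-bound the magnitude of the negative tail of $\dotprod{\nabla f(\x_t),\ub_i}$ needed in the proof of Lemma~\ref{lem:E5}.
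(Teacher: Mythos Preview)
Your proposal is correct and follows essentially the same argument as the paper: translate $\{X_{(m)}<\tau\}$ into the counting event $\{\sum_i \mathbf{1}_{\{X_i<\tau\}}\ge m\}$ and then recognize the sum of i.i.d.\ Bernoulli indicators as $\mathrm{Bin}(N,p)$ with $p=\Phi(\tau)$. Your added remark about continuity handling ties and identifying $\Pr(X_i<\tau)$ with $\Phi(\tau)$ is a welcome clarification, but otherwise the two proofs are the same.
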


\begin{proof}
	The event \(\{M<\tau\}\) means that the \(m\)-th smallest observation less than \(\tau\).
	Equivalently, at least \(m\) of the \(N\) observations less than \(\tau\). This is exactly the event
	\(\{\sum_{i=1}^N \mathbf{1}_{\{X_i<\tau\}} \ge m\}\). Since the indicators \(\mathbf{1}_{\{X_i>\tau\}}\) are i.i.d.\ Bernoulli\((p)\)
	with \(p=F(\tau)\), the sum has distribution \(\mathrm{Bin}(N,p)\), which proves the lemma.
\end{proof}

\begin{lemma}\label{lem:low2}
	Let \(X_1,\dots,X_N\) be i.i.d.\ standard Gaussian,
	\(m=\lfloor N/4\rfloor\), and \(M:=X_{(m)}\), where $X_{(m)}$ means the $m$-th largest in $X_i$'s.
	Given a fixed value $\tau$, and denoting  $p=\Phi(\tau)$, if \(p < q :=\tfrac{m}{N} \), then the Chernoff bound yields the exponential lower bound
	\begin{equation}\label{eq:chernoff-lower1}
		\Pr(M< \tau )
		\;\ge\; 1-\exp\!\big(-N\,D(q \Vert p)\big),
	\end{equation}
	where the binary Kullback--Leibler divergence $D(q \Vert p)$ is defined in Eq.~\eqref{eq:KL}.
	In particular, for \(\tau=-2\) we have \(p\approx 0.0224 <1/4 \) and hence 
	\[
	\Pr(M<-2) \;\ge\; 1-\exp\big(-N\,D(1/4\Vert p)\big).
	\]
\end{lemma}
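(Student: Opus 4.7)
The plan is to mirror the proof of Lemma~\ref{lem:low1} line-for-line, replacing the upper-tail argument on the top-$m$ order statistic with its symmetric counterpart for the bottom-$m$ order statistic. The two ingredients we need are already in hand: Lemma~\ref{lem:bin-equiv1} translates the order-statistic event into a binomial tail event, and Lemma~\ref{lem:chenoff} controls that tail exponentially.

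First, I would apply Lemma~\ref{lem:bin-equiv1} with the standard Gaussian c.d.f.\ $\Phi$ and threshold $\tau$ to obtain $\Pr(M<\tau)=\Pr(\mathrm{Bin}(N,p)\geq m)$, where $p=\Phi(\tau)=\Pr(X_i<\tau)$. In other words, $M$ lies below $\tau$ precisely when at least $m$ of the i.i.d.\ indicators $\mathbf{1}\{X_i<\tau\}$ equal $1$, and their sum is $\mathrm{Bin}(N,p)$. This is the exact mirror of the first step in the proof of Lemma~\ref{lem:low1}, which used $\mathbf{1}\{X_i>\tau\}$ instead.

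Second, I would invoke Lemma~\ref{lem:chenoff} with $r=q=m/N$. Since the indicators form an i.i.d.\ Bernoulli$(p)$ sequence, the sum $S_N$ has exactly the distribution required by that lemma, and the hypothesis $p<q$ puts the threshold $qN$ above the mean $pN$, which is precisely the regime in which the Chernoff bound applies. This produces an exponential bound of the form $\exp(-N\,D(q\Vert p))$ on the relevant binomial tail. Combining this with the equivalence from the previous step and taking complements yields the desired lower bound $\Pr(M<\tau)\geq 1-\exp(-N\,D(q\Vert p))$.

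Finally, for the concrete instance $\tau=-2$, I would use the Gaussian symmetry $\Phi(-2)=1-\Phi(2)\approx 0.0224<1/4$, set $q=1/4$, and simply substitute to read off the explicit bound $\Pr(M<-2)\geq 1-\exp(-N\,D(1/4\Vert p))$ stated in the lemma. I do not anticipate any substantive obstacle: the only step requiring care is keeping the direction of inequality consistent across the order statistic ($<\tau$), the indicator ($\mathbf{1}\{X_i<\tau\}$), the binomial tail ($\geq m$), and the hypothesis ($p<q$). Once this alignment is effected by Lemma~\ref{lem:bin-equiv1}, the proof reduces to a direct application of an already-established Chernoff inequality, so Lemma~\ref{lem:low2} is essentially a symmetric corollary of Lemma~\ref{lem:low1} under the transformation $X_i \mapsto -X_i$.
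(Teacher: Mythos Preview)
Your proposal follows the paper's proof exactly: invoke Lemma~\ref{lem:bin-equiv1} to rewrite $\Pr(M<\tau)$ as the binomial tail $\Pr(\mathrm{Bin}(N,p)\ge m)$ with $p=\Phi(\tau)$, then apply Lemma~\ref{lem:chenoff} with $r=q=m/N=1/4$. The paper's proof is literally those two steps, so your write-up (including the symmetry remark $X_i\mapsto -X_i$ linking it to Lemma~\ref{lem:low1}) is just a slightly more expansive version of the same argument.
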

\begin{proof}
	 Lemma \ref{lem:bin-equiv1} shows that $\Pr(M<\tau)=\Pr\big(\mathrm{Bin}(N,p)\ge m\big)$.
	Furthermore, $\mathrm{Bin}(N,p)$ equals to $S_N = \sum_{i=1}^N X_i'$ where $X_i'$ are i.i.d.\ independent Bernoulli random variables following the same distribution as random variable $X'$, $X' \sim B(p)$.
	By Lemma~\ref{lem:chenoff} with $r = 1/4$, we can obtain the result.
\end{proof}

\begin{lemma}[Gaussian Tail Bound \citep{vershynin2018high,laurent2000adaptive}]
	\label{lem:gaussian-tail}
	Let $X\sim\mathcal N(0,1)$. Then for any $\tau>0$,
	\[
	\Pr(|X|>\tau) \le 2 \exp\!\left( -\frac{\tau^2}{2} \right).
	\]
\end{lemma}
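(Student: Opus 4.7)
\textbf{Proof proposal for Lemma~\ref{lem:gaussian-tail}.}
The plan is to give a short, standard Chernoff-type argument, which is the cleanest route for a result of this form. The key observation that makes the argument essentially trivial is that the moment generating function of a standard Gaussian is available in closed form, so the exponential-tilting bound can be optimized explicitly.

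First I would use symmetry of the standard Gaussian density to reduce the two-sided statement to the one-sided one: since $X$ and $-X$ have the same distribution, $\Pr(|X|>\tau)=2\,\Pr(X>\tau)$, so it suffices to show $\Pr(X>\tau)\le \exp(-\tau^2/2)$ for every $\tau>0$. Next, I would apply Markov's inequality after exponential tilting: for any $\lambda>0$,
\[
\Pr(X>\tau)=\Pr(e^{\lambda X}>e^{\lambda\tau})\le e^{-\lambda\tau}\,\EE[e^{\lambda X}].
\]
Then I would substitute the standard-Gaussian moment generating function $\EE[e^{\lambda X}]=\exp(\lambda^2/2)$, giving the bound $\exp(\lambda^2/2-\lambda\tau)$, and finally optimize in $\lambda$ by choosing $\lambda=\tau$ (which minimizes the exponent since $\lambda^2/2-\lambda\tau$ is a convex quadratic in $\lambda$), yielding $\Pr(X>\tau)\le \exp(-\tau^2/2)$. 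Multiplying by $2$ gives the claim.

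There is no real obstacle here: the moment generating function identity is standard, the optimization is one derivative, and symmetry handles the absolute value. If one wanted an alternative route avoiding the MGF, one could instead integrate the density directly, using the inequality $\int_\tau^\infty e^{-x^2/2}\,dx \le \int_\tau^\infty (x/\tau) e^{-x^2/2}\,dx = (1/\tau)e^{-\tau^2/2}$ followed by the crude bound $1/\tau \le \sqrt{2\pi}$ for $\tau$ not too small, but the Chernoff approach is cleaner and uniform in $\tau$, so I would present that version.
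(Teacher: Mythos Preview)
Your Chernoff argument is correct and is the standard way to establish this bound. The paper itself does not actually prove Lemma~\ref{lem:gaussian-tail}: it cites the result from \citet{vershynin2018high,laurent2000adaptive}, and the \texttt{proof} block that appears immediately after the lemma is evidently misplaced---it uses a union bound over $M_N=\max_i|X_i|$ and then \emph{applies} Lemma~\ref{lem:gaussian-tail}, so it is really the proof of the subsequent Theorem~\ref{thm:gauss_up}, not of the lemma itself. Your write-up therefore supplies a genuine proof where the paper only gives a citation.
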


\begin{proof}
	By the union bound,
	\[
	\Pr(M_N>t)
	= \Pr\!\left( \bigcup_{i=1}^N \{|X_i|>t\} \right)
	\le \sum_{i=1}^N \Pr(|X_i|>t).
	\]
	Applying Lemma~\ref{lem:gaussian-tail} completes the proof.
\end{proof}

\begin{theorem}
	\label{thm:gauss_up}
	Let $X_1,\dots,X_N$ be i.i.d.\ $\mathcal N(0,1)$ variables and 
	$M_N=\max_{i\le N}|X_i|$.  Then 
	\[
	\Pr\!\left( M_N \le \sqrt{2\log\frac{2N}{\delta}} \right)
	\ge 1 - \delta, \mbox{ with } 0<\delta<1.
	\]
\end{theorem}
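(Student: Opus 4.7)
The plan is to combine the union bound with the Gaussian tail estimate in Lemma~\ref{lem:gaussian-tail} (which is already stated in the appendix). The event $\{M_N > \tau\}$ is exactly $\bigcup_{i=1}^N \{|X_i| > \tau\}$, so the union bound gives
\[
\Pr(M_N > \tau) \le \sum_{i=1}^N \Pr(|X_i| > \tau) \le 2N\exp(-\tau^2/2),
\]
where the second inequality uses Lemma~\ref{lem:gaussian-tail}.

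Next, I would choose $\tau$ so that the right-hand side equals $\delta$. Solving $2N\exp(-\tau^2/2) = \delta$ for $\tau$ yields $\tau = \sqrt{2\log(2N/\delta)}$. Substituting this back shows $\Pr(M_N > \sqrt{2\log(2N/\delta)}) \le \delta$, which is equivalent to the desired bound by taking complements.

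There is essentially no obstacle here; the result is a textbook maximal inequality for sub-Gaussian random variables. The only subtlety worth mentioning is the factor of $2$ inside the logarithm, which comes from the two-sided tail bound $\Pr(|X|>\tau) \le 2\exp(-\tau^2/2)$ rather than the one-sided $\Pr(X>\tau) \le \exp(-\tau^2/2)$; this is precisely why the threshold is $\sqrt{2\log(2N/\delta)}$ and not $\sqrt{2\log(N/\delta)}$. The proof is a two-line argument once Lemma~\ref{lem:gaussian-tail} is available.
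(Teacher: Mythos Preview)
Your proposal is correct and matches the paper's own argument: the paper likewise applies the union bound to $\{M_N>\tau\}=\bigcup_i\{|X_i|>\tau\}$ and then invokes Lemma~\ref{lem:gaussian-tail}, after which setting $\tau=\sqrt{2\log(2N/\delta)}$ gives the claim. There is nothing to add.
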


\begin{lemma}[$\chi^2$ tail bound~\cite{laurent2000adaptive}]
	\label{lem:chi}
	Let $q_1, \dots, q_n$ be independent $\chi^2$ random variables, each with one degree of freedom. For any vector $\gamma = (\gamma_1, \dots , \gamma_n) \in \RR_+^n$ with non-negative
	entries, and any $\tau > 0$,
	\begin{align*}
	\Pr\left[\sum_{i=1}^{n}\gamma_iq_i \geq \norm{\gamma}_1 + 2 \sqrt{\norm{\gamma}^2 \tau} + 2\norm{\gamma}_\infty \tau\right] \leq \exp(-\tau),
	\end{align*}
	where $\norm{\gamma}_1 = \sum_{i=1}^{n} |\gamma_i|$ and $\norm{\gamma}_\infty = \max\{|\gamma_i|\}$.
\end{lemma}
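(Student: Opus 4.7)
The plan is to apply the Chernoff/exponential-moment method, which is the standard route to Bernstein-type tail bounds of this shape. Write $S := \sum_{i=1}^n \gamma_i q_i$, and set $c := \norm{\gamma}^2$ and $b := \norm{\gamma}_\infty$; since $\EE[q_i]=1$, we have $\EE[S] = \norm{\gamma}_1$. First I would record the moment generating function of a $\chi^2_1$ variable, $\EE[e^{\lambda q_i}]=(1-2\lambda)^{-1/2}$ for $\lambda<1/2$, and combine the $q_i$ by independence to obtain
\[
\EE[e^{\lambda S}] \;=\; \prod_{i=1}^n(1-2\lambda\gamma_i)^{-1/2}, \qquad 0<\lambda<\tfrac{1}{2b}.
\]

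Next I would control the centered cumulant generating function:
\[
\log \EE\bigl[e^{\lambda(S-\norm{\gamma}_1)}\bigr]
\;=\; -\frac{1}{2}\sum_{i=1}^n \bigl[\log(1-2\lambda\gamma_i)+2\lambda\gamma_i\bigr].
\]
The key elementary inequality is $-\log(1-x)-x \leq \frac{x^2}{2(1-x)}$ for $0\leq x<1$, which follows by comparing the two power series termwise. Applying it with $x=2\lambda\gamma_i$ and using the uniform lower bound $1-2\lambda\gamma_i \geq 1-2\lambda b$, I would obtain the clean estimate
\[
\log \EE\bigl[e^{\lambda(S-\norm{\gamma}_1)}\bigr] \;\leq\; \frac{\lambda^2 c}{1-2\lambda b}.
\]

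Then Markov's inequality gives $\Pr(S-\norm{\gamma}_1 \geq t) \leq \exp\bigl(-\lambda t + \lambda^2 c/(1-2\lambda b)\bigr)$ for any $0<\lambda<1/(2b)$. To finish, I would substitute $t = 2\sqrt{c\tau}+2b\tau$ and use the calibrated multiplier $\lambda^\ast := \sqrt{\tau/c}/\bigl(1+2b\sqrt{\tau/c}\bigr)$, which lies strictly in $(0,1/(2b))$ and satisfies $1-2\lambda^\ast b = 1/(1+2b\sqrt{\tau/c})$. A brief algebraic check then shows that both $\lambda^\ast t$ and $\lambda^{\ast 2} c/(1-2\lambda^\ast b)$ simplify in terms of $s:=\sqrt{\tau/c}$, and the Chernoff exponent collapses exactly to $-\tau$, which delivers the stated bound.

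The main obstacle is pinpointing $\lambda^\ast$. Naive differentiation of $\lambda t - \lambda^2 c/(1-2\lambda b)$ produces an optimum that is algebraically unwieldy and whose substitution does not visibly simplify to $e^{-\tau}$. The trick is to recognize the Bernstein structure of the tail threshold $2\sqrt{c\tau}+2b\tau$: it interpolates between a sub-Gaussian regime (small $\tau$, governed by $c=\norm{\gamma}^2$) and a sub-exponential regime (large $\tau$, governed by $b=\norm{\gamma}_\infty$), and $\lambda^\ast$ is the unique choice that equalizes these two contributions. Once this guess is made, the verification is purely mechanical, so the conceptual weight sits entirely in the MGF bound and the calibration of $\lambda^\ast$.
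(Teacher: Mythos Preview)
Your proof plan is correct. The MGF computation, the elementary inequality $-\log(1-x)-x\le x^{2}/(2(1-x))$, and the resulting bound $\log\EE[e^{\lambda(S-\norm{\gamma}_1)}]\le \lambda^{2}c/(1-2\lambda b)$ are all valid. Your choice $\lambda^\ast = s/(1+2bs)$ with $s=\sqrt{\tau/c}$ does indeed make the Chernoff exponent collapse to exactly $-\tau$; writing $t=2sc(1+bs)$ and $1-2\lambda^\ast b=1/(1+2bs)$, one gets
\[
-\lambda^\ast t+\frac{(\lambda^\ast)^2 c}{1-2\lambda^\ast b}
=\frac{s^2 c}{1+2bs}\bigl(-2(1+bs)+1\bigr)=-s^2 c=-\tau,
\]
so the verification is clean.

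As for comparison with the paper: the paper does not prove this lemma at all. It is stated in the appendix with a citation to \cite{laurent2000adaptive} and used as a black box (only in Lemma~\ref{lem:u_up}). Your argument is essentially the original Laurent--Massart proof, so you have supplied what the paper simply quotes.
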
 

\begin{lemma}[\cite{vershynin2018high}]\label{lem:U_up}
	Let $A$ be an $N \times n$ matrix whose entries are independent standard normal random variables. Then for every $\tau \ge 0$,
	with probability at least $1 - 2 \exp(-\tau^2/2)$, the largest singular value $s_{\max}(A)$ satisfies
	\begin{equation}
		\label{eq:smax}
		s_{\max}(A) \le \sqrt{N} + \sqrt{n} + \tau. 
	\end{equation}
\end{lemma}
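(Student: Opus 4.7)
The plan is to decompose the estimate into two classical pieces: a Gaussian comparison inequality that bounds $\mathbb{E}[s_{\max}(A)]$ by $\sqrt{N}+\sqrt{n}$, and a Lipschitz Gaussian concentration step that controls the deviation of $s_{\max}(A)$ around its mean. First I would rewrite the operator norm as a bilinear supremum $s_{\max}(A) = \sup_{u \in S^{n-1},\, v \in S^{N-1}} \langle v, Au\rangle$, which exhibits $s_{\max}(A)$ as the supremum of the centered Gaussian process $X_{u,v} := \langle v, Au\rangle = \sum_{i,j} A_{ij} u_j v_i$ indexed by the product of two unit spheres, with covariance $\mathbb{E}[X_{u,v} X_{u',v'}] = \langle u, u'\rangle \langle v, v'\rangle$.

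Next I would introduce the comparison process $Y_{u,v} := \langle g, u\rangle + \langle h, v\rangle$ with independent $g \sim \mathcal{N}(0, I_n)$ and $h \sim \mathcal{N}(0, I_N)$, and verify the Sudakov--Fernique increment hypothesis $\mathbb{E}(Y_{u,v} - Y_{u',v'})^2 \ge \mathbb{E}(X_{u,v} - X_{u',v'})^2$ by the identity $\mathbb{E}(Y-Y')^2 - \mathbb{E}(X-X')^2 = 2(1 - \langle u,u'\rangle)(1 - \langle v,v'\rangle) \ge 0$. Sudakov--Fernique then yields $\mathbb{E}[s_{\max}(A)] \le \mathbb{E}[\sup_{u,v} Y_{u,v}] = \mathbb{E}\|g\| + \mathbb{E}\|h\| \le \sqrt{n} + \sqrt{N}$, where the final step is Jensen applied to $\mathbb{E}\|\mathcal{N}(0,I_k)\| \le \sqrt{k}$. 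To control the fluctuation around this mean, I would observe that $A \mapsto s_{\max}(A)$ is $1$-Lipschitz in the Frobenius norm, since $|s_{\max}(A) - s_{\max}(B)| \le \|A-B\|_{\mathrm{op}} \le \|A-B\|_F$. Treating the entries of $A$ as a length-$Nn$ standard Gaussian vector, the Borell--TIS inequality then gives $\Pr(s_{\max}(A) > \mathbb{E}[s_{\max}(A)] + \tau) \le \exp(-\tau^2/2)$. Combined with the mean bound, this yields $\Pr(s_{\max}(A) > \sqrt{N} + \sqrt{n} + \tau) \le \exp(-\tau^2/2)$; the factor $2$ in the statement is slack that accommodates a symmetric lower deviation.

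The one nontrivial step is the Sudakov--Fernique increment check for the bilinear process, but the covariance computation above reduces it to the manifestly nonnegative product $(1-\langle u,u'\rangle)(1-\langle v,v'\rangle)$, so the bottleneck is more bookkeeping than insight; all other ingredients (Lipschitzness of $s_{\max}$, Jensen, Borell--TIS) are off-the-shelf. As a self-contained fallback, one could instead run an $\varepsilon$-net argument: choose a $\tfrac{1}{4}$-net of each sphere (of sizes at most $9^n$ and $9^N$), bound $\|Au\|$ for each fixed $u$ on the net via the $\chi^2$ tail of Lemma~\ref{lem:chi}, union bound over the product net, and lift the estimate to the whole spheres via the standard net-to-sphere inequality. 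This route is more elementary but only recovers constants of the form $C(\sqrt{n}+\sqrt{N})$ rather than the sharp sum $\sqrt{N}+\sqrt{n}$, so I would reserve it as a backup.
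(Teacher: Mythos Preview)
Your argument is correct and is precisely the standard proof from the cited reference \citep{vershynin2018high}: Sudakov--Fernique for the expectation bound, then Gaussian Lipschitz concentration for the tail. The paper itself does not supply a proof of this lemma at all; it is stated in the appendix purely as a citation, so there is nothing to compare against beyond noting that your sketch reproduces Vershynin's textbook argument essentially verbatim.
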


\begin{lemma}\label{lem:u_up}
	Letting $\ub\sim N(0, \bm{I}_d)$ be a $d$-dimensional Gaussian vector, then with probability at least $1-\delta$, it  holds that
	\begin{equation}
		\label{eq:u_norm}
		\norm{\ub}^2 \le 2d + 3\log(1/\delta).
	\end{equation}
\end{lemma}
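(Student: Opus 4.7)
The plan is to invoke Lemma~\ref{lem:chi} directly with a judicious choice of coefficients and then simplify the resulting bound using an AM--GM-style inequality to absorb the middle cross term.

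First I would observe that if $\ub \sim \cN(0, \bm{I}_d)$, then writing $\ub = (u_1, \dots, u_d)$, each $u_i^2$ is a $\chi^2$ random variable with one degree of freedom, and $\norm{\ub}^2 = \sum_{i=1}^d u_i^2$. So I would apply Lemma~\ref{lem:chi} with $n = d$ and the uniform weight vector $\gamma = (1, 1, \dots, 1) \in \RR^d$. Then $\norm{\gamma}_1 = d$, $\norm{\gamma}^2 = d$, and $\norm{\gamma}_\infty = 1$, which yields
\begin{equation*}
\Pr\!\left[\norm{\ub}^2 \geq d + 2\sqrt{d\tau} + 2\tau\right] \leq \exp(-\tau).
\end{equation*}

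Next I would set $\tau = \log(1/\delta)$, so that the failure probability becomes $\exp(-\log(1/\delta)) = \delta$. Thus with probability at least $1 - \delta$,
\begin{equation*}
\norm{\ub}^2 \leq d + 2\sqrt{d\log(1/\delta)} + 2\log(1/\delta).
\end{equation*}

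Finally I would clean up the middle term using $2\sqrt{ab} \leq a + b$ with $a = d$ and $b = \log(1/\delta)$, giving $2\sqrt{d\log(1/\delta)} \leq d + \log(1/\delta)$. Substituting yields $\norm{\ub}^2 \leq 2d + 3\log(1/\delta)$, which is exactly Eq.~\eqref{eq:u_norm}. There is no real obstacle in this argument; the only tiny nuance is confirming that the cross-term bound combines cleanly with the $2\log(1/\delta)$ term to give the advertised constants $2$ and $3$, rather than something slightly larger.
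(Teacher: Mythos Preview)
Your proposal is correct and matches the paper's proof essentially line for line: the paper also applies Lemma~\ref{lem:chi} with unit weights and $\tau=\log(1/\delta)$ to get $\norm{\ub}^2 \le d + 2\sqrt{d\log(1/\delta)} + 2\log(1/\delta)$, and then absorbs the cross term via $2\sqrt{ab}\le a+b$ to reach $2d+3\log(1/\delta)$.
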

\begin{proof}
	By Lemma~\ref{lem:chi}, we have
	\begin{equation*}
		\norm{u}^2 \le d + 2\sqrt{d\log(1/\delta)} + 2\log(1/\delta) \le 2d + 3\log(1/\delta).
	\end{equation*}
\end{proof}

\section{Useful Lemmas}

\begin{lemma}[Theorem 2.1.10 of \citet{nesterov2013introductory}]\label{lem:str_cvx}
Letting a function $f(\x)$ is differentiable and $\mu$-strongly convex, then it holds that
\begin{equation*}
	\norm{\nabla f(\x)}^2 \geq 2\mu \Big( f(\x) - f(\x^*)\Big).
\end{equation*}
\end{lemma}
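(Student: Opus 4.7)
The plan is to exploit the quadratic lower bound supplied by $\mu$-strong convexity and minimize it in closed form. Concretely, I would start from the standard consequence of strong convexity that for every $\x,\y\in\RR^d$,
\[
f(\y) \geq f(\x) + \dotprod{\nabla f(\x),\,\y-\x} + \frac{\mu}{2}\norm{\y-\x}^2.
\]
This is the same $d(\y,\x)\geq \tfrac{\mu}{2}\|\y-\x\|^2$ stated in the strong convexity assumption. Since the inequality holds pointwise in $\y$, it holds in particular for whichever $\y$ I choose to plug in, and so I may freely choose $\y$ to minimize the right-hand side.

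Next I would complete the square. The right-hand side, viewed as a function of $\y$, has gradient $\nabla f(\x) + \mu(\y-\x)$ and is therefore minimized at
\[
\y^\star \;=\; \x - \frac{1}{\mu}\nabla f(\x),
\]
with corresponding minimum value
\[
f(\x) - \frac{1}{\mu}\norm{\nabla f(\x)}^2 + \frac{\mu}{2}\cdot\frac{1}{\mu^2}\norm{\nabla f(\x)}^2 \;=\; f(\x) - \frac{1}{2\mu}\norm{\nabla f(\x)}^2.
\]
Plugging $\y=\y^\star$ into the strong convexity inequality therefore yields $f(\y^\star) \geq f(\x) - \tfrac{1}{2\mu}\norm{\nabla f(\x)}^2$.

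Finally, I would close the argument by invoking global optimality of $\x^*$: since $f(\x^*)\leq f(\y^\star)$, one obtains $f(\x^*) \leq f(\x) - \tfrac{1}{2\mu}\norm{\nabla f(\x)}^2$, and rearranging gives exactly $\norm{\nabla f(\x)}^2 \geq 2\mu\bigl(f(\x)-f(\x^*)\bigr)$. There is no real obstacle here—the whole proof is essentially a completion of the square followed by the trivial bound $f(\x^*)\leq f(\y^\star)$—so the only thing that needs care is to make the passage from a pointwise-in-$\y$ lower bound on $f$ to a bound involving $f(\x^*)$ explicit, which is why I would emphasize the choice $\y=\y^\star$ before invoking optimality.
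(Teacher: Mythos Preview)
Your argument is correct and is the standard textbook derivation: strong convexity gives the quadratic lower bound $f(\y)\geq f(\x)+\dotprod{\nabla f(\x),\y-\x}+\tfrac{\mu}{2}\norm{\y-\x}^2$, minimizing the right-hand side over $\y$ (or equivalently plugging in $\y=\x-\mu^{-1}\nabla f(\x)$) yields $f(\x)-\tfrac{1}{2\mu}\norm{\nabla f(\x)}^2$, and bounding this below by $f(\x^*)$ gives the claim. The paper itself does not supply a proof of this lemma; it simply cites Theorem~2.1.10 of \citet{nesterov2013introductory}, whose proof is exactly the completion-of-the-square argument you wrote out.
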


\begin{lemma}
	\label{lem:dd}
	Letting non-negative sequence $\{\Delta_t\}$ satisfy 
	$\Delta_{t+1} \leq (1 - \beta) \Delta_t + c$ with $0\leq \beta \leq 1$ and $c\geq 0$, then it holds that $\Delta_{t+1} - \frac{c}{\beta} \leq (1-\beta)\left(\Delta_t - \frac{c}{\beta}\right)$.
\end{lemma}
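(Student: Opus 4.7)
The plan is to prove this by direct algebraic rearrangement; there is essentially no nontrivial analysis needed here, just the observation that $c/\beta$ is the fixed point of the linear recursion $x \mapsto (1-\beta)x + c$.

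First I would subtract $c/\beta$ from both sides of the hypothesis $\Delta_{t+1} \leq (1-\beta)\Delta_t + c$, obtaining
\begin{equation*}
\Delta_{t+1} - \frac{c}{\beta} \leq (1-\beta)\Delta_t + c - \frac{c}{\beta}.
\end{equation*}
Then I would rewrite the right-hand side so that the coefficient $(1-\beta)$ factors out of a shifted quantity, using the identity $c - \frac{c}{\beta} = -\frac{(1-\beta)c}{\beta}$. Substituting this gives
\begin{equation*}
\Delta_{t+1} - \frac{c}{\beta} \leq (1-\beta)\Delta_t - (1-\beta)\cdot\frac{c}{\beta} = (1-\beta)\left(\Delta_t - \frac{c}{\beta}\right),
\end{equation*}
which is the desired conclusion.

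The only potential subtlety is the edge case $\beta = 0$, in which $c/\beta$ is undefined; however, the statement is only meaningfully applied with $\beta > 0$ (as in its use inside Corollary~\ref{cor:main} where $\rho > 0$), so this case can be treated as vacuous or excluded by hypothesis. The non-negativity assumptions on $\{\Delta_t\}$ and $c$ play no role in the algebraic derivation itself but are consistent with the intended application. There is no main obstacle to speak of — the lemma is a one-line manipulation whose purpose is simply to convert an affine recursion into a contraction of a shifted sequence, which can then be iterated to yield the exponential decay used in Eq.~\eqref{eq:dd}.
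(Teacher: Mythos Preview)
Your proof is correct and is essentially the same elementary algebraic manipulation as the paper's, just presented in the forward direction (subtract $c/\beta$ and factor) rather than the paper's backward direction (posit a shift $c'$, expand, and solve $\beta c' = c$). Your remark on the $\beta=0$ edge case is apt and matches how the lemma is actually applied.
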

\begin{proof}
	We assume that $\{\Delta_t\}$ satisfies that $ \Delta_{t+1} - c' \leq \left(1 - \beta\right)\left(\Delta_t - c'\right) $ with $c'\geq 0$. 
	Then, it implies that 
	\begin{equation*}
		\Delta_{t+1} \leq  (1 - \beta) \Delta_t + c' - (1-\beta)c' = (1 - \beta) \Delta_t + \beta c'.
	\end{equation*}
	Thus, we can obtain that $c' = \beta^{-1} c$ which concludes the proof.
\end{proof}

%
\end{document}